\newtheorem{definition}{Definition}
\newtheorem{theorem}[definition]{Theorem}
\newtheorem{lemma}[definition]{Lemma}
\newtheorem{remark}[definition]{Remark}
\newtheorem{proposition}[definition]{Proposition}
\newtheorem{corollary}[definition]{Corollary}
\newcommand{\minus}[1]{{-#1}}
\newcommand{\NE}{{\mathcal{NE}}}
\newcommand{\G}{{\mathcal{G}}}
\newcommand{\argmax}{{\arg\max}}
\newcommand{\R}{\mathbb{R}}
\newcommand{\zero}{\mathbf{0}}
\newcommand{\one}{\mathbf{1}}
\newcommand{\I}{\mathbf{I}}
\renewcommand{\t}[1]{{#1}^{\rm T}}
\newcommand{\iverson}[1]{1[{#1}]}
\newcommand{\x}{\mathbf{x}}
\newcommand{\p}{\mathbf{p}}
\newcommand{\D}{\mathcal{D}}
\newcommand{\W}{\mathbf{W}}
\newcommand{\w}[1]{\mathbf{w}_{{#1},{\minus{#1}}}}
\renewcommand{\b}{\mathbf{b}}
\renewcommand{\v}{\mathbf{v}}
\newcommand{\y}{\mathbf{y}}
\newcommand{\si}[1]{^{(#1)}}
\renewcommand{\O}{\mathcal{O}}
\newcommand{\E}{\mathbb{E}}
\renewcommand{\P}{\mathbb{P}}
\newcommand{\PD}{\mathcal{P}}
\newcommand{\DD}{\mathcal{Q}}
\renewcommand{\exp}[1]{e^{#1}}
\newcommand{\greekbf}[1]{\text{\boldmath $#1$}}
\newcommand{\ellh}{\widehat{\ell}}
\newcommand{\Lh}{\widehat{\mathcal{L}}}
\newcommand{\Gh}{\widehat{\G}}
\newcommand{\qh}{\widehat{q}}
\newcommand{\pih}{\widehat{\pi}}
\newcommand{\thetah}{\widehat{\theta}}
\newcommand{\Lb}{\bar{\mathcal{L}}}
\newcommand{\Gb}{\bar{\G}}
\newcommand{\qb}{\bar{q}}
\newcommand{\pib}{\bar{\pi}}
\newcommand{\thetab}{\bar{\theta}}
\newcommand{\qt}{\widetilde{q}}
\newcommand{\N}{\mathcal{N}}
\newcommand{\A}{\mathcal{A}}
\newcommand{\Z}{\mathcal{Z}}
\newcommand{\ph}{\widehat{p}}
\newcommand{\HS}{\mathcal{H}}
\newcommand{\PS}{\Upsilon}
\newcommand{\hns}{\hspace{-0.025in}}
\newcommand{\q}{\mathbf{q}}
\newcommand{\z}{\mathbf{z}}
\newcommand{\captionx}[1]{\caption{\footnotesize{#1}}} 
\newcommand{\hl}[1]{\textit{\textbf{#1}}}
\renewcommand{\cite}[1]{\somethingunexistant}
\begin{document}

\title{\textbf{Learning the Structure and Parameters of Large-Population Graphical Games from Behavioral Data}}

\author{Jean Honorio\\
Computer Science and Artificial Intelligence Laboratory\\
Massachusetts Institute of Technology\\ 
Cambridge, MA 02139, USA\\
\texttt{jhonorio@csail.mit.edu}\\ \\
Luis Ortiz\\
Department of Computer Science\\
Stony Brook University\\
Stony Brook, NY 11794-4400, USA\\
\texttt{leortiz@cs.stonybrook.edu}}

\date{\vspace{-0.2in}}

\maketitle

\begin{abstract}%
We consider learning, from \emph{strictly} behavioral data, the
structure and parameters of \emph{linear influence games (LIGs)}, a
class of parametric graphical games introduced
by~\citet{Irfan13}. LIGs facilitate \emph{causal strategic inference (CSI)}:
Making inferences from causal interventions on stable behavior in
strategic settings. Applications include the identification of the
most influential individuals in large (social) networks. Such tasks
can also support policy-making analysis. Motivated by the computational
work on LIGs, we cast the learning problem as maximum-likelihood
estimation (MLE) of a generative model defined by \emph{pure-strategy
  Nash equilibria (PSNE)}. Our simple formulation uncovers the
fundamental interplay between goodness-of-fit and model complexity:
good models capture equilibrium behavior within the data while
controlling the true number of equilibria, including those unobserved.
We provide a generalization bound establishing the sample complexity
for MLE in our framework. We propose several algorithms including
\emph{convex loss minimization (CLM)} and sigmoidal approximations.
We prove that the number of exact PSNE in LIGs is small, with high probability; thus, CLM is sound.  We illustrate our approach on synthetic data and real-world U.S. congressional voting records. We briefly discuss our learning framework's generality and potential applicability to general graphical games.
\end{abstract}

\section{Introduction} \label{SecIntro}

Game theory has become a central tool for modeling multi-agent
  systems in AI.
Non-cooperative game theory has been considered as the appropriate
mathematical framework in which to formally study
  \emph{strategic} behavior in multi-agent scenarios.\footnote{See, e.g., the
  survey of~\citet{shoham08} and the books of~\citet{nisan07} and~\citet{shoham09} for more information.} The core
  solution concept of \emph{Nash equilibrium (NE)}~\citep{nash51} serves a descriptive role of the
  stable outcome of the overall behavior of systems involving
  self-interested individuals interacting strategically with each
  other in distributed settings for which no
  direct
  global control is possible. NE is also often used in
  predictive roles as the basis for what one might call \emph{causal
    strategic inference}, i.e., inferring the results of
  causal interventions on stable actions/behavior/outcomes in
  strategic settings (See, e.g.,
  \citealt{ballesteretal04,ballesteretal06,healandkunreuther03,healandkunreuther06,healandkunreuther07,kunreutherandmichel-kerjan07,ortizandkearns02,kearns05,Irfan13},
  and the references therein). Needless to say, the computation and analysis of
  NE in games is of significant interest to the computational
  game-theory community within AI.

The introduction of compact representations to game
  theory over the last decade have extended computational/algorithmic game theory's potential
  for large-scale, practical applications often encountered in the
  real-world. For the most part, such game model representations are analogous to probabilistic
  graphical models widely used in machine learning and
  AI.\footnote{The fundamental property such compact representation
    of games exploit is that of \emph{conditional
    independence}: each player's
    payoff function values are determined by the actions of the player and
    those of the player's neighbors only, and thus are \emph{conditionally
    (payoff) independent} of the
    actions of the non-neighboring players, \emph{given} the action of
    the neighboring players.} 
  Introduced within the AI community about
  a decade ago, \emph{graphical games}~\citep{kearns01} constitute an
  example of one of the first and arguably one of the most influential graphical models
  for game theory.\footnote{Other game-theoretic graphical models
    include \emph{game networks}~\citep{lamura00}, \emph{multi-agent influence diagrams
      (MAIDs)}~\citep{koller03}, and \emph{action-graph games}~\citep{AGG-full}.}

There has been considerable progress on problems of \emph{computing}
classical equilibrium solution concepts such as NE and
\emph{correlated equilibria (CE)}~\citep{aumann74} in graphical games (see, e.g., \citealt{kearns01,vickreyandkoller02,ortizandkearns02,blumetal06,kakadeetal03,papadimitriou08,jiangandleytonbrown11} and the references therein).
Indeed, graphical games played a prominent role in establishing the computational complexity of computing NE in general normal-form games (see, e.g., \citealt{daskalakisetal09} and the references therein).

An example of a recent computational application of
  non-cooperative game-theoretic graphical modeling and \emph{causal strategic inference (CSI)} that
  motivates the current paper is the work of
  \citet{Irfan13}.
  They proposed a new approach to the
  study of influence and the identification of the ``most
  influential'' individuals (or nodes)  in large (social)
  networks. Their approach is strictly game-theoretic in the sense
  that it relies on non-cooperative game theory and the central
 concept of \emph{pure-strategy Nash equilibria (PSNE)}\footnote{In
   this paper, because we concern ourselves primarily with PSNE,
   whenever we use the term ``equilibrium'' or ``equilibria'' without qualification, we
   mean PSNE.}
 as an 
 approximate \emph{predictor} of \emph{stable behavior} in
 \emph{strategic} settings, and, unlike other models of behavior in
 mathematical sociology,\footnote{Some of these models have recently
   gained interest and have been studied within computer science, specially those related to \emph{diffusion or
 contagion processes} (see, e.g.,
\citealt{granovetter78,morris00,domingos01,domingos05,even-dar07}).} it is
not interested and thus avoids explicit modeling of the complex dynamics by
 which such stable outcomes could have arisen or could be
 achieved. Instead, it concerns itself with the ``bottom-line''
 end-state stable outcomes (or steady state behavior). Hence, the proposed approach provides an alternative to models
  based on the diffusion of behavior through a social network
  (See~\citealt{kleinberg07} for an introduction and discussion targeted
  to computer scientists, and further references).

\emph{The underlying assumption for most work in computational game
  theory that deals with algorithms for computing equilibrium concepts
  is that the games under consideration
are already available, or have been ``hand-designed'' by the
analyst.} While this may be possible for systems involving a handful of
players, it is in general impossible in systems with at least tens of agent entities, if
not more, as we are interested in this paper.\footnote{Of course,
  modeling and hand-crafting games for systems with many agents may be possible if the system has
  particular structure one could exploit. To give an example, this
  would be analogous to how one
  can exploit the probabilistic structure
  of HMMs to deal with long stochastic processes
  in a representationally succinct and computationally tractable
  way. Yet, we believe it is fair to say that such systems are
  largely/likely the
  exception in real-world settings in practice.} For instance, in their paper,~\citet{Irfan13} propose a class of
games, called \emph{influence games}. In
particular, they concentrate on \emph{linear influence games (LIGs)},
and, as briefly mentioned above, study  a variety of
\emph{computational} problems resulting from their approach,
\emph{assuming such games are given as input}.

Research in computational game theory has paid relatively little
attention to the problem of \emph{learning} (both the structure and
parameters of) graphical games from data.
Addressing this problem is essential to the development, potential use and success of game-theoretic models in practical applications.
Indeed, we are beginning to see an increase in the availability of data collected from processes that are the result of deliberate actions of agents in complex system.
A lot of this data results from the interaction of a large number of
individuals, being not only people (i.e., individual human
decision-makers), but also companies, governments, groups or engineered autonomous systems (e.g., autonomous trading agents), for which any form of global control is usually weak.
The Internet is currently a major source of such data, and the smart grid, with its trumpeted ability to allow individual customers to install autonomous control devices and systems for electricity demand, will likely be another one in the near future.

In \emph{this} paper,
we investigate in considerable technical depth the problem of \emph{learning
LIGs from strictly behavioral data}: We do not
assume the availability of utility, payoff or cost information in the
data; the problem is \emph{precisely} to infer that information from just
the joint behavior collected in the data, \emph{up to the degree needed to
explain the joint behavior itself}.
\emph{We expect that, in most cases, the parameters quantifying a utility function or best-response condition are unavailable and hard to determine in real-world settings.}
The availability of data resulting from the observation of an
individual \emph{public behavior} is arguably a weaker assumption than
the availability of individual \emph{utility} observations, which are
often \emph{private}. In addition, \emph{we do not
assume prior knowledge of the conditional payoff/utility independence structure as
represented by the game graph.}

Motivated by the work of~\citet{Irfan13} on a
  strictly non-cooperative game-theoretic approach to influence and
  strategic behavior in networks, we present a formal
framework and design algorithms for learning the structure
and parameters of LIGs with a \emph{large} number of players.
We concentrate on data about what one might call ``the bottom line:''
i.e., data about``end-states'', ``steady-states'' or final behavior as
represented by \emph{possibly noisy} samples of joint
actions/pure-strategies from stable outcomes, which we assume come from
a \emph{hidden} underlying game. Thus, we do not use, consider or
assume available any \emph{temporal} data about the detailed
behavioral \emph{dynamics}. In fact, the data we consider does not
contain the dynamics that might have possibly led to the potentially
stable joint-action outcome! Since scalability is one of our main
goals, we aim to propose methods that are polynomial-time in the
number of players.

Given that LIGs belong to the class of 2-action
graphical games~\citep{kearns01} with \emph{parametric} payoff
functions, we first needed to deal with the relative dearth of work on
the broader problem of learning \emph{general graphical games} from
purely behavioral data. Hence, in addressing this problem, while inspired by the computational
approach of~\citet{Irfan13}, the learning problem formulation we propose
is in principle applicable to arbitrary games (although, again, the
emphasis is on the PSNE of such games). In particular, we introduce a
simple statistical generative mixture model, built ``on top of'' the
game-theoretic model,
with the only objective being to capture noise in the data.
Despite the simplicity of the generative
model, we are able to learn games from U.S. congressional voting
records, which we use as a source of real-world behavioral data, that,
as we will illustrate,
seem to capture interesting, non-trivial aspects of the U.S. congress.
While such models learned from real-world data are impossible to
validate, we argue that there exists a considerable amount of anecdotal evidence
for such aspects as captured by the models we
learned. Figure~\ref{Congress110} provides a brief
illustration. (Should there be further need for
clarification as to the why we present this figure,
please see Footnote~\ref{foot:figCong110}.)

As a final remark, given that LIGs constitute a non-trivial sub-class of
parametric graphical games, we view our work as a step in the direction of addressing the broader
problem of learning
general graphical games with a large number of players from strictly behavioral data.
We also hope our work helps to continue to bring and increase
attention from the machine-learning community to the problem of
inferring games from behavioral data (in which we attempt to \emph{learn a game} that
would ``rationalize'' players' observed behavior).\footnote{This is a type of problem arising from
game theory and economics that is different from the
problem of learning \emph{in} games (in which the focus is the study
of how individual players \emph{learn to play} a game by a sequence of
repeated interactions), a more matured and perhaps better known
problem within machine learning (see, e.g., \citealt{fudenberg99}).}

\begin{figure}
\begin{center}
\includegraphics[width=1\linewidth,trim=0 30 0 0,clip]{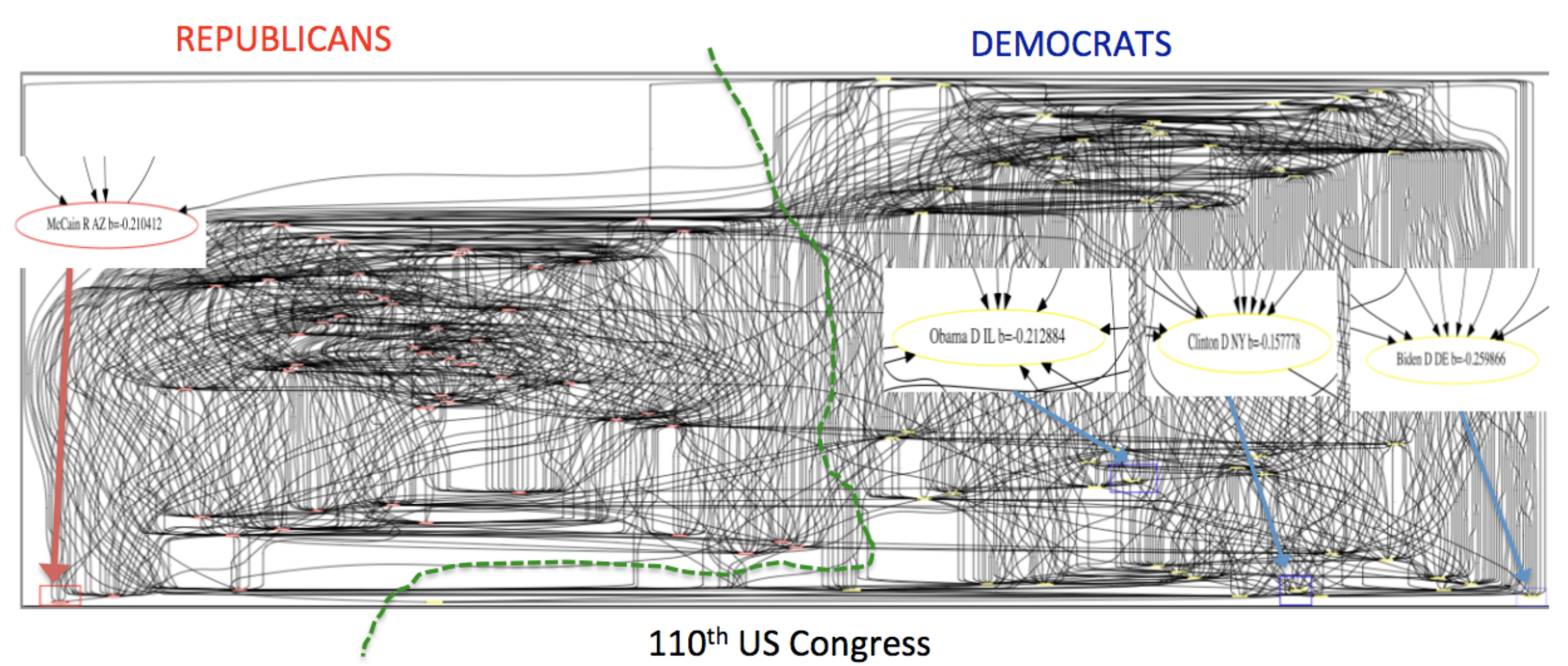}
\end{center}
\label{Congress110}
\vspace{-0.2in}
\captionx{
{\bf 110th US Congress's LIG (January 3, 2007-09):}
We provide an illustration of the application of our approach to real congressional voting data.
\citet{Irfan13} use such games to address a variety of computational
problems, including the identification of \emph{most influential}
senators. (We refer the reader to their paper for further details.)
We show the graph connectivity of a game learned by independent
$\ell_1$-regularized logistic regression (see
Section~\ref{SubSecCLM}). 
The reader should focus on the overall characteristics of the graph
and not the details of the connectivity or the actual ``influence'' weights between senators.
We highlight some particularly interesting characteristics consistent with anecdotal evidence.
First, senators are more likely to be influenced by members of the same party than by members of the opposite party (the dashed green line denotes the separation between the parties).
Republicans were ``more strongly united'' (tighter connectivity) than Democrats at the time.
Second, the current US Vice President Biden (Dem./Delaware) and McCain (Rep./Arizona) are displayed at the ``extreme of each party'' (Biden at the bottom-right corner, McCain at the bottom-left) eliciting their opposite ideologies.
Third, note that Biden, McCain, the current US President Obama (Dem./Illinois) and US Secretary of State Hillary Clinton (Dem./New York) have very few outgoing arcs; e.g., Obama only directly influences Feingold (Dem./Wisconsin), a prominent senior member with strongly liberal stands.
One may wonder why do such prominent senators seem to have so little direct influence on others?
A possible explanation is that US President Bush was about to complete his second term (the maximum allowed).
Both parties had \emph{very long} presidential primaries.
All those senators contended for the presidential candidacy within their parties.
Hence, one may posit that those senators were focusing on running their campaigns and that their influence in the \emph{day-to-day} business of congress was channeled through other prominent senior members of their parties.
}
\vspace{-0.1in}
\end{figure}

\subsection{A Framework for Learning Games: Desiderata} \label{sec:desiderate}

The following list summarizes the discussion above and guides our
choices in our pursuit of a machine-learning framework
for learning game-theoretic graphical models from strictly behavioral
data.
\begin{itemize}
\setlength{\itemsep}{0pt}\setlength{\parskip}{0pt} 
\item  The learning algorithm 
\begin{itemize}
\setlength{\itemsep}{0pt}\setlength{\parskip}{0pt} 
\item must output an LIG (which is a special
  type of graphical game); and
\item should be practical and tractably deal
  with a \emph{large} number of players (typically
  in the hundreds, and certainly at least $4$).
\end{itemize}
\item The learned model objective is the ``bottom line'' in the sense
  that the basis for its
  evaluation is the prediction of \emph{end-state} (or
  steady-state) joint decision-making behavior, and \emph{not} the temporal
  behavioral \emph{dynamics} that might have lead to end-state or the stable steady-state
  joint behavior.\footnote{Note that we are in no way precluding
    dynamic models as a way to end-state prediction. But 
  there is no inherent need to make any \emph{explicit}
  attempt or effort
  to model or predict the temporal
  behavioral \emph{dynamics} that might have lead to end-state or the stable steady-state
  joint behavior, including pre-play ``cheap talk,'' which are often overly
  complex processes. (See Appendix~\ref{App:EndState} for
    further discussion.)}
\item The learning framework 
\begin{itemize}
\setlength{\itemsep}{0pt}\setlength{\parskip}{0pt} 
\item would \emph{only} have available strictly behavioral data
  on actual decisions/actions taken. It cannot require or use any kind of payoff-related information.
\item should be agnostic as to the type or nature of the
  decision-maker and does not assume each player is a \emph{single
    human}. Players can be
institutions or governments, or associated with the decision-making
process of a group of individuals representing, e.g., a company (or
sub-units, office sites within a company, etc.), a
nation state (like in the UN, NATO, etc.), or a voting district. In
other words, the
recorded behavioral actions of each player may really be a representative of larger
entities or groups of individuals, not necessarily a single human.
\item must provide computationally efficient learning algorithm with provable guarantees: worst-case polynomial running time in the number of players.
\item should be ``data efficient'' and
  provide provable guarantees on sample complexity (given  in terms of ``generalization'' bounds).
\end{itemize}
\end{itemize}

\subsection{Technical Contributions}

While our probabilistic model is inspired by the concept of equilibrium from game theory, our technical contributions are not in the field of game theory nor computational game theory.
Our technical contributions and the tools that we use are the ones in classical machine learning.

Our technical contributions include a novel generative model of behavioral data in Section \ref{SecPrelims} for general games.
Motivated by the LIGs and the computational game-theoretic framework
put forward by~\citet{Irfan13}, we formally define ``identifiability'' and ``triviality'' within the context of
non-cooperative graphical
games based on PSNE as the solution concept for stable outcomes in
large
strategic systems.
We provide conditions that ensure identifiability among non-trivial games.
We then present the maximum-likelihood estimation (MLE) problem for general (non-trivial identifiable) games.
In Section \ref{SecGeneralization}, we show a generalization bound for
the MLE problem as well as an upper bound of the
functional/strategic complexity (i.e., analogous to the``VC-dimension'' in
supervised learning) of LIGs.
In Section~\ref{SecAlgorithm}, we provide technical evidence
justifying the \emph{approximation} of the original problem by
maximizing the number of \emph{observed} equilibria in the data as suitable for a hypothesis-space of games with small \emph{true} number of equilibria.
We then present our convex loss minimization approach and a baseline
sigmoidal approximation for LIGs.
For completeness, we also present exhaustive search methods for both
general games as well as LIGs.
In Section \ref{SecSmallTPE}, we formally define the concept of \emph{absolute-indifference} of players and show that our convex loss minimization approach produces games in which all players are \emph{non-absolutely-indifferent}.
We provide a bound which shows that LIGs have small \emph{true} number of equilibria with high probability.

\section{Related Work} \label{SecRelatedWork}

We provide a brief summary overview of previous work on learning games
here, and delay 
discussion of the work presented below
until after we formally present our model; this will provide better
context and make ``comparing and contrasting'' easier for those
interested, without affecting expert readers who may want to get to
the technical aspects of the paper without much delay.

Table~\ref{Approaches} constitutes our best attempt at a simple
visualization to fairly present the differences and similarities of
previous approaches to modeling behavioral data within the
computational game-theory community in AI.

{
\renewcommand{\a}{$^{\rm a}$}
\renewcommand{\b}{$^{\rm b}$}
\renewcommand{\c}{$^{\rm c}$}
\renewcommand{\d}{$^{\rm d}$}
\newcommand{\e}{$^{\rm e}$}
\newcommand{\f}{$^{\rm f}$}
\newcommand{\g}{$^{\rm g}$}
\begin{table}
\begin{center}
\begin{footnotesize}
\begin{tabular}{@{}l@{\hspace{0.05in}}c@{\hspace{0.05in}}c@{\hspace{0.05in}}c@{\hspace{0.05in}}c@{\hspace{0.05in}}c@{}c@{}c@{}r@{}}
  \hline
  \textbf{Reference} & \textbf{Class} & \textbf{Needs} & \textbf{Learns} & \textbf{Learns} & \textbf{Guarant.} & \textbf{Equil.} & \textbf{Dyn.} & \textbf{Num.} \\
  \textbf{} & \textbf{} & \textbf{Payoff} & \textbf{Param.} & \textbf{Struct.} & \textbf{} & \textbf{Concept} & \textbf{} & \textbf{Agents} \\
  \hline
  \citet{Wright10}      & NF  & Y   & N\a & -   & N   & QRE   & N   & 2      \\
  \citet{Wright12}      & NF  & Y   & N\a & -   & N   & QRE   & N   & 2      \\
  \citet{Gao10}         & NF  & Y   & Y   & -    & N   & QRE   & N   & 2      \\
  \citet{Vorobeychik07} & NF  & Y   & Y   & -   & N   & MSNE    & N   & 2-5    \\
  \citet{Ficici08}      & NF  & Y   & Y   & -   & N   & MSNE    & N   & 10-200 \\
  \citet{Duong08}       & NGT   & Y   & N\a & N   & N   & -     & N   & 4,10     \\
  \citet{Duong10}       & NGT\b & Y   & N\c & N   & N   & -     & Y\d & 10     \\
  \citet{Duong12}       & NGT\b & Y   & N\c & Y\e & N   & -     & Y\d & 36     \\
  \citet{Duong09}       & GG   & Y   & Y   & Y\f & N   & PSNE   & N   & 2-13   \\
  \hline
  \citet{KearnsW08}     & NGT & N   & -   & -   & Y   & -     & Y   & 100    \\
  \citet{Ziebart10}     & NF  & N   & Y   & -   & N   & CE    & N   & 2-3    \\
  \citet{Waugh11}       & NF  & N   & Y   & -   & Y   & CE    & Y   & 7      \\
  \hl{Our approach}     & \hl{GG} & \hl N & \hl Y & \hl{Y\g} & \hl Y & \hl{PSNE} & \hl N & \hl{100\g}\\
  \hline
\end{tabular}
\end{footnotesize}
\end{center}
\vspace{-0.15in}
\captionx{{\bf Summary of approaches for learning models of behavior.} See
main text for a discussion.
For each method we show its model class (GG: graphical games, NF:
normal-form non-graphical games, NGT:
non-game-theoretic model); whether it needs
observed payoffs, learns \emph{utility} parameters, learns
\emph{graphical} structure or provides guarantees(e.g.,
generalization, sample complexity or PAC learnability); its equilibria
concept (PSNE: pure strategy or MSNE: mixed strategy Nash equilibria,
CE: correlated equilibria, QRE: quantal response equilibria), whether
it is dynamic (i.e., behavior predicted from past behavior); and the number of agents in the experimental validation.
Note that there are relatively fewer models that do not assume observed payoff; among them, our method is the only one that learns the structure of graphical games, furthermore, we provide guarantees and a polynomial-time algorithm.
\a Learns only the ``rationality parameter''.
\b A graphical game could in principle be extracted, after removing the temporal/dynamic part.
\c It learns parameters for the ``potential functions.''
\d If the dynamic part is kept, it is not a graphical game.
\e It performs greedy search by constraining the maximum degree.
\f It performs branch and bound.
\g It has polynomial time-complexity in the number of agents, thus it can scale to thousands.
}
\label{Approaches}
\end{table}
}

The research interest of previous work
  varies in what they intend to capture in terms of different aspects of behavior (e.g.,
  dynamics, probabilistic vs. strategic) or simply different settings/domains (i.e., modeling ``real human behavior,'' knowledge
  of achieved payoff or utility, etc.).

With the exception of~\citet{Ziebart10,Waugh11,KearnsW08}, previous
methods assume that the actions as well as corresponding payoffs (or
noisy samples from the true payoff function) are \emph{observed} in the data.
Our setting largely differs from~\citet{Ziebart10,KearnsW08} because
of their focus on system dynamics, in which future behavior is
predicted from a sequence of past behavior. \citet{KearnsW08} proposed
a learning-theory framework to model \emph{collective} behavior based on \emph{stochastic} models.

Our problem is clearly different from methods in \emph{quantal response
models}~\citep{mckelvey95,Wright10,Wright12}
and \emph{graphical multiagent models
  (GMMs)}~\citep{Duong08,Duong10} that assume \emph{known structure} and \emph{observed payoffs}.
\citet{Duong12} learns the structure of games that are \emph{not graphical}, i.e., the payoff depends on all other players.
Their approach also assumes \emph{observed} payoff and consider a \emph{dynamic} consensus scenario, where agents on a network attempt to reach a unanimous vote.
In analogy to voting, we do not assume the availability of the dynamics (i.e., the previous actions) that led to the \emph{final} vote.
They also use fixed information on the conditioning sets of neighbors
during their
search for graph structure. We also note that the work of~\citet{Vorobeychik07,Gao10,Ziebart10} present experimental validation mostly for 2 players only, 7 players in~\citet{Waugh11} and up to 13 players in~\citet{Duong09}.

In several cases in previous work, researchers define probabilistic
models using knowledge of the payoff functions explicitly (i.e., a
\emph{Gibbs distribution} with potentials that are functions of the players
payoffs, regrets, etc.) to model joint behavior (i.e., joint
pure-strategies); see, e.g.,~\citet{Duong08,Duong10,Duong12}, and to
some degree also~\citet{Wright10,Wright12}.  It should be clear to the reader that this is not
the same as our generative model, which is defined \emph{directly} on
the PSNE (or stable outcomes) of the game, which the players' payoffs
determine only \emph{indirectly}.

In contrast, in this paper, we assume that the joint actions are the
\emph{only} observable information and that both the game graph
structure and payoff functions are \emph{unknown, unobserved and unavailable}.
We present the first techniques for
learning the structure and parameters of a non-trivial class of large-population graphical games from joint actions only.
Furthermore, we present experimental validation in games of up to $100$ players.
Our convex loss minimization approach could potentially be applied to
larger problems since it has \emph{polynomial time} complexity in the number of players.

\subsection{On Learning Probabilistic Graphical Models} \label{sec:probmod}

There has been a significant amount of work on learning the structure of \emph{probabilistic} graphical models from data.
We mention only a few references that follow a maximum likelihood approach for Markov random fields~\citep{Lee06}, bounded tree-width distributions~\citep{Chow68,Srebro01}, Ising models~\citep{Wainwright06,Banerjee08,Hofling09}, Gaussian graphical models~\citep{Banerjee06}, Bayesian networks~\citep{Guo06,Schmidt07} and directed cyclic graphs~\citep{Schmidt09}.

Our approach learns the structure and parameters of games by maximum likelihood estimation on a related probabilistic model.
Our probabilistic model does not fit into any of the types described above.
Although a (directed) graphical game has a directed cyclic graph, there is a semantic difference with respect to graphical models.
Structure in a graphical model implies a factorization of the probabilistic model.
In a graphical game, the graph structure implies \emph{strategic} dependence between players, and has no immediate probabilistic implication.
Furthermore, our general model differs from~\citet{Schmidt09} since our generative model does not decompose as a multiplication of potential functions.

Finally, it is very important to note that our specific aim is to model \emph{behavioral data} that is \emph{strategic}
in nature.
Hence, our modeling and learning approach deviates from those
for \emph{probabilistic} graphical models which are of course better suited
for other types of data, mostly \emph{probabilistic} in nature (i.e.,
resulting from a \emph{fixed} underlying probability distribution). As
a consequence, it is also very important to keep
in mind that our work is not in competition with the work in
probabilistic graphical models, and is not meant to replace it
(except in the context of data sets collected from complex strategic
behavior just mentioned). Each approach has its own
aim, merits and pitfalls
in terms of the nature of data sets that each seeks to model.
We return to this point in Section~\ref{SecResults} (Experimental Results).

\subsection{On Linear Threshold Models and Econometrics} \label{SubSecThreshold}

\citet{Irfan13} introduced LIGs in the AI community, showed that such games are useful, and addressed a variety of computational problems, including the identification of \emph{most influential} senators.
The class of LIGs is related to the well-known
\emph{linear threshold model (LTM)} in
sociology~\citep{granovetter78}, recently very popular within the
social network and theoretical computer science
community~\citep{kleinberg07}.\footnote{\citet{Lopez-Pintado01112008}
  also provide an excellent summary of the various models in this area of
  mathematical social science.}
\citet{Irfan13} discusses linear threshold models in depth; we briefly discuss them here for self-containment.
LTMs are usually studied as the basis for some kind of diffusion process.
A typical problem is the identification of most influential individuals in a social network.
An LTM is not in itself a game-theoretic model and, in fact, Granovetter himself argues against this view in the context of the setting and the type of questions in which he was most interested~\citep{granovetter78}.
Our reading of the relevant literature suggests that subsequent work on LTMs has not taken a
strictly game-theoretic view either. The problem of
learning mathematical models of influence from behavioral data has just started
to receive attention. There has been a number of articles in the last couple of
years addressing the problem of learning the parameters of a variety of
\emph{diffusion} models of
influence~\citep{Saitoetal08,Saitoetal09,Saitoetal10,Goyaletal10,GomezRodriguezetal10,caoetal11}.\footnote{Often
learning consists of estimating the threshold parameter from
data given as temporal sequences from``traces'' or ``action logs.''
Sometimes the ``influence weights'' are estimated assuming a given
graph, and almost always the weights are
assumed \emph{positive} and estimated
as ``probabilities of influence.''
For example, \citet{Saitoetal10} considers a dynamic (continuous time) LTM that has
only positive influence weights and a randomly generated threshold
value. \citet{caoetal11} uses active learning to estimate the
threshold values of an LTM leading to a maximum spread of influence.}

Our model is also related to a particular model of \emph{discrete
  choice with social interactions} in econometrics (see,
e.g.~\citealt{brock_and_durlauf01}). The main difference is that we take
a strictly non-cooperative game-theoretic approach within the
classical ``static''/one-shot game framework and do not use a
\emph{random utility model}. We follow the approach of \citet{Irfan13} who takes a strictly non-cooperative game-theoretic approach within the classical ``static''/one-shot game framework, and thus we do not use a \emph{random utility model}.
In addition, we do not make the assumption of \emph{rational
  expectations}, which in the context of models of discrete choice
with social interactions essentially implies the assumption that all
players use \emph{exactly the same mixed strategy}.\footnote{A formal definition of ``rational expectations'' is beyond the scope of this
paper. We refer the reader to the early part of the article by~\citet{brock_and_durlauf01} where they
explain why assuming rational expectations leads to the conclusion
that all players use exactly the same mixed strategy. That is the
relevant part of that work to ours.}

\section{Background: Game Theory and Linear Influence Games} \label{SecBackground}

In classical game-theory (see, e.g.~\citealt{fudenbergandtirole91} for a textbook introduction), a \emph{normal-form game} is defined by a set of \emph{players} $V$ (e.g., we can let $V = \{1,\dots,n\}$ if there are $n$ players), and for each player $i$, a set of \emph{actions}, or \emph{pure-strategies} $A_i$, and a payoff function $u_i : \times_{j \in V} A_j \to \R$ mapping the joint actions of all the players, given by the Cartesian product $\A \equiv \times_{j \in V} A_j$, to a real number.
In non-cooperative game theory we assume players are greedy, rational and act independently, by which we mean that each player $i$ always want to maximize their own utility, subject to the actions selected by others, irrespective of how the optimal action chosen help or hurt others.

A core solution concept in non-cooperative game theory is that of an \emph{Nash equilibrium}.
A joint action $\x^* \in \A$ is a \emph{pure-strategy Nash equilibrium
(PSNE)} of a non-cooperative game if, for each player $i$, $x^*_i \in \argmax_{x_i \in A_i} u_i(x_i,\x^*_\minus{i})$;
that is, $\x^*$ constitutes a \emph{mutual best-response}, no player $i$ has any incentive to unilaterally deviate from the prescribed action $x^*_i$, given the joint action of the other players $\x^*_\minus{i} \in \times_{j \in V-\{i\}} A_j$ in the equilibrium.
In what follows, we denote a game by $\G$, and the set of all
\emph{pure-strategy Nash equilibria} of $\G$ by\footnote{Because
  this paper concerns mostly PSNE, we denote
  the set of PSNE of game $\G$ as $\NE(\G)$ to simplify notation.}
\begin{equation*}
\NE(\G) \equiv \{ \x^* \mid (\forall i \in V){\rm\ }x^*_i \in \argmax_{x_i \in A_i}
u_i(x_i,\x^*_\minus{i}) \} \; .
\end{equation*}

A \emph{(directed) graphical game} is a game-theoretic graphical model~\citep{kearns01}.
It provides a succinct representation of normal-form games.
In a graphical game, we have a (directed) graph $G = (V,E)$ in which each node in $V$ corresponds to a player in the game.
The interpretation of the edges/arcs $E$ of $G$ is that the payoff function of player $i$ is only a function of the set of parents/neighbors $\N_i \equiv \{ j \mid (i,j) \in E \}$ in $G$ (i.e., the set of players corresponding to nodes that point to the node corresponding to player $i$ in the graph).
In the context of a graphical game, we refer to the $u_i$'s as the \emph{local payoff functions/matrices}.

\emph{Linear influence games (LIGs)}~\citep{Irfan13} are a sub-class of
$2$-action graphical games with \emph{parametric} payoff functions.
For LIGs, we assume that we are given a matrix of influence weights $\W\in \R^{n \times n}$, with $\mathbf{diag}(\W)=\zero$, and a threshold vector $\b\in \R^n$.
For each player $i$, we define the influence function $f_i(\x_\minus{i}) \equiv \sum_{j \in \N_i} w_{ij} x_j - b_i = \t{\w{i}}\x_\minus{i}-b_i$ and the payoff function $u_i(\x) \equiv x_i f_i(\x_\minus{i})$.
We further assume binary actions: $A_i \equiv \{-1,+1\}$ for all $i$.
The \emph{best response} $x_i^*$ of player $i$ to the joint action $\x_\minus{i}$ of the other players is defined as
\begin{equation*}
\left\{\begin{array}{@{}l@{}}
\t{\w{i}}\x_\minus{i} > b_i \Rightarrow x^*_i = +1, \\
\t{\w{i}}\x_\minus{i} < b_i \Rightarrow x^*_i = -1 \text{ and } \\
\t{\w{i}}\x_\minus{i} = b_i \Rightarrow x^*_i \in \{-1,+1\}
\end{array}\right\}
\Leftrightarrow x^*_i(\t{\w{i}}\x_\minus{i}-b_i) \geq 0 \; .
\end{equation*}
\emph{Intuitively}, for any other player $j$, we \emph{can think of}
$w_{ij} \in \R$ as a \emph{weight} parameter quantifying the
``influence factor'' that $j$ has on $i$, and we \emph{can think of}
$b_i \in \R$ as a \emph{threshold} parameter quantifying the level of
``tolerance'' that player $i$ has for playing $-1$.\footnote{As we
  \emph{formally/mathematically define} here, LIGs
are $2$-action graphical games with \emph{linear-quadratic} payoff
functions. Given our main interest in this
paper on the PSNE solution concept, for the most part, we simply view \emph{LIGs as compact
representations of the PSNE of graphical games that the
algorithms of~\citet{Irfan13} use for CSI.} (This is in contrast to a perhaps more natural,
``intuitive'' but still \emph{informal} description/interpretation one may provide
for instructive/pedagogical purposes based on ``direct influences,'' as we do here.) This view of LIGs is analogous to
the modern, predominant view of Bayesian networks as compact
representations of joint probability distributions that are also very useful for
modeling uncertainty in complex systems and practical for
probabilistic inference~\citep{Koller09}. (And also analogous is the
``intuitive'' descriptions/interpretations of BN structures, used for
instructive/pedagogical purposes, based on ``causal'' interactions
between the random variables~\citealp{Koller09}.)}

As discussed in \citet{Irfan13}, LIGs are also a sub-class of polymatrix games \citep{Janovskaja_1968_MR_by_Isbell}.
Furthermore, in the special case of $\b=\zero$ and symmetric $\W$, a
LIG becomes a \emph{party-affiliation
  game}~\citep{fabrikantetal04}.

In this paper, the use of the verb ``influence'' strictly refers to
influences defined by the model. 

Figure~\ref{Congress110} provides a preview illustration of the
application of our approach to congressional
voting.\footnote{\label{foot:figCong110}We present this game graph because
  many people express interest in ``seeing'' the type of games we learn on
  this particular data set. The reader should please understand that by
  presenting this graph we are \emph{definitely not} implying or arguing that we can identify
the ground-truth graph of ``direct influences.'' (We say this even in the very
unlikely event that the
``ground-truth model'' be an LIG that faithfully capture the ``true direct
influences'' in this U.S. Congress, something arguably \emph{no model}
could ever do.)
As we show later in Section~\ref{sec:ident}, LIGs are \emph{not}
identifiable with respect to their \emph{local} compact parametric representation
encoding the game graph through their weights and biases, but only
with respect to their PSNE, which are joint actions capturing a \emph{global} property of a
game that
we really care about for CSI.
Certainly, we could \emph{never} validate the model parameters of an
LIG at the
\emph{local}, microscopic
level of ``direct influences'' using only the type of
\emph{observational} data we used to learn the model depicted by the graph in
the figure. For that, we would need help from domain experts to design
\emph{controlled} experiments that would yield the right type of data
for proper/rigorous scientific validation.  
}

\section{Our Proposed Framework for Learning LIGs} \label{SecPrelims}

Our \emph{goal} is to learn the structure and parameters of an LIG from
observed joint actions only (i.e., without any payoff
data/information).\footnote{
In principle, the \emph{learning framework}
  itself is technically immediately/easily applicable to \emph{any} class of
  simultaneous/one-shot games. Generalizing the algorithms
  and other theoretical results (e.g., on generalization error)
  while maintaining the tractability in sample
  complexity and computation may
  require significant effort.} Yet, for simplicity, most of the
presentation in this section is actually in terms of general 2-action games. While we make
sporadic references to LIGs throughout the section, it is not until we
reach the end of the section
that we present and discuss the particular instantiation of our
proposed framework with LIGs.

Our main \emph{performance measure} will be average log-likelihood (although later
we will be considering misclassification-type error measures in the
context of simultaneous-classification, as a
result of an approximation of the average log-likelihood). Our \emph{emphasis on a PSNE-based statistical model for the behavioral data} results from the approach to causal strategic
inference taken by~\citet{Irfan13}, which is strongly founded on
PSNE.\footnote{The possibility that PSNE may not exist in some LIGs does not
present a significant problem in our case because \emph{we are
  learning the game, and can require that the LIG output has at least
  one PSNE.} Indeed, in our approach, games with no PSNE achieve the
lowest possible likelihood within our generative model of the data;
said differently, games with PSNE 
have 
higher 
likelihoods than those that do not have any PSNE.}

Note that our problem is \emph{unsupervised}, i.e., we do not know a priori which joint actions are PSNE and which ones are not.
If our only goal were to find a game $\G$ in which all the given
observed data is an equilibrium, then any ``dummy'' game, such as the
``dummy'' LIG 
$\G=(\W,\b),\W=\zero,\b=\zero$, would be an optimal solution because
$|\NE(\G)|=2^n$.\footnote{\citet{ngandrussell00} made a similar observation in
  the context of single-agent \emph{inverse reinforcement learning (IRL)}.}
In this section, we present a probabilistic formulation that allows finding games that maximize the \emph{empirical proportion of equilibria} in the data while keeping the \emph{true proportion of equilibria} as low as possible.
Furthermore, we show that \emph{trivial} games such as LIGs with $\W=\zero,\b=\zero$, obtain the lowest log-likelihood.

\subsection{Our Proposed Generative Model of Behavioral Data}

We propose the following simple generative (mixture) model for behavioral data based
strictly in the context of ``simultaneous''/one-shot play in
non-cooperative game theory, again motivated by~\citet{Irfan13}'s
PSNE-based approach to \emph{causal strategic inference (CSI)}.\footnote{Model ``simplicity'' and ``abstractions'' are not necessarily a bad
  thing in practice. More ``realism'' often leads to more
  ``complexity'' in terms of model representation and
  computation; and to potentially
  poorer generalization performance as well~\citep{Kearns94}. We
  believe that even if the data could be the result of
  complex cognitive, behavioral or neuronal processes underlying human decision making and
  social interactions, the practical guiding principle of model
  selection in ML, which governs the fundamental tradeoff between model
  complexity and generalization performance, still applies.}
Let $\G$ be a game.
With some probability $0<q<1$, a joint action $\x$ is chosen uniformly at random from $\NE(\G)$;
otherwise, $\x$ is chosen uniformly at random from its complement set $\{-1,+1\}^n - \NE(\G)$.
Hence, the generative model is a mixture model with mixture parameter
$q$ corresponding to the probability that a stable outcome (i.e., a
PSNE) of the game is observed, uniform over PSNE.
Formally, the probability mass function (PMF) over joint-behaviors $\{-1,+1\}^n$ parameterized by $(\G,q)$ is
\begin{equation} \label{Prob}
p_{(\G,q)}(\x) = q \, \frac{\iverson{\x \in \NE(\G)}}{|\NE(\G)|} + (1-q) \, \frac{\iverson{\x \notin \NE(\G)}}{2^n - |\NE(\G)|} \; ,
\end{equation}
\noindent where we can think of $q$ as the ``signal'' level, and thus $1-q$ as the ``noise'' level in the data set.

\begin{remark} \label{RemUnif}
Note that in order for Eq.~\eqref {Prob} to be a valid PMF for any $\G$, we need to enforce the following conditions $|\NE(\G)|=0 \Rightarrow q=0$ and $|\NE(\G)|=2^n \Rightarrow q=1$.
Furthermore, note that in both cases ($|\NE(\G)| \in \{0,2^n\}$) the PMF becomes a uniform distribution.
We also enforce the following condition:\footnote{We can easily remove this condition
  at the expense of complicating the theoretical analysis on the
  generalization bounds because of having to
  deal with those extreme cases.} if $0<|\NE(\G)|<2^n$ then
$q \not\in\{0,1\}$.
\end{remark}

\subsection{On PSNE-Equivalence and PSNE-Identifiability of Games}
\label{sec:ident}

For any valid value of mixture parameter $q$, the PSNE of a game $\G$ completely
determines our generative model $p_{(\G,q)}$. Thus, given any such
mixture parameter, two games with the same set
of PSNE will induce the same PMF over the space
of joint actions.\footnote{It is not hard to come up with examples of
multiple games that have the same PSNE set. In fact, later in this section, we show three instances of
LIGs with very different weight-matrix parameter that have this
property. Note that this is not a roadblock to our objectives of
learning LIGs because
our main interest is the PSNE of the game, not the
individual parameters that define it. We note that this situation
  is hardly exclusive to game-theoretic models: an
  analogous issue occurs in probabilistic graphical models (e.g.,
  Bayesian networks).}

\begin{definition} \label{DefEquiv}
We say that two games $\G_1$ and $\G_2$ are \emph{PSNE-equivalent} if and only if their PSNE sets are identical, i.e., $\G_1 \equiv_\NE \G_2 \Leftrightarrow \NE(\G_1)=\NE(\G_2)$.
\end{definition}
We often drop the ``PSNE-'' qualifier when clear from context.
\begin{definition} \label{DefIdentif}
We say a set $\PS^*$ of valid parameters $(\G,q)$ for the generative model is \emph{PSNE-identifiable with respect
to the PMF $p_{(\G,q)}$ defined in Eq.~\eqref {Prob}}, if and only if,
for every pair $(\G_1,q_1),(\G_2,q_2) \in \PS^*$, if $p_{(\G_1,q_1)}(\x)
= p_{(\G_2,q_2)}(\x)$ for all $\x \in \{-1,+1\}^n$ then $\G_1
\equiv_\NE \G_2$ and $q_1=q_2$. We say a game $\G$ is \emph{PSNE-identifiable
with respect to $\PS^*$ and the $p_{(\G,q)}$}, if and only if, there
exists a $q$ such that $(\G,q) \in \PS^*$.
\end{definition}

\begin{definition}
\label{DefTPropTriv}
We define the \emph{true proportion of equilibria} in the game $\G$
relative to all possible joint actions as
\begin{equation} \label{PiDef}
\pi(\G) \equiv |\NE(\G)|/2^n \; .
\end{equation}
We also say that a game $\G$ is \emph{trivial} if and only if $|\NE(\G)|\in \{0,2^n\}$ (or equivalently $\pi(\G)\in \{0,1\}$), and \emph{non-trivial} if and only if $0<|\NE(\G)|<2^n$ (or equivalently $0<\pi(\G)<1$).
\end{definition}

The following propositions establish that the condition $q>\pi(\G)$
ensures that the probability of an equilibrium is strictly greater
than a non-equilibrium. The condition also guarantees that non-trivial
games are identifiable.
\begin{proposition} \label{PrpEqGTNoEq}
Given a non-trivial game $\G$, the mixture parameter $q>\pi(\G)$ if and only if $p_{(\G,q)}(\x_1) > p_{(\G,q)}(\x_2)$ for any $\x_1 \in \NE(\G)$ and $\x_2 \notin \NE(\G)$.
\end{proposition}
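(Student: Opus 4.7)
The plan is a direct algebraic manipulation of the PMF in equation~\eqref{Prob}, exploiting the fact that $\G$ being non-trivial means $0 < |\NE(\G)| < 2^n$, so both denominators appearing in $p_{(\G,q)}$ are strictly positive and the expressions are well-defined without having to worry about the boundary cases from Remark~\ref{RemUnif}.

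First I would write out the two relevant values of the PMF: for any $\x_1 \in \NE(\G)$ we have $p_{(\G,q)}(\x_1) = q / |\NE(\G)|$, while for any $\x_2 \notin \NE(\G)$ we have $p_{(\G,q)}(\x_2) = (1-q) / (2^n - |\NE(\G)|)$. Since both values depend only on whether the joint-action is an equilibrium or not (not on the particular $\x_1$ or $\x_2$), proving the biconditional for \emph{some} pair immediately yields it for \emph{any} pair, and the quantification over $\x_1,\x_2$ becomes a non-issue.

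Next I would clear denominators in the inequality $p_{(\G,q)}(\x_1) > p_{(\G,q)}(\x_2)$. Non-triviality guarantees both $|\NE(\G)|>0$ and $2^n - |\NE(\G)|>0$, so cross-multiplying preserves the direction of the inequality and yields $q(2^n-|\NE(\G)|) > (1-q)|\NE(\G)|$. Expanding, the terms $q|\NE(\G)|$ cancel, leaving $q\cdot 2^n > |\NE(\G)|$, which after dividing by $2^n$ is exactly $q > |\NE(\G)|/2^n = \pi(\G)$ by definition~\eqref{PiDef}. Every step here is a reversible equivalence, so both directions of the biconditional follow at once.

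There is no real obstacle: the only thing to be careful about is invoking non-triviality explicitly to justify (i) that the PMF expressions for equilibria and non-equilibria are both meaningful (no division by zero), and (ii) that cross-multiplication is valid and sign-preserving. Once that is flagged, the rest of the proof is one short chain of equivalences terminating in $q > \pi(\G)$.
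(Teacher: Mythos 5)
Your proof is correct and follows essentially the same route as the paper's: write out the two PMF values from eq.\eqref{Prob}, cross-multiply (valid because non-triviality makes both denominators positive), and observe that the inequality reduces reversibly to $q > |\NE(\G)|/2^n = \pi(\G)$. The paper states this chain of equivalences in one line; your version simply spells out the intermediate cancellation and the justification for why the quantification over $\x_1,\x_2$ is immaterial.
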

\begin{proof}
Note that $p_{(\G,q)}(\x_1)=q/|\NE(\G)| > p_{(\G,q)}(\x_2)=(1-q)/(2^n - |\NE(\G)|) \Leftrightarrow q>|\NE(\G)|/2^n$ and given Eq.~\eqref {PiDef}, we prove our claim.
\qedhere
\end{proof}
\begin{proposition} \label{PrpIdentifProb}
Let $(\G_1,q_1)$ and $(\G_2,q_2)$ be two valid generative-model parameter tuples.
\begin{itemize}
\item[(a)] If $\G_1 \equiv_{\NE} \G_2$ and $q_1=q_2$
then $(\forall \x){\rm\ }p_{(\G_1,q_1)}(\x) = p_{(\G_2,q_2)}(\x)$,
\item[(b)] Let $\G_1$ and $\G_2$ be also two non-trivial games such
  that $q_1 > \pi(\G_1)$ and $q_2 > \pi(\G_2)$. If $(\forall \x){\rm\
  }p_{(\G_1,q_1)}(\x) = p_{(\G_2,q_2)}(\x)$, then  $\G_1
\equiv_\NE \G_2$ and $q_1=q_2$.
\end{itemize}
\end{proposition}
\begin{proof}
Let $\NE_k \equiv \NE(\G_k)$. First, we prove part (a).
By Definition \ref{DefEquiv}, $\G_1 \equiv_\NE \G_2
\Rightarrow \NE_1=\NE_2$. Note that $p_{(\G_k,q_k)}(\x)$ in
Eq.~\eqref {Prob} depends only on characteristic functions
$\iverson{\x\in \NE_k}$. Therefore, $(\forall \x){\rm\
}p_{(\G_1,q_1)}(\x)=p_{(\G_2,q_2)}(\x)$.

Second, we prove 
part (b) by contradiction. Assume, wlog, $(\exists \x){\rm\ }\x\in \NE_1
\wedge \x\notin \NE_2$. Then $p_{(\G_1,q_1)}(\x)=p_{(\G_2,q_2)}(\x)$
implies by Eq.~\eqref {Prob} that $q_1/|\NE_1|=(1-q_2)/(2^n - |\NE_2|)
\Rightarrow q_1/\pi(\G_1)=(1-q_2)/(1 - \pi(\G_2))$ by
Eq.~\eqref {PiDef}. By assumption, we have $q_1 > \pi(\G_1)$, which implies that
$(1-q_2)/(1 - \pi(\G_2)) > 1 \Rightarrow
q_2 < \pi(\G_2)$, a contradiction. Hence, we have $\G_1
\equiv_\NE \G_2$. Assume, $q_1 \neq q_2$. Then we have $p_{(\G_1,q_1)}(\x)=p_{(\G_2,q_2)}(\x)$
implies by Eq.~\eqref {Prob} and $\G_1
\equiv_\NE \G_2$ (and after some algebraic manipulations) that $(q_1 -
q_2) \left( \frac{\iverson{\x \in \NE(\G_1)}}{|\NE(\G_1)|} -
  \frac{\iverson{\x \notin \NE(\G_1)}}{2^n - |\NE(\G_1)|}\right) =
0 \Rightarrow \frac{\iverson{\x \in \NE(\G_1)}}{|\NE(\G_1)|} =
  \frac{\iverson{\x \notin \NE(\G_1)}}{2^n - |\NE(\G_1)|}$, a
  contradiction.
\qedhere
\end{proof}

The last proposition, along with our definitions of ``trivial'' (as given in Definition~\ref{DefTPropTriv}) and ``identifiable'' (Definition~\ref{DefIdentif}), allows us to
formally define our \emph{hypothesis space}.
\begin{definition}
\label{DefHS}
Let $\HS$ be a class of games of interest. We call the set $\PS \equiv
\{(\G,q) \, \mid \, \G \in \HS \wedge 0 < \pi(\G) < q < 1\}$ the
\emph{hypothesis space of non-trivial identifiable
  games and mixture parameters}. We also refer to a game $\G \in \HS$ that is also in some
tuple $(\G,q) \in \PS$
for some $q$, as a \emph{non-trivial
identifiable game}.\footnote{Technically, we should call the set $\PS$ ``the
  hypothesis space consisting of tuples of non-trivial
  games from $\HS$ and mixture parameters identifiable up to PSNE, with respect to the probabilistic
  model defined in Eq.~\eqref {Prob}.'' Similarly, we should call such
  game $\G$
  ``a non-trivial
  game from $\HS$ identifiable up to PSNE,
with respect to the probabilistic
  model defined in Eq.~\eqref {Prob}.'' We opted for brevity.}
\end{definition}

\begin{remark}
Recall that a trivial game induces a uniform PMF by Remark~\ref{RemUnif}.
Therefore, a non-trivial game is not \emph{equivalent} to a trivial
game since by Proposition \ref{PrpEqGTNoEq}, non-trivial games do not
induce uniform PMFs.\footnote{In general, Section~\ref{sec:ident} characterizes
  our hypothesis space (non-trivial identifiable games and mixture parameters) via two
  specific conditions.
The first condition, \emph{non-triviality} (explained in Remark~\ref{RemUnif}), is $0 < \pi(\G) < 1$.
The second condition, \emph{identifiability of the PSNE set from its
  related PMF} (discussed in Propositions~\ref{PrpEqGTNoEq}  and \ref{PrpIdentifProb}), is $\pi(G) < q$.
For completeness, in this remark, we clarify that the class of trivial games (uniform PMFs) is different from the class of non-trivial games (non-uniform PMFs).
Thus, in the rest of the paper we focus exclusively on non-trivial
identifiable games; that is, games that produce non-uniform PMFs and
for which the PSNE set is uniquely identified from their PMFs.}
\end{remark}

\subsection{Additional Discussion on Modeling Choices}

We now discuss other equilibrium concepts, such as mixed-strategy Nash equilibria
(MSNE) and
quantal response equilibria (QRE). We also discuss a more
sophisticated noise process
as well as a generalization of our model to non-uniform distributions;
while likely more realistic, the alternative models are mathematically more complex and potentially less tractable computationally.

\subsubsection{On Other Equilibrium Concepts} \label{SubSecConcepts}

There is still quite a bit of debate as to the appropriateness of game-theoretic equilibrium concepts to model individual human behavior in a social context.
Camerer's book on behavioral game theory~\citep{Camerer03} addresses
some of the issues.
Our interpretation of Camerer's position is not that Nash equilibria is universally a bad predictor but that it is not \emph{consistently} the best, for reasons that are still not well understood.
This point is best illustrated in Chapter 3, Figure 3.1 of~\citet{Camerer03}.

\emph{(Logit) quantal response equilibria (QRE)}~\citep{mckelvey95}
has been proposed as an alternative to Nash in the context of
behavioral game theory.
Models based on QRE have been shown superior during \emph{initial
  play} in some experimental settings, but prior work assumes known structure and observed
payoffs, and only the ``precision/rationality
parameter'' is estimated, e.g.~\citet{Wright10,Wright12}. In
  a logit QRE, the precision parameter, typically denoted by $\lambda$, can be mathematically interpreted as the
  inverse-temperature parameter of individual Gibbs distributions over the
  pure-strategy of each player $i$. 

It is customary to compute the MLE for $\lambda$ from available
data. To
the best of our knowledge, all work in QRE assumes exact knowledge of
the game payoffs, and thus, no method has been proposed to
simultaneously estimate the payoff functions
$u_i$ when they are unknown. Indeed, computing MLE for $\lambda$,
\emph{given the payoff functions},
is relatively efficient for normal-form games using standard
techniques, but may be hard for graphical games; on the other hand, MLE estimation of
the payoff functions themselves within a QRE model of behavior seems like a
highly non-trivial optimization problem, and is unclear that it is even
computationally tractable, even in normal-form games. At the very least, standard techniques do
not apply and more sophisticated optimization algorithms or heuristics
would have to be derived. Such extensions are clearly beyond the scope
of this paper.\footnote{Note that despite the apparent similarity in
  mathematical expression between logit QRE and the PSNE of the LIG we
  obtain by using
  individual logistic regression, they are fundamentally different
  because of the complex correlations that QRE conditions impose on
  the parameters $(\W,\b)$ of the payoff functions. It is unclear how
  to adapt techniques for logistic regression similar to the ones we
  used here to efficiently/tractably compute MLE within the logit QRE framework.}

\citet{Wright12} also considers even more mathematically complex variants of behavioral models that
  combine QRE with different models that account for constraints in
  ``cognitive levels'' of reasoning ability/effort, yet the estimation
  and usage of such models still
  assumes knowledge of the payoff functions.

It would be fair to say that most of the human-subject experiments in behavioral game
theory involve only a handful of players~\citep{Camerer03}. The scalability of those
results to games with a large population of players is unclear.

Now, just as an example, we do not necessarily view the Senators final
votes as those of
  a \emph{single} human individual anyway: after all, such a decision  is (or should
  be) obtained with
  consultation with their staff and (one would at least hope) the
  constituency of the state they represent.
Also,
  the final voting decision is
  taken after consultation or meetings between the staff of the
  different senators. We view this underlying process as one of
  ``cheap talk.'' While cheap talk may be an important process to
  study, in this paper, we just concentrate on the end result: the final vote. The
  reason is more than just scientific; as the congressional voting
  setting illustrates, data for such process is sometimes not
  available, or would seem very hard to infer from
  the end-states alone. While our experiments concentrate on
  congressional voting data, because it is publicly available and easy
  to obtain, the same would hold for other settings such as Supreme
  court decisions, voluntary vaccination, UN voting records and
  governmental decisions, to name a few. We speculate that in almost all those cases,
  only the end-result is likely to be recorded and little information
  would be available about the ``cheap talk'' process or ``pre-play''
  period leading to the final decision.

In our work we consider PSNE because of our motivation to provide
LIGs for use within the casual strategic inference framework for
modeling ``influence'' in large-population networks
of~\citet{Irfan13}.
Note that the universality of MSNE does not diminish the importance of
PSNE in game theory.\footnote{Research work on the properties and
  computation of PSNE include
\citet{rosenthal73,gilboa89,Stanford1995238,Rinott2000274,fabrikantetal04,gottlobetal05,Sureka05,DP06,Dunkel06,DunkelSchulz06,dilkinaetal07,Ackermann07,Hasan08,HasanGaliana08,Ryan10,Chapman10,HasanGaliana10}.}
Indeed, a debate still exist within the game theory community as to
the justification for randomization, specially in human contexts.
While concentrating exclusively on PSNE may not make sense in \emph{all} settings,
it does make sense in \emph{some}.\footnote{For example, in
the context of congressional voting, we believe Senators almost always
have full-information about how some, if not all other Senators they
care about would vote. Said differently, we believe
\emph{uncertainty} in a Senator's final vote, \emph{by the time the vote is actually
taken}, is rare, and certainly not the norm. Hence, it is unclear how
much
there is to gain, \emph{in this particular setting}, by
considering possible randomization in the Senators' voting
strategies.}
In addition, were we to introduce mixed-strategies into the inference and learning
framework and model, we would be adding a considerable amount of
complexity in almost all respects, thus requiring a substantive effort
to study on its own.\footnote{For example, note that because in our setting we learn
exclusively from observed joint actions, we could not assume knowledge
of the internal mixed-strategies of players. Perhaps we could generalize our model to allow for mixed-strategies by
defining a process in which a joint mixed strategy $\p$ from the set
of MSNE (or its complement) is drawn
according to some distribution, then a (pure-strategy) realization
$\x$ is drawn from $\p$ that would correspond to the observed
joint actions. One problem we might face with this approach is that
little is known about the structure of MSNE in general multi-player
games. For example, it is not even clear that the set of MSNE is always measurable
in general!}

\subsubsection{On the Noise Process} \label{SubSecNoise}

Here we discuss a more sophisticated noise process as well as a generalization of our model to non-uniform distributions.
The problem with these models is that they lead to a significantly more complex expression for the generative model and thus likelihood functions.
This is in contrast to the simplicity afforded us by the generative
model with a more global noise process defined above. (See
Appendix~\ref{App:GenDist} for further discussion.)

In this paper we considered a ``global'' noise process, modeled using
a parameter $q$ corresponding to the probability that a sample
observation is an equilibrium of the underlying hidden game.
One could easily envision potentially better and more natural/realistic ``local'' noise processes, at the expense of producing a significantly more complex
generative model, and less computationally amenable, than the one considered in this
paper.  For instance, we could use a noise process that is formed of
many independent, individual noise processes, one for each
player. (See Appendix~\ref{App:IndNoise} for further discussion.)

\subsection{Learning Games via MLE}

We now formulate the problem of learning games as one of maximum
likelihood estimation with respect to our PSNE-based
generative model defined in Eq.~\eqref{Prob} and the hypothesis space of
non-trivial identifiable games and mixture parameters (Definition~\ref{DefHS}). We remind the
reader that our problem is unsupervised; that is, we do not know a priori which joint actions are equilibria and which ones are not.
We base our framework on the fact that \emph{games are PSNE-identifiable}
  with respect to their induced PMF, under the condition that $q > \pi(\G)$, by Proposition \ref{PrpIdentifProb}.

First, we introduce a shorthand notation for the \emph{Kullback-Leibler (KL) divergence} between two Bernoulli distributions parameterized by $0\leq p_1\leq 1$ and $0\leq p_2\leq 1$:
\begin{equation} \label{KL}
\begin{array}{@{}l@{\hspace{0.025in}}l@{}}
KL(p_1\|p_2) & \equiv KL({\rm Bernoulli}(p_1)\|{\rm Bernoulli}(p_2)) \\
 & = p_1\log\frac{p_1}{p_2} + (1-p_1)\log\frac{1-p_1}{1-p_2} \; .
\end{array}
\end{equation}
Using this function, we can derive the following expression of the MLE problem.
\begin{lemma} \label{LemMLEGame}
Given a data set $\D = \{ \x\si{1},\dots,\x\si{m} \}$, define the
\emph{empirical proportion of equilibria}, i.e., the proportion of
samples in $\D$ that are equilibria of $\G$, as
\begin{equation} \label{PihDef}
\textstyle{ \pih(\G) \equiv \frac{1}{m}\sum_l \iverson{\x\si{l} \in
    \NE(\G)} } \; .
\end{equation}
\noindent The MLE problem for the probabilistic model given in
Eq.~\eqref {Prob} can be expressed as finding:
\begin{equation} \label{MLEGame}
(\Gh,\qh) \in \argmax_{(\G,q) \in \PS}{\Lh(\G,q)}{\rm ,  where \ } \Lh(\G,q) =
KL(\pih(\G)\|\pi(\G)) - KL(\pih(\G)\|q) - n\log2 \; ,
\end{equation}
\noindent 
where $\HS$ and $\PS$ are as in Definition~\ref{DefHS}, and $\pi(\G)$ is defined as in
Eq.~\eqref {PiDef}. Also, the optimal mixture parameter $\qh=\min(\pih(\G),1-\frac{1}{2m})$.
\end{lemma}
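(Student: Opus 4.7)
The plan is to start from the per-sample log-likelihood of $\D$ under the generative model in eq.~\eqref{Prob} and algebraically reduce it to the claimed two-KL form, then optimize over $q$ for fixed $\G$. First, I would note that on $\NE(\G)$ the density equals $q/|\NE(\G)|=q/(\pi(\G)\,2^n)$ and on its complement equals $(1-q)/((1-\pi(\G))\,2^n)$, so grouping the sum over $l$ according to membership in $\NE(\G)$ and invoking the definition of $\pih(\G)$ in eq.~\eqref{PihDef} yields
\[
\frac{1}{m}\sum_l \log p_{(\G,q)}(\x\si{l}) = \pih(\G)\log\frac{q}{\pi(\G)\,2^n} + (1-\pih(\G))\log\frac{1-q}{(1-\pi(\G))\,2^n}.
\]
The two $2^n$ denominators peel off immediately to give the constant $-n\log 2$.

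Second, I would add and subtract the Bernoulli entropy $\pih(\G)\log\pih(\G)+(1-\pih(\G))\log(1-\pih(\G))$ in the remaining expression. The terms involving $\pi(\G)$ then collapse into $KL(\pih(\G)\|\pi(\G))$ and the terms involving $q$ collapse into $-KL(\pih(\G)\|q)$, by the Bernoulli KL definition in eq.~\eqref{KL}, producing exactly the claimed $\Lh(\G,q)$.

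Third, to identify $\qh$, note that only $-KL(\pih(\G)\|q)$ depends on $q$ in $\Lh(\G,q)$. The map $q\mapsto KL(\pih(\G)\|q)$ is strictly convex on $(0,1)$ with unique unconstrained minimizer $q=\pih(\G)$. The feasibility set is the open interval $\pi(\G)<q<1$: the lower constraint $q>\pi(\G)$ is inactive in the regime of interest $\pih(\G)>\pi(\G)$ (otherwise the trivial uniform model already dominates), and since $\pih(\G)\in\{0,1/m,\ldots,1\}$, any value $\pih(\G)<1$ satisfies $\pih(\G)\le 1-1/m<1-1/(2m)$, so $\qh=\pih(\G)$ is feasible and optimal.

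The main (essentially the only non-mechanical) obstacle is the degenerate case $\pih(\G)=1$, where the supremum of $-KL(1\|q)=\log q$ on the open set $q<1$ is not attained, so one cannot literally solve a first-order condition. The lemma handles this by declaring the capping value $\qh=1-1/(2m)$, which is the natural ``half-step'' below $1$ matching the $1/m$ resolution of the empirical measure and keeping $(\G,\qh)\in\PS$ (Remark~\ref{RemUnif} forbids $q=1$ for non-trivial $\G$). Combined with the previous case this gives $\qh=\min(\pih(\G),1-1/(2m))$, and everything else is straightforward algebraic bookkeeping.
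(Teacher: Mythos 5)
Your proposal is correct and follows essentially the same route as the paper's proof: the same grouping of samples by membership in $\NE(\G)$, the same add-and-subtract of the Bernoulli entropy to produce the two KL terms, and the same treatment of $\qh$ including the $\pih(\G)=1$ shrinkage to $1-\frac{1}{2m}$. Your extra remarks on feasibility of $\qh=\pih(\G)$ when $\pih(\G)<1$ are a slightly more careful version of what the paper delegates to its definition of $\PS$.
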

\begin{proof}
Let $\NE \equiv \NE(\G)$, $\pi \equiv \pi(\G)$ and $\pih \equiv \pih(\G)$. First, for a non-trivial $\G$, $\log p_{(\G,q)}(\x\si{l})=\log\frac{q}{|\NE|}$ for $\x\si{l} \in \NE$, and $\log p_{(\G,q)}(\x\si{l})=\log\frac{1-q}{2^n - |\NE|}$ for $\x\si{l}\notin \NE$. The average log-likelihood $\Lh(\G,q) =
\frac{1}{m}\sum_l{\log p_{\G,q}(\x\si{l})} =
\pih \log\frac{q}{|\NE|} + (1-\pih)\log\frac{1-q}{2^n - |\NE|} =
\pih \log\frac{q}{\pi} + (1-\pih)\log\frac{1-q}{1-\pi} - n\log2$. By adding $0 =
-\pih\log\pih + \pih\log\pih - (1-\pih)\log(1-\pih) + (1-\pih)\log(1-\pih)$, this can be rewritten as $\Lh(\G,q) =
\pih \log\frac{\pih}{\pi} + (1-\pih)\log\frac{1-\pih}{1-\pi} - \pih \log\frac{\pih}{q} - (1-\pih)\log\frac{1-\pih}{1-q} - n\log2$, and by using Eq.~\eqref {KL} we prove our claim.

Note that by maximizing with respect to the mixture parameter $q$ and by properties of the KL divergence, we get $KL(\pih\|\qh)=0 \Leftrightarrow \qh=\pih$.
We define our hypothesis space $\PS$ given the conditions in Remark \ref{RemUnif} and Propositions \ref{PrpEqGTNoEq} and \ref{PrpIdentifProb}.
For the case $\pih=1$, we ``shrink'' the optimal mixture parameter
$\qh$ to $1-\frac{1}{2m}$ in order to
enforce the second condition given in Remark \ref{RemUnif}.
\qedhere
\end{proof}

\begin{remark} \label{RemWorst}
Recall that a trivial game (e.g., LIG $\G=(\W,\b),\W=\zero,\b=\zero,\pi(\G)=1$) induces a uniform PMF by Remark \ref{RemUnif}, and therefore its log-likelihood is $-n\log2$.
Note that the lowest log-likelihood for non-trivial identifiable games
in Eq.~\eqref {MLEGame} is $-n\log2$ by setting the optimal mixture parameter $\qh=\pih(\G)$ and given that $KL(\pih(\G)\|\pi(\G))\geq 0$.
\end{remark}

Furthermore, Eq.~\eqref {MLEGame} implies that for non-trivial identifiable games $\G$, we expect the \emph{true proportion of equilibria} $\pi(\G)$ to be strictly less than the \emph{empirical proportion of equilibria} $\pih(\G)$ in the given data.
This is by setting the optimal mixture parameter $\qh=\pih(\G)$ and the condition $q>\pi(\G)$ in our hypothesis space.

\subsubsection{Learning LIGs via MLE: Model Selection}

Our main \emph{learning problem} consists of inferring the structure and
parameters of an LIG from data \emph{with the main purpose being modeling the game's
  PSNE, as reflected in the
  generative model.} Note that, as we have previously stated, 
different games (i.e., with different payoff functions) can be
PSNE-equivalent. For instance, the three following LIGs, with
different weight parameter matrices, induce the
same PSNE sets, i.e., $\NE(\W_k,\zero)=\{(-1,-1,-1), (+1,+1,+1)\}$ for $k=1,2,3$:\footnote{Using the formal mathematical definition of ``identifiability'' in statistics, we
would say that the
LIG examples prove that the model parameters $(\W,\b)$ of an LIG $\G$
are
\emph{not} identifiable with respect to the generative model
$p_{(\G,q)}$ defined in Eq.~\eqref{Prob}. We note that this situation
  is hardly exclusive to game-theoretic models. As example of an
  analogous issue in probabilistic graphical models is the fact that
  two Bayesian networks with different graph structures can
  represent not only the same conditional
  independence properties but also \emph{exactly} the same set of
  joint probability distributions~\citep{Chickering02,Koller09}. 

As a side note, we can
  distinguish these games
  with respect to their larger set of
  mixed-strategy Nash equilibria (MSNE), but, as stated previously,
  we do not consider MSNE
  in this paper because our primary motivation is the work of~\citet{Irfan13}, which is
  based on the concept of PSNE.}
\begin{equation*}
\W_1=\left[ \begin{array}{@{\hspace{0.025in}}c@{\hspace{0.05in}}c@{\hspace{0.05in}}c@{\hspace{0.025in}}}
0 & 0 & 0 \\
1/2 & 0 & 0 \\
0 & 1 & 0 \\
\end{array} \right]
{\rm ,\ \ }
\W_2=\left[ \begin{array}{@{\hspace{0.025in}}c@{\hspace{0.05in}}c@{\hspace{0.05in}}c@{\hspace{0.025in}}}
0 & 0 & 0 \\
2 & 0 & 0 \\
1 & 0 & 0 \\
\end{array} \right]
{\rm ,\ \ }
\W_3=\left[ \begin{array}{@{\hspace{0.025in}}c@{\hspace{0.05in}}c@{\hspace{0.05in}}c@{\hspace{0.025in}}}
0 & 1 & 1 \\
1 & 0 & 1 \\
1 & 1 & 0 \\
\end{array} \right] \; .
\end{equation*} 
Thus, not only the MLE may not be unique, but also all
such PSNE-equivalent MLE games will achieve the same level of
\emph{generalization} performance. But, as reflected by our
generative model, our main interest in
the model parameters of the LIGs is only with respect to the PSNE they
induce, not the model parameters \emph{per se}. Hence, all
we need is a way to select
among PSNE-equivalent LIGs.

In our work, \emph{the indentifiability or interpretability of exact model parameters of
LIGs  is not our main interest.}
 That is, in the research presented
here, we did not seek or attempt to work on creating alternative generative models with the
objective to
provide a theoretical guarantee that, given an infinite
amount of data, we can recover the model parameters of an unknown
ground-truth model generating the data, assuming the ground-truth
model is an LIG.
We opted for a more practical ML approach
in which we just want to learn a \emph{single} LIG
that has good generalization performance (i.e., predictive performance
in terms of average log-likelihood) with respect to our generative
model. Given the nature of our generative model, this essentially
translate to learning an LIG that captures as best as possible the PSNE of the
unknown ground-truth game. Unfortunately, as we just illustrated, several LIGs
with different model parameter values can have the same set of
PSNE. Thus, they all would have the same (generalization) performance ability.

As we all know, model selection is core to ML. One of the reason we
chose an ML-approach to learning games is precisely the elegant way in
which ML deals with the problem of how to select among multiple models
that achieve the same level of performance: invoke the principle of Ockham's
razor and select the ``simplest'' model among those with the same
(generalization) performance. This ML philosophy is not \emph{ad hoc}.
It is instead well established in practice and well supported by
theory. Seminal results from
the various theories of learning, such as computational and statistical
learning theory and PAC learning, support the well-known ML adage
that ``learning requires bias.'' In short, as is by now standard in an ML-approach, we measure the quality of our data-induced models via their generalization ability and invoke the principle of Ockham's razor to bias our search toward simpler models using well-known and -studied regularization techniques.

Now, as we also all know, exactly what ``simple'' and ``bias'' means
depends on the problem. In our case, we would prefer games with sparse graphs, if for no reason
other than to simplify analysis, exploration, study, and (\emph{visual}) ``interpretability'' of the game
model \emph{by human experts}, everything else being equal
(i.e., models with the same explanatory power on the data as measured
by the likelihoods).\footnote{Just to be clear, here we mean ``interpretability''
  not in any formal mathematical sense, or as often used in some areas of the
  social sciences such as economics. But, instead, as we typically use
  it in ML/AI textbooks, such as
  for example, when referring to shallow/sparse decision trees,
  generally considered to be easier to explain and understand. Similarly, the hope here is that the
  ``sparsity''  or ``simplicity'' of the game graph/model would make
  it also 
  simpler for human experts to explain or understand what about the model is
  leading them to generate novel hypotheses, reach certain conclusions
  or make certain inferences about the \emph{global}
  strategic behavior of the agents/players, such as those based on the
  game's PSNE and facilitated by
  CSI. We should also point out that, in preliminary empirical work, we have observed that the
  \emph{representationally sparser} the LIG graph, the
\emph{computationally easier} it is for 
algorithms and other heuristics that operate on the LIG, as those
of~\citet{Irfan13} for CSI, for example.} For example, among the LIGs
presented above, using structural properties alone, we would generally prefer
the former two models to the latter, all else being equal
(e.g., generalization performance).

\section{Generalization Bound and VC-Dimension} \label{SecGeneralization}

In this section, we show a generalization bound for the maximum likelihood problem as well as an upper bound of the VC-dimension of LIGs.
Our objective is to establish that with probability at least $1-\delta$, for some confidence parameter $0<\delta<1$, the maximum likelihood estimate is within $\epsilon > 0$ of the optimal parameters, in terms of achievable expected log-likelihood.

Given the ground-truth distribution $\DD$ of the data, let $\pib(\G)$ be the \emph{expected proportion of equilibria}, i.e.,
\begin{equation*}
\pib(\G) = \P_\DD[\x\in \NE(\G)] \; ,
\end{equation*}
\noindent and let $\Lb(\G,q)$ be the \emph{expected log-likelihood} of a generative model from game $\G$ and mixture parameter $q$, i.e.,
\begin{equation*}
\Lb(\G,q) = \E_\DD[\log p_{(\G,q)}(\x)] \; .
\end{equation*}
Let $\thetah \equiv (\Gh,\qh)$ be a maximum-likelihood estimate as in
Eq.~\eqref {MLEGame} (i.e., $\thetah \in \argmax_{\theta \in \PS}{\Lh(\theta)}$), and $\thetab\equiv (\Gb,\qb)$ be the \emph{maximum-expected-likelihood estimate}: $\thetab \in \argmax_{\theta \in
\PS}{\Lb(\theta)}$.\footnote{If the ground-truth model belongs to the
  class of LIGs, then $\thetab$ is also the ground-truth model, or
  PSNE-equivalent to it.} We use, without formally re-stating, the last definitions in the
technical results presented in the remaining of this section.

Note that our hypothesis space $\PS$ 
as stated in Definition~\ref{DefHS} includes a continuous parameter
$q$ that could potentially have infinite VC-dimension. 
The following lemma will allow us later to prove that uniform convergence for the extreme values of $q$ implies uniform convergence for all $q$ in the domain.
\begin{lemma} \label{LemQ}
Consider any game $\G$ and, for $0 < q'' < q' < q < 1$, let $\theta = (\G,q)$, $\theta'=(\G,q')$ and $\theta''=(\G,q'')$.
If, for any $\epsilon > 0$ we have $| \Lh(\theta) - \Lb(\theta) | \leq \epsilon/2$ and $| \Lh(\theta'') - \Lb(\theta'') | \leq \epsilon/2$, then $| \Lh(\theta') - \Lb(\theta') | \leq \epsilon/2$.
\end{lemma}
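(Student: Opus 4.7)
The plan is to exploit the fact that, for a fixed game $\G$, the gap $\Lh(\G,q)-\Lb(\G,q)$ between empirical and expected log-likelihood is a strictly \emph{monotonic} function of $q$. Once this is established, the ordering $q''<q'<q$ immediately traps the value of the gap at $q'$ between its values at the two endpoints, each of which is bounded in absolute value by $\epsilon/2$ by hypothesis.

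First, I make the $q$-dependence explicit. The derivation in the proof of Lemma \ref{LemMLEGame} gives, for non-trivial $\G$,
\[
\Lh(\G,q) \;=\; \pih(\G)\log\tfrac{q}{\pi(\G)} + (1-\pih(\G))\log\tfrac{1-q}{1-\pi(\G)} - n\log 2,
\]
and taking expectations under $\DD$ yields the analogous identity for $\Lb(\G,q)$ with $\pib(\G)$ in place of $\pih(\G)$. Subtracting, the constant $-n\log 2$ cancels, the two pieces depending on $\pi(\G)$ combine, and the whole expression collapses to
\[
\Lh(\G,q)-\Lb(\G,q) \;=\; (\pih(\G)-\pib(\G))\,\log\frac{q\,(1-\pi(\G))}{\pi(\G)\,(1-q)}.
\]

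Next, since $\log\frac{q(1-\pi(\G))}{\pi(\G)(1-q)} = \log\frac{q}{1-q} + \log\frac{1-\pi(\G)}{\pi(\G)}$ and the map $q\mapsto\log\frac{q}{1-q}$ is strictly increasing on $(0,1)$, the gap above is a strictly monotonic function of $q$ (increasing if $\pih(\G)\geq\pib(\G)$, decreasing otherwise). Applying monotonicity to the given ordering $q''<q'<q$, the real number $\Lh(\G,q')-\Lb(\G,q')$ lies between $\Lh(\G,q'')-\Lb(\G,q'')$ and $\Lh(\G,q)-\Lb(\G,q)$.

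Finally, any real number that lies between two reals whose absolute values are each at most $\epsilon/2$ itself has absolute value at most $\epsilon/2$, which delivers $|\Lh(\theta')-\Lb(\theta')|\leq\epsilon/2$. There is no real obstacle: the entire argument is driven by the one-line algebraic cancellation that isolates the factor $\log\frac{q(1-\pi(\G))}{\pi(\G)(1-q)}$ and thereby exposes the monotonicity of the gap in $q$.
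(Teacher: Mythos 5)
Your proof is correct and follows essentially the same route as the paper's: both derive the identity $\Lh(\G,q)-\Lb(\G,q) = (\pih(\G)-\pib(\G))\log\bigl(\frac{q}{1-q}\cdot\frac{1-\pi(\G)}{\pi(\G)}\bigr)$ and then use the strict monotonicity of $q\mapsto\frac{q}{1-q}$ to sandwich the gap at $q'$ between the gaps at $q''$ and $q$. The only cosmetic difference is that the paper spells out the three cases $\pih>\pib$, $\pih<\pib$, $\pih=\pib$ explicitly, which your ``lies between two reals of absolute value at most $\epsilon/2$'' argument subsumes.
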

\begin{proof}
Let $\NE \equiv \NE(\G)$, $\pi \equiv \pi(\G)$, $\pih \equiv \pih(\G)$, $\pib \equiv \pib(\G)$, and $\E[\cdot]$ and $\P[\cdot]$ be the expectation and probability with respect to the ground-truth distribution $\DD$ of the data.

First note that for any $\theta = (\G,q)$, we have $\Lb(\theta) = \E[\log p_{(\G,q)}(\x)] =
\E[ \iverson{\x \in \NE} \log\frac{q}{| \NE |} + \iverson{\x \notin \NE} \log\frac{1-q}{2^n - | \NE |} ] =
\P[\x \in \NE] \log\frac{q}{| \NE |} + \P[\x \notin \NE] \log\frac{1-q}{2^n - | \NE |} =
\pib \log\frac{q}{| \NE |} + (1-\pib)  \log\frac{1-q}{2^n - | \NE |} =
\pib \log\left(\frac{q}{1-q} \cdot \frac {2^n - | \NE |}{| \NE |}\right) + \log\frac{1-q}{2^n - | \NE |} =
\pib \log\left(\frac{q}{1-q} \cdot \frac {1 - \pi }{\pi}\right) + \log\frac{1-q}{1 - \pi } - n\log2$.

Similarly, for any $\theta = (\G,q)$, we have $\Lh(\theta) = \pih \log\left(\frac{q}{1-q} \cdot \frac {1 - \pi }{\pi}\right) + \log\frac{1-q}{1 - \pi } - n\log2$.
So that $\Lh(\theta) - \Lb(\theta) = (\pih - \pib) \log\left(\frac{q}{1-q} \cdot \frac {1 - \pi }{\pi}\right)$.

Furthermore, the function $\frac{q}{1-q}$ is strictly monotonically increasing for $0 \leq q < 1$.
If $\pih > \pib$ then $-\epsilon/2 \leq \Lh(\theta'') - \Lb(\theta'') < \Lh(\theta') - \Lb(\theta') < \Lh(\theta) - \Lb(\theta) \leq \epsilon/2$.
Else, if $\pih < \pib$, we have $\epsilon/2 \geq \Lh(\theta'') - \Lb(\theta'') > \Lh(\theta') - \Lb(\theta') > \Lh(\theta) - \Lb(\theta) \geq -\epsilon/2$.
Finally, if $\pih = \pib$ then $\Lh(\theta'') - \Lb(\theta'') = \Lh(\theta') - \Lb(\theta') = \Lh(\theta) - \Lb(\theta) = 0$.
\end{proof}

In the remaining of this section, denote by $d(\HS) \equiv \left| \cup_{\G
    \in \HS} \{\NE(\G)\} \right|$ the number of all possible PSNE
sets induced by games in $\HS$, the class of games of interest.

The following theorem shows that the expected log-likelihood of the
maximum likelihood estimate $\thetah$ converges in probability to that
of the optimal $\thetab=(\Gb,\qb)$, as the data size $m$ increases.
\begin{theorem} \label{ThmGen}
The following holds with $\DD$-probability at least $1-\delta$:
\begin{equation*}
\textstyle{ \Lb(\thetah) \geq \Lb(\thetab) -
  \left(\log\max(2m,\frac{1}{1-\qb}) +
    n\log2\right)\sqrt{\frac{2}{m}\left(\log
      d(\HS)+\log\frac{4}{\delta}\right)} } \; .
\end{equation*}
\end{theorem}
\begin{proof}
First our objective is to find a lower bound for $\P[\Lb(\thetah) - \Lb(\thetab) \geq -\epsilon] \geq
\P[\Lb(\thetah) - \Lb(\thetab) \geq -\epsilon + (\Lh(\thetah) - \Lh(\thetab))] \geq
\P[- \Lh(\thetah) + \Lb(\thetah) \geq -\frac{\epsilon}{2}, \Lh(\thetab) - \Lb(\thetab) \geq -\frac{\epsilon}{2}] =
\P[\Lh(\thetah) - \Lb(\thetah) \leq \frac{\epsilon}{2}, \Lh(\thetab) - \Lb(\thetab) \geq -\frac{\epsilon}{2}] =
1 - \P[\Lh(\thetah) - \Lb(\thetah) > \frac{\epsilon}{2} \vee \Lh(\thetab) - \Lb(\thetab) < -\frac{\epsilon}{2}]$.

Let $\qt \equiv \max(1-\frac{1}{2m},\qb)$. Now, we have $\P[\Lh(\thetah) - \Lb(\thetah) > \frac{\epsilon}{2} \vee \Lh(\thetab) - \Lb(\thetab) < -\frac{\epsilon}{2}] \leq
\P[(\exists \theta\in \PS, q \leq \qt){\rm\ }| \Lh(\theta) - \Lb(\theta) | > \frac{\epsilon}{2}] =
\P[(\exists \theta, \G\in\HS, q \in\{\pi(\G),\qt\}){\rm\ }| \Lh(\theta) - \Lb(\theta) | > \frac{\epsilon}{2}]$. The last equality follows from invoking Lemma \ref{LemQ}.

Note that $\E[\Lh(\theta)] = \Lb(\theta)$ and that since $\pi(\G)\leq q\leq \qt$, the log-likelihood is bounded as $(\forall \x){\rm\ }-B \leq \log p_{(\G,q)}(\x) \leq 0$, where $B = \log\frac{1}{1-\qt} + n\log2 = \log\max(2m,\frac{1}{1-\qb}) + n\log2$.
Therefore, by Hoeffding's inequality, we have $\P[| \Lh(\theta) - \Lb(\theta) | > \frac{\epsilon}{2}] \leq 2\exp{-\frac{m\epsilon^2}{2B^2}}$.

Furthermore, note that there are $2d(\HS)$ possible parameters
$\theta$, since we need to consider only two values of $q\in
\{\pi(\G),\qt\})$ and because the number of all possible PSNE sets
induced by games in $\HS$ is $d(\HS) \equiv \left| \cup_{\G \in \HS} \{\NE(\G)\} \right|$.
Therefore, by the union bound we get the following uniform convergence $\P[(\exists \theta, \G\in\HS, q \in\{\pi(\G),\qt\}){\rm\ }| \Lh(\theta) - \Lb(\theta) | > \frac{\epsilon}{2}] \leq
4d(\HS)\P[| \Lh(\theta) - \Lb(\theta) | > \frac{\epsilon}{2}] \leq
4d(\HS)\exp{-\frac{m\epsilon^2}{2B^2}} = \delta$.
Finally, by solving for $\delta$ we prove our claim.
\qedhere
\end{proof}

\begin{remark}
A more elaborate analysis allows to tighten the bound in Theorem \ref{ThmGen} from $\O(\log\frac{1}{1-\qb})$ to $\O(\log\frac{\qb}{1-\qb})$.
We chose to provide the former result for clarity of presentation.
\end{remark}

The following theorem establishes the complexity of the class of LIGs, which implies that the term $\log d(\HS)$ of the generalization bound in Theorem \ref{ThmGen} is only polynomial in the number of players $n$.
\begin{theorem} \label{ThmVCDim}
If $\HS$ is the class of \emph{LIGs}, then $d(\HS)\equiv \left| \cup_{\G \in \HS} \{\NE(\G)\} \right| \leq 2^{n \frac{n(n+1)}{2}+1} \leq 2^{n^3}$.
\end{theorem}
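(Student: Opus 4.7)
The plan is to bound the number of distinct Nash-equilibria sets by decomposing across players. For each player $i$, define the per-player best-response set
\[
S_i(\G) \equiv \{\x \in \{-1,+1\}^n : x_i(\t{\w{i}}\x_{-i} - b_i) \geq 0\},
\]
i.e.\ the joint actions at which $i$'s action is a best response. Since $\NE(\G) = \bigcap_{i \in V} S_i(\G)$, the distinct Nash sets are controlled by the distinct $(S_1,\dots,S_n)$ tuples:
\[
d(\HS) \leq \prod_{i \in V} \bigl|\{S_i(\G) : \G \in \HS\}\bigr|.
\]

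The next step is the per-player count. Each $S_i(\G)$ depends only on the $n$-dimensional parameter vector $(\w{i}, b_i) \in \R^n$, and in fact is completely determined by the ternary sign vector $\sigma_i \in \{-,0,+\}^{2^{n-1}}$ whose entry at $\x_{-i} \in \{-1,+1\}^{n-1}$ is $\text{sgn}(\t{\w{i}}\x_{-i} - b_i)$: a pair $(\x_{-i},+1)$ lies in $S_i$ iff that sign is $\geq 0$, and $(\x_{-i},-1)$ lies in $S_i$ iff that sign is $\leq 0$. Counting realizable $\sigma_i$ is therefore a hyperplane-arrangement problem: how many faces (of all dimensions) does an arrangement of $2^{n-1}$ hyperplanes in $\R^n$ have? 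A standard arrangement / Sauer-Shelah / Cover-Winder count on linear threshold functions over $\{-1,+1\}^{n-1}$, together with an accounting of the equality (tie) cases, yields $|\{S_i(\G) : \G \in \HS\}| \leq 2^{n(n+1)/2}$.

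Aggregating and absorbing an extra factor of $2$ to cover the boundary behavior of the intersection gives
\[
d(\HS) \;\leq\; 2 \cdot \bigl(2^{n(n+1)/2}\bigr)^n \;=\; 2^{n \cdot n(n+1)/2 + 1}.
\]
The second inequality $2^{n \cdot n(n+1)/2 + 1} \leq 2^{n^3}$ is routine: one checks $n^2(n+1)/2 + 1 \leq n^3$, i.e.\ $n^2 + 2 \leq n^3$, which holds for all $n \geq 2$.

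The main obstacle is pinning down the exact constant in the per-player bound, $2^{n(n+1)/2}$. A naive arrangement count only gives $O((2^{n-1})^n) = 2^{O(n(n-1))}$ per player, which would suffice for the looser conclusion $d(\HS) \leq 2^{n^3}$ but is not quite the form of the first inequality; tightening requires exploiting that the hyperplanes come from the very structured point set $\{-1,+1\}^{n-1}$ (one can, for instance, invoke a Muroga-style integer representation of LTFs, or enumerate realizable ternary sign patterns by splitting dichotomies into a ``strict'' part counted by Sauer-Shelah and a ``tie'' part handled by perturbation). Once the per-player bound is in hand, the remainder of the proof is a product over players and a two-line arithmetic check.
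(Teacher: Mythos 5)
Your decomposition $\NE(\G)=\bigcap_i S_i(\G)$ and the product bound $d(\HS)\leq\prod_i\left|\{S_i(\G):\G\in\HS\}\right|$ is a genuinely different route from the paper's, and it does rigorously deliver the \emph{second} inequality: via the bijection $\x\mapsto\y=(x_i\x_\minus{i},-x_i)$, each $S_i$ is the preimage of a homogeneous linear threshold function on $\{-1,+1\}^n$, Winder's bound gives at most $2^{n^2}$ of these per player, and the product is $2^{n^3}$. (The paper itself makes exactly this observation in its exhaustive-search subsection, and explicitly contrasts it with Theorem \ref{ThmVCDim} as the looser of the two bounds.) The paper's actual proof of the first inequality is entirely different: it encodes $\iverson{\x\notin\NE(\G)}$ as a \emph{single} one-hidden-layer linear-threshold network whose inputs are the $n+{n\choose 2}=\frac{n(n+1)}{2}$ monomials $x_j$ and $x_ix_j$, with one hidden unit per player, and then invokes VC-dimension bounds for such networks from \citet{sontag98vcdimension}. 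The point of that construction is that the counting is done jointly over the whole parameter vector of the network rather than player by player.

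The gap you flag at the end is not a missing constant that further care would recover; the per-player bound $\left|\{S_i(\G)\}\right|\leq 2^{n(n+1)/2}$ is false for large $n$. The homogeneous threshold functions $\y\mapsto\iverson{\t{\v}\y\geq 0}$ on $\{-1,+1\}^n$ already contain, via the bias coordinate, all general threshold functions on $n-1$ variables, and the number of threshold functions on $k$ Boolean variables is $2^{k^2(1+o(1))}$ (Zuev's asymptotic matching Winder's upper bound), so $\left|\{S_i(\G)\}\right|=2^{n^2(1-o(1))}$, which exceeds $2^{n(n+1)/2}=2^{(n^2+n)/2}$. Consequently $\prod_i\left|\{S_i(\G)\}\right|=2^{n^3(1-o(1))}$, which swamps the target $2^{n\frac{n(n+1)}{2}+1}\approx 2^{n^3/2}$: no per-player argument of this product form, however clever about ties, Muroga-style integer weights, or Sauer--Shelah refinements, can prove the first inequality, because the quantity you are upper-bounding is itself too large. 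The unexplained extra factor of $2$ ``for boundary behavior'' is a further loose end, but it is moot given the above. To get the tighter exponent you must exploit the joint structure across players, which is what the paper's quadratic-feature neural-network encoding is designed to do.
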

\begin{proof}
The logarithm of the number of possible pure-strategy Nash equilibria sets supported by $\HS$ (i.e., that can be produced by some game in $\HS$) is upper bounded by the VC-dimension of the class of neural networks with a single hidden layer of $n$ units and $n+ {n \choose 2}$ input units, linear threshold activation functions, and constant output weights.

For every LIG $\G=(\W,\b)$ in $\HS$, define the neural network with a single layer of $n$ hidden units, $n$ of the inputs corresponds to the linear terms $x_1,\ldots,x_n$ and ${n \choose 2}$ corresponds to the quadratic polynomial terms $x_ix_j$ for all pairs of players $(i,j)$, $1\leq i < j \leq n$. For every hidden unit $i$, the weights corresponding to the linear terms $x_1,\ldots,x_n$ are $-b_1,\ldots,-b_n$, respectively, while the weights corresponding to the quadratic terms $x_ix_j$ are $-w_{ij}$, for all pairs of players $(i,j)$, $1\leq i < j \leq n$, respectively. The weights of the bias term of all the hidden units are set to $0$. All $n$ output weights are set to $1$ while the weight of the output bias term is set to $0$. The output of the neural network is $\iverson{\x \notin \NE(\G)}$. Note that we define the neural network to classify non-equilibrium as opposed to equilibrium to keep the convention in the neural network literature to define the threshold function to output $0$ for input $0$. The alternative is to redefine the threshold function to output $1$ instead for input $0$.

Finally, we use the VC-dimension of neural networks~\citep{sontag98vcdimension}.
\qedhere
\end{proof}

From Theorems \ref{ThmGen} and \ref{ThmVCDim}, we state the
generalization bounds for LIGs.
\begin{corollary}
The following holds with $\DD$-probability at least $1-\delta$:
\begin{equation*}
\textstyle{ \Lb(\thetah) \geq \Lb(\thetab) - \left(\log\max(2m,\frac{1}{1-\qb}) + n\log2\right)\sqrt{\frac{2}{m}\left(n^3\log2+\log\frac{4}{\delta}\right)} } \; ,
\end{equation*}
\noindent where $\HS$ is the class of \emph{LIGs}, in which case $\PS \equiv \{(\G,q) \mid
\G\in\HS \wedge 0<\pi(\G)<q<1\}$ (Definition~\ref{DefHS}) becomes the hypothesis space of non-trivial identifiable \emph{LIGs} and mixture parameters.
\end{corollary}

\section{Algorithms} \label{SecAlgorithm}

In this section, we approximate the maximum likelihood problem by
maximizing the number of observed equilibria in the data, suitable for
a hypothesis space of games with small true proportion of
equilibria.
We then present our convex loss minimization approach.
We also discuss baseline methods such as sigmoidal approximation and exhaustive search.

But first, let us discuss some negative results that justifies the use
of simple approaches.  \citet{Irfan13} showed that counting the number of Nash equilibria in
LIGs is \#P-complete; thus, computing the log-likelihood function, and
therefore MLE, is NP-hard.\footnote{This is not a disadvantage relative to probabilistic graphical models, since computing the log-likelihood function is also NP-hard for Ising models and Markov random fields in general, while learning is also NP-hard for Bayesian networks.}
General approximation techniques such as pseudo-likelihood estimation
do not lead to tractable methods for learning LIGs.\footnote{We show that evaluating the pseudo-likelihood
  function for our generative model is NP-hard. Consider a
  non-trivial LIG $\G=(\W,\b)$. 
Furthermore, assume $\G$ has a
  single \emph{non-absolutely-indifferent} player $i$ and
  \emph{absolutely-indifferent} players $\forall j\neq i$; that is, assume that $(\w{i},b_i) \neq \zero$ and $(\forall j\neq i){\rm\ }(\w{j},b_j)=\zero$ (See Definition \ref{DefAbsIndifferent}).
  Let $f_i(\x_\minus{i}) \equiv \t{\w{i}}\x_\minus{i}-b_i$, we have $\iverson{\x \in \NE(\G)} =
  \iverson{x_i f_i(\x_\minus{i})\geq 0}$ and therefore $p_{(\G,q)}(\x)
  = q \frac{\iverson{x_i f_i(\x_\minus{i})\geq 0}}{|\NE(\G)|} + (1-q)
  \frac{1 - \iverson{x_i f_i(\x_\minus{i})\geq 0}}{2^n -
    |\NE(\G)|}$. The result follows because computing $|\NE(\G)|$ is
  \#P-complete, even for this specific instance of a single \emph{non-absolutely-indifferent} player~\citep{Irfan13}.}
From an optimization perspective, the log-likelihood function is not continuous because of the number of equilibria.
Therefore, we cannot rely on concepts such as Lipschitz
continuity.\footnote{To prove that counting the number of equilibria is not
  (Lipschitz) continuous, we show how small changes in the parameters $\G=(\W,\b)$ can produce big changes in $|\NE(\G)|$. For instance, consider two games $\G_k=(\W_k,\b_k)$, where $\W_1=\zero,\b_1=\zero,|\NE(\G_1)|=2^n$ and $\W_2=\varepsilon(\one\t{\one}-\I),\b_2=\zero,|\NE(\G_2)|=2$ for $\varepsilon>0$. For $\varepsilon \rightarrow 0$, any $\ell_p$-norm $\|\W_1 - \W_2\|_p \rightarrow 0$ but $|\NE(\G_1)| - |\NE(\G_2)| = 2^n-2$ remains constant.}
Furthermore, bounding the number of equilibria by known bounds for
Ising models leads to trivial bounds.\footnote{The log-partition
  function of an Ising model is a trivial bound for counting the
  number of equilibria. To see this, let $f_i(\x_\minus{i}) \equiv \t{\w{i}}\x_\minus{i}-b_i$, $|\NE(\G)| = \sum_\x\prod_i\iverson{x_i f_i(\x_\minus{i})\geq 0} \leq
\sum_\x\prod_i{\exp{x_i f_i(\x_\minus{i})}} =
\sum_\x{\exp{\t{\x}\W\x-\t{\b}\x}} =
\Z(\frac{1}{2}(\W+\t{\W}),\b)$, where $\Z$ denotes the partition
function of an Ising model. Given the convexity of $\Z$~\citep{Koller09}, and that the gradient vanishes at $\W=\zero,\b=\zero$, we know that $\Z(\frac{1}{2}(\W+\t{\W}),\b)\geq 2^n$, which is the maximum $|\NE(\G)|$.
}

\subsection{An Exact Quasi-Linear Method for General Games: Sample-Picking} \label{SubSecSamplePicking}

As a first approach, consider solving the maximum likelihood estimation problem in Eq.~\eqref {MLEGame} by an exact exhaustive search algorithm.
This algorithm iterates through all possible Nash equilibria sets, i.e., for $s = 0,\dots,2^n$, we generate all possible sets of size $s$ with elements from the joint-action space $\{-1,+1\}^n$.
Recall that there exist ${2^n \choose s}$ of such sets of size $s$ and since $\sum_{s=0}^{2^n}{2^n \choose s} = 2^{2^n}$ the search space is super-exponential in the number of players $n$.

\begin{algorithm}[tb]
\begin{small}
\captionx{Sample-Picking for General Games}
\label{SamplePicking} 
\begin{algorithmic}
    \STATE {\bfseries Input:} Data set $\D = \x\si{1},\dots,\x\si{m}$.
    \STATE Compute the unique samples $\y\si{1},\dots,\y\si{U}$ and their frequency $\ph\si{1},\dots,\ph\si{U}$ in the data set $\D$.
    \STATE Sort joint actions by their frequency such that $\ph\si{1} \geq \ph\si{2} \geq \dots \geq \ph\si{U}$.
    \FOR{ each unique sample $k = 1,\dots,U$}
        \STATE Define $\G_k$ by the Nash equilibria set $\NE(\G_k) = \{\y\si{1},\dots,\y\si{k}\}$.
        \STATE Compute the log-likelihood $\Lh(\G_k,\qh_k)$ in Eq.~\eqref {MLEGame} (Note that $\qh_k = \pih(\G) = \frac{1}{m}{(\ph\si{1}+\dots+\ph\si{k})}$, $\pi(\G) = \frac{k}{2^n}$).
    \ENDFOR
    \STATE {\bfseries Output:} The game $\G_{\widehat{k}}$ such that $\widehat{k} = \argmax_k\Lh(\G_k,\qh_k)$.
\end{algorithmic}
\end{small}
\end{algorithm}

Based on few observations, we can obtain an $\O(m\log m)$ algorithm for $m$ samples.
First, note that the above method does not constrain the set of Nash equilibria in any fashion.
Therefore, only joint actions that are observed in the data are candidates of being Nash equilibria in order to maximize the log-likelihood.
This is because the introduction of an unobserved joint action will increase the true proportion of equilibria without increasing the empirical proportion of equilibria and thus leading to a lower log-likelihood in Eq.~\eqref {MLEGame}.
Second, given a fixed number of Nash equilibria $k$, the best strategy would be to pick the $k$ joint actions that appear more frequently in the observed data.
This will maximize the empirical proportion of equilibria, which will maximize the log-likelihood.
Based on these observations, we propose Algorithm~\ref{SamplePicking}.

As an aside note, the fact that general games do not constrain the set of Nash equilibria, makes the method more likely to over-fit.
On the other hand, LIGs will potentially include unobserved
equilibria given the linearity constraints in the search space, and
thus they would be less likely to
over-fit.

\subsection{An Exact Super-Exponential Method for LIGs: Exhaustive Search} \label{SubSecExhaustive}

Note that in the previous subsection, we search in the space of all possible games, not only the LIGs.
First note that \emph{sample-picking} for linear games is NP-hard, i.e., at any iteration of \emph{sample-picking}, checking whether the set of Nash equilibria $\NE$ corresponds to an LIG or not is equivalent to the following constraint satisfaction problem with linear constraints:
\begin{equation} \label{CSP}
\begin{array}{@{}l@{}}
  \displaystyle{\min_{\W,\b}{{\rm\ }1}} \\
  \vspace{-0.12in} \\
  \begin{array}{@{}l@{\hspace{0.08in}}l@{\hspace{0.05in}}l@{\hspace{0.05in}}l@{\hspace{0.05in}}l@{}}
    {\rm s.t.} & (\forall \x \in \NE) & x_1 (\t{\w{1}}\x_\minus{1}-b_1) \geq 0 & \wedge \dots \wedge & x_n (\t{\w{n}}\x_\minus{n}-b_n) \geq 0 \; ,\\
     & (\forall \x \notin \NE) & x_1 (\t{\w{1}}\x_\minus{1}-b_1) < 0 & \vee \dots \vee & x_n (\t{\w{n}}\x_\minus{n}-b_n) < 0 \; .
  \end{array}
\end{array}
\end{equation}
Note that Eq.~\eqref {CSP} contains ``or'' operators in order to account for the non-equilibria.
This makes the problem of finding the $(\W,\b)$ that satisfies such conditions NP-hard for a non-empty complement set $\{-1,+1\}^n - \NE$.
Furthermore, since \emph{sample-picking} only consider observed equilibria, the search is not optimal with respect to the space of LIGs.

Regarding a more refined approach for enumerating LIGs only, note that in an LIG each player separates hypercube vertices with a linear function, i.e., for $\v \equiv (\w{i},b_i)$ and $\y \equiv (x_i \x_\minus{i},-x_i) \in \{-1,+1\}^n$ we have $x_i (\t{\w{i}}\x_\minus{i}-b_i) = \t{\v}\y$.
Assume we assign a binary label to each vertex $\y$, then note that not all possible labelings are linearly separable.
Labelings which are linearly separable are called \emph{linear threshold functions (LTFs)}.
A lower bound of the number of LTFs was first provided in \citet{Muroga65}, which showed that the number of LTFs is at least $\alpha(n) \equiv 2^{0.33048 n^2}$.
Tighter lower bounds were shown later in \citet{Yamija65} for $n \geq 6$ and in \citet{Muroga66} for $n \geq 8$.
Regarding an upper bound, \citet{Winder60} showed that the number of LTFs is at most $\beta(n) \equiv 2^{n^2}$.
By using such bounds for all players, we can conclude that there is at least ${\alpha(n)}^n = 2^{0.33048 n^3}$ and at most ${\beta(n)}^n = 2^{n^3}$ LIGs (which is indeed another upper bound of the VC-dimension of the class of LIGs; the bound in Theorem \ref{ThmVCDim} is tighter and uses bounds of the VC-dimension of neural networks).
The bounds discussed above would bound the time-complexity of a search algorithm if we could easily enumerate all LTFs for a single player.
Unfortunately, this seems to be far from a trivial problem.
By using results in \citet{Muroga71}, a weight vector $\v$ with integer entries such that $(\forall i){\rm\ }|v_i| \leq \beta(n) \equiv {(n+1)}^{(n+1)/2}/2^n$ is sufficient to realize all possible LTFs.
Therefore we can conclude that enumerating LIGs takes at most ${(2\beta(n)+1)}^{n^2} \approx {(\frac{\sqrt{n+1}}{2})}^{n^3}$ steps, and we propose the use of this method only for $n \leq 4$.

For $n=4$ we found that the number of possible PSNE sets induced by LIGs is 23,706.
Experimentally, we did not find differences between this method and \emph{sample-picking} since most of the time, the model with maximum likelihood was an LIG.

\subsection{From Maximum Likelihood to Maximum Empirical Proportion of Equilibria}

We approximately perform maximum likelihood estimation for LIGs, by maximizing the \emph{empirical proportion of equilibria}, i.e., the equilibria in the observed data.
This strategy allows us to avoid computing $\pi(\G)$ as in Eq.~\eqref {PiDef} for maximum likelihood estimation (given its dependence on $|\NE(\G)|$).
We propose this approach for games with small true proportion of equilibria with high probability, i.e., with probability at least $1-\delta$, we have $\pi(\G) \leq \frac{\kappa^n}{\delta}$ for $1/2\leq\kappa<1$.
Particularly, we will show in Section \ref{SecSmallTPE} that for LIGs we have $\kappa=3/4$.
Given this, our approximate problem relies on a bound of the log-likelihood that holds with high probability.
We also show that under very mild conditions, the parameters $(\G,q)$ belong to the hypothesis space of the original problem with high probability.

First, we derive bounds on the log-likelihood function.
\begin{lemma} \label{LemBounds}
Given a non-trivial game $\G$ with $0<\pi(\G)<\pih(\G)$, the KL divergence in the log-likelihood function in Eq.~\eqref {MLEGame} is bounded as follows:
\begin{equation*}
-\pih(\G)\log\pi(\G) - \log2 < KL(\pih(\G)\|\pi(\G)) < -\pih(\G)\log\pi(\G) \; .
\end{equation*}
\end{lemma}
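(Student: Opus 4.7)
The plan is to expand $KL(\pih(\G)\|\pi(\G))$ directly from its definition in eq.\eqref{KL} and isolate the target term $-\pih(\G)\log\pi(\G)$ that appears on both sides of the desired inequality. Writing $\pih \equiv \pih(\G)$ and $\pi \equiv \pi(\G)$ for brevity, I would note
\[
KL(\pih\|\pi) = -\pih\log\pi + \bigl[\pih\log\pih + (1-\pih)\log\tfrac{1-\pih}{1-\pi}\bigr],
\]
so the lemma reduces to showing $-\log 2 < g(\pih,\pi) < 0$, where $g(\pih,\pi) \equiv \pih\log\pih + (1-\pih)\log\frac{1-\pih}{1-\pi}$.

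The second step is to rewrite $g$ in terms of the binary entropy $H(\pih) \equiv -\pih\log\pih - (1-\pih)\log(1-\pih)$, obtaining the clean identity
\[
g(\pih,\pi) = -H(\pih) - (1-\pih)\log(1-\pi).
\]
From here, both bounds follow from elementary monotonicity facts. For the \emph{lower} bound I would use $H(\pih) \leq \log 2$ together with $-(1-\pih)\log(1-\pi) > 0$ (which holds because $\pi > 0$ makes $\log(1-\pi) < 0$, and non-triviality together with $\pi < \pih$ gives $1-\pih < 1$, hence $1-\pih > 0$ in the regime that matters). For the \emph{upper} bound I would exploit the hypothesis $\pi < \pih$, which yields $\log(1-\pi) > \log(1-\pih)$, so
\[
-(1-\pih)\log(1-\pi) < -(1-\pih)\log(1-\pih) \leq H(\pih),
\]
the last inequality being just $-\pih\log\pih \geq 0$. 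Combining gives $g(\pih,\pi) < 0$, which is the desired strict upper bound.

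I expect no significant obstacle in the analytic content; the proof is essentially a rearrangement plus two one-line monotonicity arguments. The only point requiring care is tracking strictness of the inequalities and the edge cases where $\pih \in \{0,1\}$ or $\pih = \pi$, which are excluded by the lemma's hypotheses ($0 < \pi < \pih$ and non-triviality), ensuring that every ``$\leq$'' invoked above can actually be upgraded to ``$<$'' wherever the statement demands it.
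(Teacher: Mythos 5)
Your proof is correct and takes a genuinely different route from the paper. You decompose $KL(\pih\|\pi)$ algebraically as $-\pih\log\pi + g(\pih,\pi)$ with $g(\pih,\pi) = -H(\pih) - (1-\pih)\log(1-\pi)$, and then squeeze $g$ between $-\log 2$ and $0$ using two elementary monotonicity facts; in particular your upper bound makes explicit and essential use of the hypothesis $\pi < \pih$ (to get $-(1-\pih)\log(1-\pi) < -(1-\pih)\log(1-\pih) \leq H(\pih)$). The paper instead argues geometrically: it treats $KL(\pih\|\pi)$ as a convex function of $\pih$, upper-bounds it by the chord through its values at $\pih\to 0$ and $\pih\to 1$, and obtains the $\log 2$ gap by locating the point $\pih^* = 1/(2-\pi)$ where the derivative matches the chord's slope and taking $\pi\to 0$. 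Your approach buys rigor and transparency: the paper's chord argument takes $\lim_{\pih\to 0}KL(\pih\|\pi) = 0$, whereas the correct limit is $-\log(1-\pi) > 0$, so the line $-\pih\log\pi$ is \emph{not} the secant of the convex function and the convexity argument alone does not deliver the claimed upper bound (nor does it explain where $\pi<\pih$ is used); your identity-plus-monotonicity argument sidesteps this entirely and tracks strictness cleanly. What the paper's approach buys, in principle, is the interpretation of $\log 2$ as the worst-case gap between the KL curve and a linear majorant as $\pi\to 0$, which motivates why the bound is tight in the regime $n\to\infty$ discussed after the lemma.

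One small caveat: you claim the edge case $\pih = 1$ is excluded by the hypotheses, but $0<\pi<\pih$ does not exclude $\pih=1$, and there the strict upper bound genuinely degenerates to equality ($KL(1\|\pi) = -\log\pi = -\pih\log\pi$, i.e.\ $g=0$). This is a defect of the lemma's statement rather than of your method --- the paper's proof does not handle it either, and only the lower bound (which your argument does establish at $\pih=1$, since $g=0>-\log 2$) is used downstream --- but you should either restrict to $\pih<1$ or weaken the upper inequality to $\leq$ rather than asserting the case is vacuous.
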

\begin{proof}
Let $\pi \equiv \pi(\G)$ and $\pih \equiv \pih(\G)$. Note that
$\alpha(\pi) \equiv \lim_{\pih \rightarrow
  0}{KL(\pih\|\pi)}=0$,\footnote{Here we are making the implicit
  assumption that $\pi < \pih$. This is sensible. For example, in most models learned from the congressional voting
  data using a variety of learning algorithms we propose, the total
  number of PSNE would range roughly from 100K---1M; using base 2,
  this is roughly from $2^{16}$---$2^{20}$. This may look like a huge number
  until one recognizes that there could potential be $2^{100}$
  PSNE. Hence, we have that $\pi$ would be in the range of
  $2^{-84}$---$2^{-80}$. In fact, we believe this holds more broadly
  because, as a general objective, we want models that can capture as
  many PSNE behavior as possible but no more than needed, which tend
  to reduce the PSNE of the learned models, and thus their $\pi$
  values, while simultaneously trying to increase $\pih$ as much as possible.} and $\beta(\pi) \equiv \lim_{\pih \rightarrow 1}{KL(\pih\|\pi)} = -\log\pi \leq n\log2$. Since the function is convex we can upper-bound it by $\alpha(\pi)+(\beta(\pi)-\alpha(\pi))\pih = -\pih\log\pi$.

To find a lower bound, we find the point in which the derivative of the original function is equal to the slope of the upper bound, i.e., $\frac{\partial KL(\pih\|\pi)}{\partial\pih} = \beta(\pi)-\alpha(\pi) = -\log\pi$, which gives $\pih^*=\frac{1}{2-\pi}$. Then, the maximum difference between the upper bound and the original function is given by $\lim_{\pi \rightarrow 0}{-\pih^*\log\pi - KL(\pih^*\|\pi)} = \log2$.
\qedhere
\end{proof}

Note that the lower and upper bounds are very informative when $\pi(\G) \rightarrow 0$ (or in our setting when $n \rightarrow +\infty$), since $\log2$ becomes small when compared to $-\log\pi(\G)$, as shown in Figure \ref{Gap}.

\begin{figure}
\begin{center}
\includegraphics[width=0.25\linewidth]{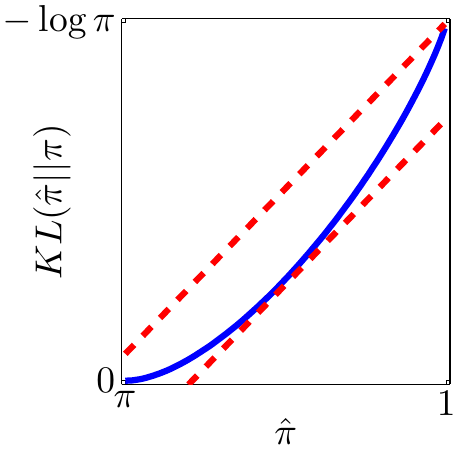}\includegraphics[width=0.25\linewidth]{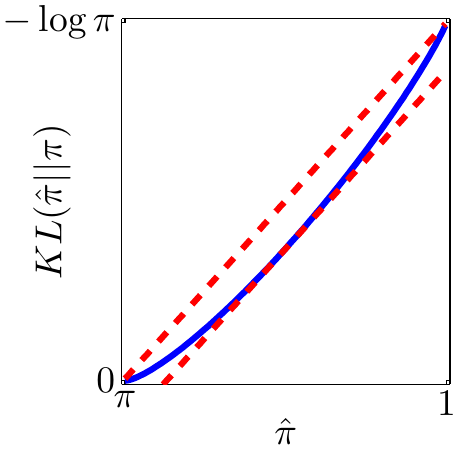}\includegraphics[width=0.25\linewidth]{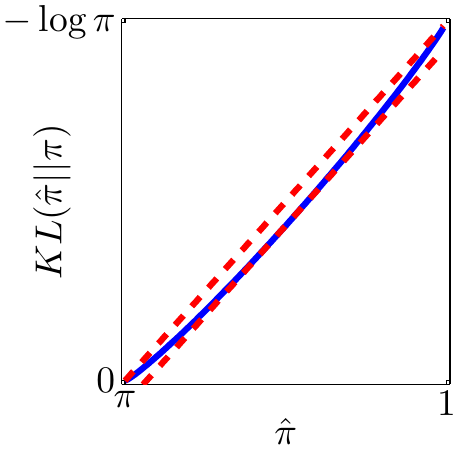}
\end{center}
\vspace{-0.3in}
\captionx{KL divergence (blue) and bounds derived in Lemma \ref{LemBounds} (red) for $\pi=(3/4)^n$ where $n=9$ (left), $n=18$ (center) and $n=36$ (right). Note that the bounds are very informative when $n \rightarrow +\infty$ (or equivalently when $\pi \rightarrow 0$).}
\vspace{-0.1in}
\label{Gap}
\end{figure}

Next, we derive the problem of maximizing the empirical proportion of equilibria from the maximum likelihood estimation problem.
\begin{theorem}
Assume that with probability at least $1-\delta$ we have $\pi(\G) \leq \frac{\kappa^n}{\delta}$ for $1/2\leq\kappa<1$.
Maximizing a lower bound (with high probability) of the log-likelihood in Eq.~\eqref {MLEGame} is equivalent to maximizing the empirical proportion of equilibria:
\begin{equation} \label{MEPEGame}
\max_{\G \in \HS}{{\rm\ }\pih(\G)} \; ,
\end{equation}
\noindent furthermore, for all games $\G$ such that $\pih(\G) \geq \gamma$ for
some $0<\gamma<1/2$, for sufficiently large
$n>\log_\kappa{(\delta\gamma)}$ and optimal mixture parameter
$\qh=\min(\pih(\G),1-\frac{1}{2m})$, we have $(\G,\qh) \in \PS$, where
$\PS = \{(\G,q) \mid \G\in\HS \wedge 0<\pi(\G)<q<1\}$ is the
hypothesis space of non-trivial identifiable games and mixture parameters.
\end{theorem}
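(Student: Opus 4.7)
My plan is to split the theorem into two essentially independent claims and verify each in turn. The first reduces, after substituting the optimal $\qh$, to a lower bound on $\Lh(\G,\qh)$ that is an affine function of $\pih(\G)$ with a strictly positive (game-independent) slope; the second is a pointwise check that the conditions defining $\PS$ are satisfied.

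For the first claim, I would start from the maximum-likelihood expression in eq.\eqref{MLEGame} and plug in $\qh = \min(\pih(\G),1-\tfrac{1}{2m})$. In the main regime $\pih(\G) \leq 1-\tfrac{1}{2m}$, the term $KL(\pih(\G)\|\qh)$ vanishes by properties of the KL divergence, leaving $\Lh(\G,\qh) = KL(\pih(\G)\|\pi(\G)) - n\log 2$. Applying the lower bound from Lemma \ref{LemBounds} and then the hypothesis $\pi(\G) \leq \kappa^n/\delta$ (which holds on an event of probability at least $1-\delta$), I obtain
\[
\Lh(\G,\qh) \;>\; \pih(\G)\bigl(n\log(1/\kappa) + \log\delta\bigr) \;-\; (n+1)\log 2.
\]
For $n$ sufficiently large (specifically $n > \log_\kappa\delta$), the coefficient multiplying $\pih(\G)$ is a strictly positive constant that does not depend on $\G$, so maximizing this high-probability lower bound over $\G \in \HS$ coincides with maximizing $\pih(\G)$, which is eq.\eqref{MEPEGame}. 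The boundary case $\pih(\G) > 1-\tfrac{1}{2m}$ contributes at most an additive $\G$-independent correction and therefore does not disturb the monotonicity in $\pih(\G)$.

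For the second claim, I would first rewrite $n > \log_\kappa(\delta\gamma)$ as $\kappa^n < \delta\gamma$ (since $\log\kappa < 0$ implies $\log_\kappa$ is strictly decreasing). Combined with the high-probability bound, this yields $\pi(\G) \leq \kappa^n/\delta < \gamma \leq \pih(\G)$. Positivity $\pi(\G) > 0$ follows from $\pih(\G) \geq \gamma > 0$, since by eq.\eqref{PihDef} at least one observed joint-action must then lie in $\NE(\G)$, giving $\pi(\G) \geq 1/2^n$. For the mixture parameter, I examine the two branches of the $\min$: if $\qh = \pih(\G)$, then $\pi(\G) < \gamma \leq \qh < 1$; if $\qh = 1-\tfrac{1}{2m}$, then $\pi(\G) < \gamma < 1/2 \leq 1-\tfrac{1}{2m} = \qh < 1$. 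In either case $0 < \pi(\G) < \qh < 1$, hence $(\G,\qh) \in \PS$.

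I do not anticipate a genuine obstacle: the argument is essentially careful bookkeeping around the lower bound of Lemma \ref{LemBounds} and the tail bound on $\pi(\G)$. The one point requiring a bit of attention is the shrinkage branch $\qh = 1-\tfrac{1}{2m}$ in the first claim, where $KL(\pih(\G)\|\qh)$ is no longer identically zero; however, in that branch the extra term is uniformly small in $m$ and, crucially, independent of $\G$, so it does not affect the argument that the lower bound is maximized precisely by maximizing $\pih(\G)$.
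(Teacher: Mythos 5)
Your proposal is correct and follows essentially the same route as the paper's proof: apply the lower bound of Lemma \ref{LemBounds} together with the tail bound $\pi(\G)\leq\kappa^n/\delta$, dispose of the $KL(\pih(\G)\\|\qh)$ term by cases on the $\min$, and verify $0<\pi(\G)<\qh<1$ via $\kappa^n/\delta<\gamma$ and the observation that $\gamma<1/2\leq 1-\tfrac{1}{2m}$. Your explicit remark that the coefficient of $\pih(\G)$ is positive once $n>\log_\kappa\delta$ is a small point of added care that the paper leaves implicit (it is subsumed by $n>\log_\kappa(\delta\gamma)$).
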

\begin{proof}
By applying the lower bound in Lemma \ref{LemBounds} in Eq.~\eqref {MLEGame} to non-trivial games, we have $\Lh(\G,\qh) =
KL(\pih(\G)\|\pi(\G)) - KL(\pih(\G)\|\qh) - n\log2 >
-\pih(\G)\log\pi(\G) - KL(\pih(\G)\|\qh) - (n+1)\log2$.
Since $\pi(\G) \leq \frac{\kappa^n}{\delta}$, we have $-\log\pi(\G) \geq -\log\frac{\kappa^n}{\delta}$.
Therefore $\Lh(\G,\qh) > -\pih(\G)\log\frac{\kappa^n}{\delta} - KL(\pih(\G)\|\qh) - (n+1)\log2$.
Regarding the term $KL(\pih(\G)\|\qh)$, if $\pih(\G)<1 \Rightarrow KL(\pih(\G)\|\qh) = KL(\pih(\G)\|\pih(\G)) = 0$, and if $\pih(\G)=1 \Rightarrow KL(\pih(\G)\|\qh) = KL(1\|1-\frac{1}{2m}) = -\log(1-\frac{1}{2m}) \leq \log2$ and approaches 0 when $m \rightarrow +\infty$.
Maximizing the lower bound of the log-likelihood becomes $\max_{\G \in \HS}{\pih(\G)}$ by removing the constant terms that do not depend on $\G$.

In order to prove $(\G,\qh) \in \PS$ we need to prove $0<\pi(\G)<\qh<1$.
For proving the first inequality $0<\pi(\G)$, note that $\pih(\G) \geq \gamma>0$, and therefore $\G$ has at least one equilibria.
For proving the third inequality $\qh<1$, note that $\qh=\min(\pih(\G),1-\frac{1}{2m})<1$.
For proving the second inequality $\pi(\G)<\qh$, we need to prove $\pi(\G)<\pih(\G)$ and $\pi(\G)<1-\frac{1}{2m}$.
Since $\pi(\G) \leq \frac{\kappa^n}{\delta}$ and $\gamma \leq \pih(\G)$, it suffices to prove $\frac{\kappa^n}{\delta}<\gamma \Rightarrow \pi(\G)<\pih(\G)$.
Similarly we need to prove $\frac{\kappa^n}{\delta}<1-\frac{1}{2m} \Rightarrow \pi(\G)<1-\frac{1}{2m}$.
Putting both together, we have $\frac{\kappa^n}{\delta}<\min(\gamma,1-\frac{1}{2m})=\gamma$ since $\gamma<1/2$ and $1-\frac{1}{2m} \geq 1/2$.
Finally, $\frac{\kappa^n}{\delta}<\gamma \Leftrightarrow n>\log_\kappa{(\delta\gamma)}$.
\qedhere
\end{proof}

\subsection{A Non-Concave Maximization Method: Sigmoidal Approximation} \label{SubSecSigmoidal}

A very simple optimization approach can be devised by using a sigmoid in order to approximate the 0/1 function $\iverson{z\geq 0}$ in the maximum likelihood problem of Eq.~\eqref {MLEGame} as well as when maximizing the empirical proportion of equilibria as in Eq.~\eqref {MEPEGame}.
We use the following sigmoidal approximation:
\begin{equation} \label{Sigmoid}
\textstyle{ \iverson{z\geq 0} \approx H_{\alpha,\beta}(z) \equiv \frac{1}{2}(1 + \tanh(\frac{z}{\beta} - {\rm arctanh}(1-2\alpha^{1/n}))) } \; .
\end{equation}
The additional term $\alpha$ ensures that for $\G=(\W,\b),\W=\zero,\b=\zero$ we get $\iverson{\x\in \NE(\G)} \approx H_{\alpha,\beta}(0)^n = \alpha$.
We perform gradient ascent on these objective functions that have many local maxima.
Note that when maximizing the ``sigmoidal'' likelihood, each step of the gradient ascent is NP-hard due to the ``sigmoidal'' true proportion of equilibria.
Therefore, we propose the use of the sigmoidal maximum likelihood only for $n \leq 15$.

In our implementation, we add an $\ell_1$-norm regularizer $-\rho\|\W\|_1$ where $\rho>0$ to both maximization problems.
The $\ell_1$-norm regularizer encourages sparseness and attempts to lower the generalization error by controlling over-fitting.

\subsection{Our Proposed Approach: Convex Loss Minimization} \label{SubSecCLM}

From an optimization perspective, it is more convenient to minimize a convex objective instead of a sigmoidal approximation in order to avoid the many local minima.

Note that maximizing the empirical proportion of equilibria in Eq.~\eqref {MEPEGame} is equivalent to minimizing the empirical proportion of non-equilibria, i.e., $\min_{\G \in \HS}{(1-\pih(\G))}$.
Furthermore, $1-\pih(\G) = \frac{1}{m}\sum_l{\iverson{\x\si{l}\notin \NE(\G)}}$.
Denote by $\ell$ the 0/1 loss, i.e., $\ell(z)=\iverson{z<0}$.
For LIGs, maximizing the empirical proportion of equilibria in Eq.~\eqref {MEPEGame} is equivalent to solving the loss minimization problem:
\begin{equation} \label{MLoGame}
\min_{\W,\b}{{\rm\ }\frac{1}{m}\sum_l{\max_i{\ell(x_i\si{l}(\t{\w{i}}\x_\minus{i}\si{l}-b_i))}}} \; .
\end{equation}

We can further relax this problem by introducing convex upper bounds of the 0/1 loss.
Note that the use of convex losses also avoids the trivial solution of Eq.~\eqref {MLoGame}, i.e., $\W=\zero,\b=\zero$ (which obtains the lowest log-likelihood as discussed in Remark \ref{RemWorst}).
Intuitively speaking, note that minimizing the logistic loss $\ell(z)=\log(1+\exp{-z})$ will make $z \rightarrow +\infty$, while minimizing the hinge loss $\ell(z)=\max{(0,1-z)}$ will make $z \rightarrow 1$ unlike the 0/1 loss $\ell(z)=\iverson{z<0}$ that only requires $z=0$ in order to be minimized.
In what follows, we develop four efficient methods for solving Eq.~\eqref {MLoGame} under specific choices of loss functions, i.e., hinge and logistic.

In our implementation, we add an $\ell_1$-norm regularizer $\rho\|\W\|_1$ where $\rho>0$ to all the minimization problems.
The $\ell_1$-norm regularizer encourages sparseness and attempts to lower the generalization error by controlling over-fitting.

\subsubsection{Independent Support Vector Machines and Logistic Regression}

We can relax the loss minimization problem in Eq.~\eqref {MLoGame} by using the loose bound $\max_i{\ell(z_i)} \leq \sum_i{\ell(z_i)}$.
This relaxation simplifies the original problem into several independent problems.
For each player $i$, we train the weights $(\w{i},b_i)$ in order to predict independent (disjoint) actions.
This leads to \emph{1-norm SVMs} of~\citet{Bradley98,Zhu03} and $\ell_1$-regularized logistic regression.
We solve the latter with the \emph{$\ell_1$-projection method} of~\citet{Schmidt07b}.
While the training is independent, our goal is not the prediction for independent players but the characterization of joint actions.
The use of these well known techniques in our context is novel, since we interpret the output of SVMs and logistic regression as the parameters of an LIG.
Therefore, we use the parameters to measure empirical and true proportion of equilibria, KL divergence and log-likelihood in our probabilistic model.

\subsubsection{Simultaneous Support Vector Machines}

While converting the loss minimization problem in Eq.~\eqref {MLoGame} by using loose bounds allow to obtain several independent problems with small number of variables, a second reasonable strategy would be to use tighter bounds at the expense of obtaining a single optimization problem with a higher number of variables.

For the hinge loss $\ell(z)=\max{(0,1-z)}$, we have $\max_i{\ell(z_i)} = \max{(0,1-z_1,\dots,1-z_n)}$ and the loss minimization problem in Eq.~\eqref {MLoGame} becomes the following primal linear program:
\begin{equation} \label{SVMPrimal}
\begin{array}{@{}l@{}}
  \displaystyle{\min_{\W,\b,\greekbf{\xi}}{{\rm\ }\frac{1}{m}\sum_l{\xi_l} + \rho\|\W\|_1}} \\
  \vspace{-0.12in} \\
  \begin{array}{@{}l@{\hspace{0.08in}}l@{}l@{\hspace{0.08in}}l@{}}
    {\rm s.t.} & (\forall l,i){\rm\ }x_i\si{l}(\t{\w{i}}\x_\minus{i}\si{l}-b_i) \geq 1 - \xi_l {\rm\ \ , \ \ } & (\forall l){\rm\ }\xi_l \geq 0 \; ,
  \end{array}
\end{array}
\end{equation}
\noindent where $\rho>0$.

Note that Eq.~\eqref {SVMPrimal} is equivalent to a linear program since we can set $\W = \W^+ - \W^-$, $\|\W\|_1 = \sum_{ij}{w_{ij}^+ + w_{ij}^-}$ and add the constraints $\W^+ \geq \zero$ and $\W^- \geq \zero$.
We follow the regular SVM derivation by adding slack variables $\xi_l$ for each sample $l$.
This problem is a generalization of \emph{1-norm SVMs} of~\citet{Bradley98,Zhu03}.

By Lagrangian duality, the dual of the problem in Eq.~\eqref {SVMPrimal} is the following linear program:
\begin{equation} \label{SVMDual}
\begin{array}{@{}l@{}}
  \displaystyle{\max_\greekbf{\alpha}{{\rm\ }\sum_{li}\alpha_{li}}} \\
  \vspace{-0.14in} \\
  \begin{array}{@{}l@{\hspace{0.08in}}l@{}l@{\hspace{0.08in}}l@{}}
    {\rm s.t.} & (\forall i){\rm\ }\|\sum_l{\alpha_{li}x_i\si{l}\x_\minus{i}\si{l}}\|_\infty \leq \rho {\rm\ \ , \ \ } & (\forall l,i){\rm\ }\alpha_{li} \geq 0 \; \; ,\\
     & (\forall i){\rm\ }\sum_l{\alpha_{li}x_i\si{l}} = 0 {\rm\ \ , \ \ } & (\forall l){\rm\ }\sum_i{\alpha_{li}} \leq \frac{1}{m} \; .
  \end{array}
\end{array}
\end{equation}
Furthermore, strong duality holds in this case.
Note that Eq.~\eqref {SVMDual} is equivalent to a linear program since we can transform the constraint $\|\mathbf{c}\|_\infty \leq \rho$ into $-\rho\one \leq \mathbf{c} \leq \rho\one$.

\subsubsection{Simultaneous Logistic Regression}

For the logistic loss $\ell(z)=\log(1+\exp{-z})$, we could use the non-smooth loss $\max_i{\ell(z_i)}$ directly.
Instead, we chose a smooth upper bound,
i.e., $\log(1+\sum_i{\exp{-z_i}})$. The following discussion and
technical lemma provides the reason behind our us of 
this \emph{simultaneous logistic loss}.

Given that any loss $\ell(z)$ is a decreasing function, the following identity holds $\max_i{\ell(z_i)} = \ell(\min_i{z_i})$.
Hence, we can either upper-bound the $\max$ function by the ${\rm logsumexp}$ function or lower-bound the $\min$ function by a negative ${\rm logsumexp}$.
We chose the latter option for the logistic loss for the following reasons:
Claim i of the following technical lemma shows that lower-bounding $\min$ generates a loss that is strictly less than upper-bounding $\max$.
Claim ii shows that lower-bounding $\min$ generates a loss that is strictly less than independently penalizing each player.
Claim iii shows that there are some cases in which upper-bounding $\max$ generates a loss that is strictly greater than independently penalizing each player.
\begin{lemma}
For the logistic loss $\ell(z)=\log(1+\exp{-z})$ and a set of $n>1$ numbers $\{z_1,\dots,z_n\}$:
\begin{equation*}
\begin{array}{@{}l@{\hspace{0.05in}}l@{}}
{\rm i.} & (\forall z_1,\dots,z_n){\rm\ }\max_i{\ell(z_i)} \leq \ell\left(-\log\sum_i{\exp{-z_i}}\right) < \log\sum_i{\exp{\ell(z_i)}} \leq \max_i{\ell(z_i)}+\log n \; ,\\
{\rm ii.} & (\forall z_1,\dots,z_n){\rm\ }\ell\left(-\log\sum_i{\exp{-z_i}}\right) < \sum_i{\ell(z_i)} \; ,\\
{\rm iii.} & (\exists z_1,\dots,z_n){\rm\ }\log\sum_i{\exp{\ell(z_i)}} > \sum_i{\ell(z_i)} \; .
\end{array}
\end{equation*}
\end{lemma}
\begin{proof}
Given a set of numbers $\{a_1,\dots,a_n\}$, the $\max$ function is bounded by the ${\rm logsumexp}$ function by $\max_i{a_i} \leq \log\sum_i{\exp{a_i}} \leq \max_i{a_i}+\log n$~\citep{Boyd06}. Equivalently, the $\min$ function is bounded by $\min_i{a_i}-\log n \leq -\log\sum_i{\exp{-a_i}} \leq \min_i{a_i}$.

These identities allow us to prove two inequalities in Claim i, i.e., $\max_i{\ell(z_i)} = \ell(\min_i{z_i}) \leq \ell\left(-\log\sum_i{\exp{-z_i}}\right)$ and $\log\sum_i{\exp{\ell(z_i)}} \leq \max_i{\ell(z_i)}+\log n$. To prove the remaining inequality $\ell\left(-\log\sum_i{\exp{-z_i}}\right) < \log\sum_i{\exp{\ell(z_i)}}$, note that for the logistic loss $\ell\left(-\log\sum_i{\exp{-z_i}}\right) = \log(1+\sum_i{\exp{-z_i}})$ and $\log\sum_i{\exp{\ell(z_i)}} = \log(n+\sum_i{\exp{-z_i}})$. Since $n>1$, strict inequality holds.

To prove Claim ii, we need to show that $\ell\left(-\log\sum_i{\exp{-z_i}}\right) = \log(1+\sum_i{\exp{-z_i}}) < \sum_i{\ell(z_i)} = \sum_i{\log(1+\exp{-z_i})}$. This is equivalent to $1+\sum_i{\exp{-z_i}} < \prod_i{(1+\exp{-z_i})} = \sum_{\mathbf{c}\in \{0,1\}^n}{\exp{-\t{\mathbf{c}}\mathbf{z}}} = 1+\sum_i{\exp{-z_i}}+\sum_{\mathbf{c}\in \{0,1\}^n,\t{\one}\mathbf{c}>1}{\exp{-\t{\mathbf{c}}\mathbf{z}}}$. Finally, we have $\sum_{\mathbf{c}\in \{0,1\}^n,\t{\one}\mathbf{c}>1}{\exp{-\t{\mathbf{c}}\mathbf{z}}} > 0$ because the exponential function is strictly positive.

To prove Claim iii, it suffices to find set of numbers $\{z_1,\dots,z_n\}$ for which $\log\sum_i{\exp{\ell(z_i)}} = \log(n+\sum_i{\exp{-z_i}}) > \sum_i{\ell(z_i)} = \sum_i{\log(1+\exp{-z_i})}$. This is equivalent to $n+\sum_i{\exp{-z_i}} > \prod_i{(1+\exp{-z_i})}$. By setting $(\forall i){\rm\ }z_i=\log n$, we reduce the claim we want to prove to $n+1 > (1+\frac{1}{n})^n$. Strict inequality holds for $n>1$. Furthermore, note that $\lim_{n \rightarrow +\infty}{(1+\frac{1}{n})^n}=e$.
\qedhere
\end{proof}

Returning to our simultaneous logistic regression formulation, the loss minimization problem in Eq.~\eqref {MLoGame} becomes
\begin{equation} \label{SimulLogReg}
\min_{\W,\b}{{\rm\ }\frac{1}{m}\sum_l{\begin{array}{@{}l@{}}
\log(1+\sum_i{\exp{-x_i\si{l}(\t{\w{i}}\x_\minus{i}\si{l}-b_i)}})
\end{array}} + \rho\|\W\|_1} \; ,
\end{equation}
\noindent where $\rho>0$.

In our implementation, we use the \emph{$\ell_1$-projection method} of~\citet{Schmidt07b} for optimizing Eq.~\eqref {SimulLogReg}.
This method performs a \emph{limited-memory Broyden-Fletcher-Goldfarb-Shanno} (L-BFGS) step in an expanded model (i.e., $\W = \W^+ - \W^-$, $\|\W\|_1 = \sum_{ij}{w_{ij}^+ + w_{ij}^-}$) followed by a projection onto the non-negative orthant to enforce $\W^+ \geq \zero$ and $\W^- \geq \zero$.

\section{On the True Proportion of Equilibria} \label{SecSmallTPE}

In this section, we justify the use of convex loss minimization for learning the structure and parameters of LIGs.
We define \emph{absolute indifference} of players and show that our convex loss minimization approach produces games in which all players are non-absolutely-indifferent.
We then provide a bound of the true proportion of equilibria with high probability.
Our bound assumes independence of weight vectors among players, and applies to a large family of distributions of weight vectors.
Furthermore, we do not assume any connectivity properties of the underlying graph.

Parallel to our analysis, \citet{Daskalakis11} analyzed a different setting: random games which structure is drawn from the Erd\H{o}s-R\'{e}nyi model (i.e., each edge is present independently with the same probability $p$) and utility functions which are random tables.
The analysis in \citet{Daskalakis11}, while more general than ours (which only focus on LIGs), it is at the same time more restricted since it assumes either the Erd\H{o}s-R\'{e}nyi model for random structures or connectivity properties for deterministic structures.

\subsection{Convex Loss Minimization Produces Non-Absolutely-Indifferent Players}

First, we define the notion of \emph{absolute indifference} of players.
Our goal in this subsection is to show that our proposed convex loss algorithms produce LIGs in which all players are non-absolutely-indifferent and therefore every player defines constraints to the true proportion of equilibria.
\begin{definition} \label{DefAbsIndifferent}
Given an LIG $\G=(\W,\b)$, we say a player $i$ is \emph{absolutely indifferent} if and only if $(\w{i},b_i)=\zero$, and \emph{non-absolutely-indifferent} if and only if $(\w{i},b_i) \neq \zero$.
\end{definition}

Next, we concentrate on the first ingredient for our bound of the true proportion of equilibria.
We show that independent and simultaneous SVM and logistic regression produce games in which all players are non-absolutely-indifferent except for some ``degenerate'' cases.
The following lemma applies to independent SVMs for $c\si{l}=0$ and simultaneous SVMs for $c\si{l}=\max(0,\max_{j\neq i}{(1-x_j\si{l}(\t{\w{i}}\x_\minus{i}\si{l}-b_i))})$.
\begin{lemma} \label{LemNonIndSVM}
Given $(\forall l){\rm\ }c\si{l}\geq 0$, the minimization of the hinge training loss $\ellh(\w{i},b_i)=\frac{1}{m}\sum_l{\max(c\si{l},1-x_i\si{l}(\t{\w{i}}\x_\minus{i}\si{l}-b_i))}$ guarantees non-absolutely-indifference of player $i$ except for some ``degenerate'' cases, i.e., the optimal solution $(\w{i}^*,b_i^*)=\zero$ if and only if $(\forall j\neq i){\rm\ }\sum_l{\iverson{x_i\si{l}x_j\si{l}\hns\hns=\hns1}u\si{l}} = \sum_l{\iverson{x_i\si{l}x_j\si{l}\hns\hns=\hns-1}u\si{l}}$ and $\sum_l{\iverson{x_i\si{l}\hns\hns=\hns1}u\si{l}} = \sum_l{\iverson{x_i\si{l}\hns\hns=\hns-1}u\si{l}}$ where $u\si{l}$ is defined as $c\si{l}>1 \Leftrightarrow u\si{l}=0$, $c\si{l}<1 \Leftrightarrow u\si{l}=1$ and $c\si{l}=1 \Leftrightarrow u\si{l}\in [0;1]$.
\end{lemma}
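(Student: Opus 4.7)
The plan is to exploit convexity of the hinge training loss $\ellh$ together with the standard first-order optimality criterion: for a convex function, a point is a (global) minimizer if and only if the zero vector lies in its subdifferential. So I would reduce the lemma to a direct subgradient computation at $(\w{i},b_i)=\zero$.

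First I would expand the per-sample loss $L_l(\w{i},b_i)=\max(c\si{l},1-x_i\si{l}(\t{\w{i}}\x_\minus{i}\si{l}-b_i))$ and evaluate its subdifferential at the origin. At $(\w{i},b_i)=\zero$ the second argument of the max equals $1$, so the three cases in the definition of $u\si{l}$ correspond exactly to the three ways the max can resolve: if $c\si{l}>1$ then the max is attained strictly at $c\si{l}$, the loss is locally constant, and the subgradient contribution of sample $l$ vanishes ($u\si{l}=0$); if $c\si{l}<1$ then the max is attained strictly at $1-x_i\si{l}(\t{\w{i}}\x_\minus{i}\si{l}-b_i)$, which is smooth, with gradient $(-x_i\si{l}\x_\minus{i}\si{l},\,x_i\si{l})$ ($u\si{l}=1$); if $c\si{l}=1$ both arguments tie, and the subdifferential of the max is the convex hull of the two gradients, yielding a scalar $u\si{l}\in[0,1]$ in front of $(-x_i\si{l}\x_\minus{i}\si{l},\,x_i\si{l})$.

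Next, by linearity of the subdifferential for a finite sum of convex functions, the subdifferential of $\ellh$ at $\zero$ is the set of vectors $\bigl(-\tfrac{1}{m}\sum_l u\si{l}x_i\si{l}\x_\minus{i}\si{l},\,\tfrac{1}{m}\sum_l u\si{l}x_i\si{l}\bigr)$ as the $u\si{l}$ range over the admissible sets above. Applying the optimality condition $\zero\in\partial\ellh(\zero)$, the origin is optimal if and only if there exist such $u\si{l}$ satisfying $\sum_l u\si{l}x_i\si{l}x_j\si{l}=0$ for every $j\neq i$ and $\sum_l u\si{l}x_i\si{l}=0$. Splitting each sum by the sign of the binary quantity $x_i\si{l}x_j\si{l}\in\{-1,+1\}$ (and similarly for $x_i\si{l}$) rewrites these equalities exactly as the two balance conditions in the lemma statement; convexity makes the criterion both necessary and sufficient, which gives the ``if and only if''.

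The only subtle step is verifying the three-case description of $u\si{l}$ at $(\w{i},b_i)=\zero$, so the main obstacle is being careful with the subdifferential of $\max(c\si{l},\cdot)$ at a point where ties occur (the $c\si{l}=1$ case) and making sure that the overall optimality condition is indeed existence (over admissible $u\si{l}$) rather than universality. Once that is spelled out, the rest is a mechanical rewriting of $\sum_l u\si{l}x_i\si{l}x_j\si{l}$ and $\sum_l u\si{l}x_i\si{l}$ in terms of indicator functions, matching the lemma verbatim.
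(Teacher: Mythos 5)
Your proposal is correct and follows essentially the same route as the paper's proof: both reduce the claim to the condition $\zero\in\partial\ellh(\zero,0)$, identify the three cases for $u\si{l}$ from how $\max(c\si{l},\cdot)$ resolves when the hinge argument equals $1$ at the origin (the paper phrases this via the representation $\max(\alpha,\beta)=\max_{0\leq u\leq 1}(\alpha+u(\beta-\alpha))$, which is equivalent to your convex-hull-of-active-gradients description), and then rewrite the stationarity conditions $\sum_l u\si{l}x_i\si{l}x_j\si{l}=0$ and $\sum_l u\si{l}x_i\si{l}=0$ using indicators over the signs of $x_i\si{l}x_j\si{l}$ and $x_i\si{l}$. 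Your explicit remark that the condition is existential over the admissible $u\si{l}$ is a correct and welcome clarification of a point the paper leaves implicit.
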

\begin{proof}
Let $f_i(\x_\minus{i}) \equiv \t{\w{i}}\x_\minus{i}-b_i$. By noting that $\max(\alpha,\beta) = \max_{0\leq u\leq 1}{(\alpha+u(\beta-\alpha))}$, we can rewrite $\ellh(\w{i},b_i)=\frac{1}{m}\sum_l{\max_{0\leq u\si{l}\leq 1}{(c\si{l} + u\si{l}(1-x_i\si{l} f_i(\x_\minus{i}\si{l}) - c\si{l}))}}$.

Note that $\ellh$ has the minimizer $(\w{i}^*,b_i^*)=\zero$ if and only if $\zero$ belongs to the subdifferential set of the non-smooth function $\ellh$ at $(\w{i},b_i)=\zero$. In order to maximize $\ellh$, we have $c\si{l}>1-x_i\si{l} f_i(\x_\minus{i}\si{l}) \Leftrightarrow u\si{l}=0$, $c\si{l}<1-x_i\si{l} f_i(\x_\minus{i}\si{l}) \Leftrightarrow u\si{l}=1$ and $c\si{l}=1-x_i\si{l} f_i(\x_\minus{i}\si{l}) \Leftrightarrow u\si{l}\in [0;1]$. The previous rules simplify at the solution under analysis, since $(\w{i},b_i)=\zero \Rightarrow f_i(\x_\minus{i}\si{l})=0$.

Let $g_j(\w{i},b_i) \equiv \frac{\partial\ellh}{\partial w_{ij}}(\w{i},b_i)$ and $h(\w{i},b_i) \equiv \frac{\partial\ellh}{\partial b_i}(\w{i},b_i)$. By making $(\forall j\neq i){\rm\ }0\in g_j(\zero,0)$ and $0\in h(\zero,0)$, we get $(\forall j\neq i){\rm\ }\sum_l{x_i\si{l}x_j\si{l}u\si{l}}=0$ and $\sum_l{x_i\si{l}u\si{l}}=0$. Finally, by noting that $x_i\si{l}\in \{-1,1\}$, we prove our claim.
\qedhere
\end{proof}

\begin{remark}
Note that for independent SVMs, the ``degenerate'' cases in Lemma \ref{LemNonIndSVM} simplify to $(\forall j\neq i){\rm\ }\sum_l{\iverson{x_i\si{l}x_j\si{l}=1}} = \frac{m}{2}$ and $\sum_l{\iverson{x_i\si{l}=1}} = \frac{m}{2}$.
\end{remark}

The following lemma applies to independent logistic regression for $c\si{l}=0$ and simultaneous logistic regression for $c\si{l}=\sum_{j\neq i}\exp{-x_j\si{l}(\t{\w{i}}\x_\minus{i}\si{l}-b_i)}$.
\begin{lemma} \label{LemNonIndLogReg}
Given $(\forall l){\rm\ }c\si{l}\geq 0$, the minimization of the logistic training loss $\ellh(\w{i},b_i)=\frac{1}{m}\sum_l{\log(c\si{l}+1+\exp{-x_i\si{l}(\t{\w{i}}\x_\minus{i}\si{l}-b_i)})}$ guarantees non-absolutely-indifference of player $i$ except for some ``degenerate'' cases, i.e., the optimal solution $(\w{i}^*,b_i^*)=\zero$ if and only if $(\forall j\neq i){\rm\ }\sum_l\frac{\iverson{x_i\si{l}x_j\si{l}=1}}{c\si{l}+2} = \sum_l\frac{\iverson{x_i\si{l}x_j\si{l}=-1}}{c\si{l}+2}$ and $\sum_l\frac{\iverson{x_i\si{l}=1}}{c\si{l}+2} = \sum_l\frac{\iverson{x_i\si{l}=-1}}{c\si{l}+2}$.
\end{lemma}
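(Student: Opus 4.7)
The plan is to mirror the proof of Lemma~\ref{LemNonIndSVM}, but exploit the fact that the logistic loss is smooth so that an ordinary gradient calculation replaces the subdifferential argument. The first step I would take is to verify that $\ellh$ is convex in $(\w{i},b_i)$: writing each summand as $g(z\si{l})$ with $g(z)=\log(c\si{l}+1+e^{-z})$ and $z\si{l}=x_i\si{l}(\t{\w{i}}\x_\minus{i}\si{l}-b_i)$, a direct second-derivative computation gives $g''(z)=(c\si{l}+1)e^{-z}/(c\si{l}+1+e^{-z})^2>0$ since $c\si{l}\geq 0$. Hence $g$ is convex, and composing with the affine map $(\w{i},b_i)\mapsto z\si{l}$ and summing preserves convexity. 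Consequently, $(\w{i}^*,b_i^*)=\zero$ is a global minimizer of $\ellh$ if and only if $\nabla\ellh(\zero,0)=\zero$, and the rest of the argument reduces to computing this gradient explicitly.

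For the gradient computation, let $f_i(\x_\minus{i})\equiv \t{\w{i}}\x_\minus{i}-b_i$. Differentiating directly yields
\[
\frac{\partial\ellh}{\partial w_{ij}}=-\frac{1}{m}\sum_l\frac{x_i\si{l}x_j\si{l}\,e^{-x_i\si{l}f_i(\x_\minus{i}\si{l})}}{c\si{l}+1+e^{-x_i\si{l}f_i(\x_\minus{i}\si{l})}},\qquad \frac{\partial\ellh}{\partial b_i}=\frac{1}{m}\sum_l\frac{x_i\si{l}\,e^{-x_i\si{l}f_i(\x_\minus{i}\si{l})}}{c\si{l}+1+e^{-x_i\si{l}f_i(\x_\minus{i}\si{l})}}.
\]
Evaluating at $(\w{i},b_i)=\zero$ makes each $-x_i\si{l}f_i(\x_\minus{i}\si{l})$ vanish, so every exponential equals $1$ and every denominator equals $c\si{l}+2$. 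Setting both expressions to zero therefore gives $(\forall j\neq i)\sum_l x_i\si{l}x_j\si{l}/(c\si{l}+2)=0$ and $\sum_l x_i\si{l}/(c\si{l}+2)=0$.

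Finally, because $x_i\si{l},x_j\si{l}\in\{-1,+1\}$, the identities $x_i\si{l}x_j\si{l}=\iverson{x_i\si{l}x_j\si{l}=1}-\iverson{x_i\si{l}x_j\si{l}=-1}$ and $x_i\si{l}=\iverson{x_i\si{l}=1}-\iverson{x_i\si{l}=-1}$ split each of the two stationarity equations into a difference of the two indicator-weighted sums that appear in the statement, yielding the claimed equalities. The only real obstacle is the convexity check at the start; the remainder is routine calculus. Relative to Lemma~\ref{LemNonIndSVM}, smoothness also removes the need to introduce auxiliary variables $u\si{l}\in[0;1]$ for tie-breaking in a subdifferential, so no case analysis on active constraints is required here.
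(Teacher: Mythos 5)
Your proposal is correct and follows essentially the same route as the paper's proof: characterize $(\w{i}^*,b_i^*)=\zero$ by the vanishing of the gradient of the smooth loss at zero, evaluate the partial derivatives there to obtain $\sum_l x_i\si{l}x_j\si{l}/(c\si{l}+2)=0$ and $\sum_l x_i\si{l}/(c\si{l}+2)=0$, and convert these to the indicator form using $x_i\si{l}\in\{-1,+1\}$. Your explicit convexity check via $g''(z)>0$ is a welcome addition that the paper leaves implicit when asserting the gradient condition is necessary and sufficient.
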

\begin{proof}
Note that $\ellh$ has the minimizer $(\w{i}^*,b_i^*)=\zero$ if and only if the gradient of the smooth function $\ellh$ is $\zero$ at $(\w{i},b_i)=\zero$. Let $g_j(\w{i},b_i) \equiv \frac{\partial\ellh}{\partial w_{ij}}(\w{i},b_i)$ and $h(\w{i},b_i) \equiv \frac{\partial\ellh}{\partial b_i}(\w{i},b_i)$. By making $(\forall j\neq i){\rm\ }g_j(\zero,0)=0$ and $h(\zero,0)=0$, we get $(\forall j\neq i){\rm\ }\sum_l\frac{x_i\si{l}x_j\si{l}}{c\si{l}+2}=0$ and $\sum_l\frac{x_i\si{l}}{c\si{l}+2}=0$. Finally, by noting that $x_i\si{l}\in \{-1,1\}$, we prove our claim.
\qedhere
\end{proof}

\begin{remark}
Note that for independent logistic regression, the ``degenerate'' cases in Lemma \ref{LemNonIndLogReg} simplify to $(\forall j\neq i){\rm\ }\sum_l{\iverson{x_i\si{l}x_j\si{l}=1}} = \frac{m}{2}$ and $\sum_l{\iverson{x_i\si{l}=1}} = \frac{m}{2}$.
\end{remark}

Based on these results, after termination of our proposed algorithms, we fix cases in which the optimal solution $(\w{i}^*,b_i^*)=\zero$ by setting $b^*_i=1$ if the action of player $i$ was mostly $-1$ or $b^*_i=-1$ otherwise.
We point out to the careful reader that we did not include the $\ell_1$-regularization term in the above proofs since the subdifferential of $\rho\|\w{i}\|_1$ vanishes at $\w{i}=0$, and therefore our proofs still hold.

\subsection{Bounding the True Proportion of Equilibria}

In what follows, we concentrate on the second ingredient for our bound of the true proportion of equilibria.
We show a bound for a single \emph{non-absolutely-indifferent} player and a fixed joint-action $\x$, that interestingly does not depend on the specific joint-action $\x$.
This is a key ingredient for bounding the true proportion of equilibria in our main theorem.
\begin{lemma} \label{LemSingleBound}
Given an LIG $\G=(\W,\b)$ with non-absolutely-indifferent player $i$, assume that $(\w{i},b_i)$ is a random vector drawn from a distribution $\PD_i$.
If for all $\x \in \{-1,+1\}^n$,
${\P_{\PD_i}[x_i (\t{\w{i}}\x_\minus{i}-b_i) = 0]=0}$, then
\begin{eqnarray*}
\text{i. for all } \x\text{, }\P_{\PD_i}[x_i
(\t{\w{i}}\x_\minus{i}-b_i) \geq 0] =  \P_{\PD_i}[x_i
(\t{\w{i}}\x_\minus{i}-b_i) \leq 0]
                     \\
\text{if and only if, } \text{for
  all } \x\text{, }\P_{\PD_i}[x_i (\t{\w{i}}\x_\minus{i}-b_i) \geq 0] =
1/2 \; .
\end{eqnarray*}
If $\PD_i$ is a uniform distribution of support $\{-1,+1\}^n$, then
\begin{equation*}
\text{ii. for all }\x\text{, }\P_{\PD_i}[x_i (\t{\w{i}}\x_\minus{i}-b_i)\geq 0] \in [1/2,{\rm\ }3/4] \; .
\end{equation*}
\end{lemma}
\begin{proof}
Claim i follows immediately from a simple condition we
obtain from the normalization axiom of probability and the hypothesis of the claim:
i.e., for all $\x \in \{-1,+1\}^n$, $\P_{\PD_i}[x_i
(\t{\w{i}}\x_\minus{i}-b_i) \geq 0] + \P_{\PD_i}[x_i
(\t{\w{i}}\x_\minus{i}-b_i) \leq 0] = 1$.

To prove Claim ii, first let $\v \equiv (\w{i},b_i)$ and $\y \equiv (x_i \x_\minus{i},-x_i) \in \{-1,+1\}^n$.
Note that $x_i (\t{\w{i}}\x_\minus{i}-b_i) = \t{\v}\y$.
Then, let $f_1(\v_\minus{1},\y) \equiv \t{\v_\minus{1}}\y_\minus{1}+y_1$.
Note that $(v_1,v_1\v_\minus{1})$ spans all possible vectors in $\{-1,+1\}^n$.
Because $\PD_i$ is a uniform distribution of support $\{-1,+1\}^n$, we have:
\begin{align*}
\P_{\PD_i}[\t{\v}\y \geq 0] & = \textstyle{ \frac{1}{2^n} \sum_\v{\iverson{\t{\v}\y \geq 0}} } \\
 & = \textstyle{ \frac{1}{2^n} \sum_\v{\iverson{v_1 f_1(\v_\minus{1},\y)\geq 0}} } \\
 & = \textstyle{ \frac{1}{2^n} \sum_\v{\left( \iverson{v_1=+1}\iverson{f_1(\v_\minus{1},\y)\geq 0} + \iverson{v_1=-1}\iverson{f_1(\v_\minus{1},\y)\leq 0} \right)} } \\ 
 & = \textstyle{ \frac{1}{2^n} \sum_{\v_\minus{1}}{\left( \iverson{f_1(\v_\minus{1},\y)\geq 0} + \iverson{f_1(\v_\minus{1},\y)\leq 0} \right)} } \\
 & = \textstyle{ \frac{2^{n-1}}{2^n} + \frac{1}{2^n} \sum_{\v_\minus{1}}{\iverson{f_1(\v_\minus{1},\y)=0}} } \\
 & = \textstyle{ 1/2 + \frac{1}{2^n} \alpha(\y) }
\end{align*}
\noindent where $\alpha(\y) \equiv \sum_{\v_\minus{1}}{\iverson{f_1(\v_\minus{1},\y)=0}} = \sum_{\v_\minus{1}}{\iverson{\t{\v_\minus{1}}\y_\minus{1}+y_1=0}}$.
Note that $\alpha(\y)\geq 0$ and thus, $\P_{\PD_i}[\t{\v}\y \geq 0]\geq 1/2$.
Geometrically speaking, $\alpha(\y)$ is the number of vertices of the $(n-1)$-dimensional hypercube that are covered by the hyperplane with normal $\y_\minus{1}$ and bias $y_1$. Recall that $\y\neq \zero$ since $\y\in \{-1,+1\}^n$. By relaxing this fact, as noted in~\citet{Aichholzer96} a hyperplane with $n-2$ zeros on $\y_\minus{1}$ (i.e., a \emph{$(n-2)$-parallel hyperplane}) covers exactly half of the $2^{n-1}$ vertices, the maximum possible. Therefore, $\P_{\PD_i}[\t{\v}\y \geq 0] = 1/2 + \frac{1}{2^n} \alpha(\y) \leq 1/2 +\frac{2^{n-2}}{2^n} = 3/4$.
\qedhere
\end{proof}

\begin{remark}
It is important to note that under the conditions of
Lemma \ref{LemSingleBound}, in a measure-theoretic sense, for almost all
vectors $(\w{i},b_i)$ in the surface of the
hypersphere in $n$-dimensions (i.e., except for a set of
Lebesgue-measure zero), we have that, $x_i (\t{\w{i}}\x_\minus{i}-b_i)
\neq 0$ for all $\x \in \{-1,+1\}^n$. Hence, the hypothesis stated for
Claim i of Lemma \ref{LemSingleBound} holds for
almost all probability measures $\PD_i$ (i.e., except for a set
of probability measures, over the surface of the hypersphere in
$n$-dimensions, with Lebesgue-measure zero). Note that Claim
ii essentially states that we can still upper bound, for all $\x \in
\{-1,+1\}^n$, the probability that such $\x$ is a PSNE of a random LIG even
if we draw the weights and threshold parameters from a $\PD_i$
belonging to such sets of Lebesgue-measure zero.
\end{remark}
\begin{remark}
Note that any distribution that has zero mean and that depends on some norm of $(\w{i},b_i)$ fulfills the requirements for Claim i of Lemma \ref{LemSingleBound}.
This includes, for instance, the multivariate normal distribution with arbitrary covariance which is related to the Bhattacharyya norm.
Additionally, any distribution in which each entry of the vector $(\w{i},b_i)$ is independent and symmetric also fulfills those requirements.
This includes, for instance, the Laplace and uniform distributions.
Furthermore, note that distributions with support on non-empty subsets of entries of $(\w{i},b_i)$, as well as mixtures of the above cases are also allowed.
This includes, for instance, sparse graphs.
\end{remark}

Next, we present our bound for the true proportion of equilibria of games in which all players are non-absolutely-indifferent.
\begin{theorem} \label{ThmExpectedNE}
Assume that all players are non-absolutely-indifferent and that the rows of an LIG $\G=(\W,\b)$ are independent (but not necessarily identically distributed) random vectors, i.e., for every player $i$, $(\w{i},b_i)$ is independently drawn from an arbitrary distribution $\PD_i$.
If for all $i$ and $\x$, ${\P_{\PD_i}[x_i
  (\t{\w{i}}\x_\minus{i}-b_i)\geq 0] \leq \kappa}$ for $1/2 \leq\kappa<1$, then the expected true proportion of equilibria is bounded as 
\begin{equation*}
\E_{\PD_1,\dots,\PD_n}[\pi(\G)] \leq \kappa^n \; .
\end{equation*}
\noindent Furthermore, the following high probability statement 
\begin{equation*}
\P_{\PD_1,\dots,\PD_n}[\pi(\G) \leq \textstyle{ \frac{\kappa^n}{\delta} }] \geq 1-\delta 
\end{equation*}
holds.
\end{theorem}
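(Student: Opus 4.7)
My plan is to combine Lemma~\ref{LemOnePlayer} with the mutual independence of the rows $(\w{i},b_i)$ across players. First I would view the true proportion of equilibria probabilistically: letting $\x$ be uniform on $\{-1,+1\}^n$ and independent of $(\W,\b)$, we have $\pi(\G)=\P_\x[\x\in\NE(\G)]$, so by Fubini $\E[\pi(\G)]=\E_\x[\P_{\PD}[\x\in\NE(\G)]]$. For any fixed $\x$ the event $\{\x\in\NE(\G)\}$ is the intersection over $i$ of the best-response events $\{x_i f_i(\x_\minus{i})\geq 0\}$, and since the $i$-th event depends only on $(\w{i},b_i)$, independence of the rows factorizes this conditional probability: $\P_{\PD}[\x\in\NE(\G)]=\prod_i p_i(\x)$, where $p_i(\x)\equiv\P_{\PD_i}[x_i f_i(\x_\minus{i})\geq 0]$. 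Hence $\E[\pi(\G)]=\E_\x[\prod_i p_i(\x)]$.

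Next I would reduce each factor to a single-player computation. For every realization of a non-absolutely-indifferent $(\w{i},b_i)$, applying Lemma~\ref{LemOnePlayer}(iii) to the subgame in which all players $j\neq i$ are declared absolutely-indifferent (this does not alter player $i$'s own constraint) yields $\P_\x[x_i f_i(\x_\minus{i})\geq 0]\in[1/2,3/4]$. Averaging over $\PD_i$ and exchanging with $\E_\x$ gives $\E_\x[p_i(\x)]\in[1/2,3/4]$ for each $i$, so $(1/2)^n\leq\prod_i\E_\x[p_i(\x)]\leq(3/4)^n$. It therefore suffices to show the identity $\E_\x[\prod_i p_i(\x)]=\prod_i\E_\x[p_i(\x)]$, or at least the one-sided inequalities needed for each bound.

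The main obstacle is precisely this swap, because although the $(\w{i},b_i)$'s are mutually independent, the scalars $p_i(\x)$ share the common randomness of $\x$. To push the argument through I would exploit the explicit decomposition $p_i(\x)=\alpha_i(\x_\minus{i})+x_i\beta_i(\x_\minus{i})$, where $\alpha_i(\x_\minus{i})=(1+\P_{\PD_i}[f_i(\x_\minus{i})=0])/2$ and $\beta_i$ is independent of $x_i$, and then expand $\prod_i(\alpha_i+x_i\beta_i)$ multilinearly; the relations $\E_{x_i}[x_i]=0$ together with the fact that $\beta_i$ does not involve $x_i$ kill the singleton-$S$ Rademacher contributions, and a careful accounting of the higher-order cross-terms should reduce the expected product to $\prod_i\E_\x[\alpha_i]=\prod_i\E_\x[p_i(\x)]$, yielding both bounds. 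Finally, the high-probability claim $\P[\pi(\G)\leq(3/4)^n/\delta]\geq 1-\delta$ follows immediately from the upper bound by Markov's inequality applied to the non-negative random variable $\pi(\G)$.
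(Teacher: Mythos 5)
Your overall architecture coincides with the paper's: condition on $\x$, use row-independence to factor $\P_{\PD}[\x\in\NE(\G)]=\prod_i p_i(\x)$ with $p_i(\x)=\P_{\PD_i}[x_i(\t{\w{i}}\x_\minus{i}-b_i)\geq 0]$, reduce each factor to the single-non-indifferent-player bound of Lemma~\ref{LemOnePlayer}(iii), and finish with Markov's inequality (that last part is fine). You have also put your finger on the one genuinely delicate step: the exchange $\E_\x[\prod_i p_i(\x)]=\prod_i\E_\x[p_i(\x)]$. The paper performs exactly this exchange by declaring the quantities $z_i\equiv\E_{\PD_i}[y_i]$ independent ``since each $(\w{i},b_i)$ is independently distributed''; that is the same leap you are worried about, because after integrating out $(\w{i},b_i)$ each $z_i$ is a function of the single shared uniform draw $\x$, and independence of the rows says nothing about independence of these functions under $\UD$.

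However, your proposed repair does not go through. In the multilinear expansion of $\prod_i(\alpha_i(\x_\minus{i})+x_i\beta_i(\x_\minus{i}))$, the singleton term for player $i$ is $\E_\x\left[x_i\,\beta_i(\x_\minus{i})\prod_{j\neq i}\alpha_j(\x_\minus{j})\right]$, and you cannot kill it with $\E[x_i]=0$ because the companion factors $\alpha_j(\x_\minus{j})$ for $j\neq i$ \emph{do} depend on $x_i$; likewise the base term $\E_\x[\prod_i\alpha_i(\x_\minus{i})]$ does not factor into $\prod_i\E_\x[\alpha_i]$, since the $\alpha_i$ depend on overlapping coordinates. Indeed the identity you are after is false. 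Take $n=2$ with point masses $w_{12}=1$, $b_1=1$ and $w_{21}=1$, $b_2=-1$ (both players non-absolutely-indifferent, rows trivially independent). Then $p_1(\x)=\iverson{x_1(x_2-1)\geq 0}$ and $p_2(\x)=\iverson{x_2(x_1+1)\geq 0}$ are both the indicator of $\{(+1,+1),(-1,+1),(-1,-1)\}$, so $\E_\x[p_1]=\E_\x[p_2]=3/4$ while $\E_\x[p_1p_2]=\pi(\G)=3/4\neq(3/4)^2$. This shows not only that your expansion cannot be completed, but that the factorization step itself (which both you and the paper rely on) is unsound: for such degenerate but admissible $\PD_i$ the expected true proportion of equilibria exceeds $(3/4)^n$, so the gap you identified is real and cannot be closed by rearranging cross-terms; some additional hypothesis on the $\PD_i$ (or a genuinely different argument) would be needed.
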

\begin{proof}
Let $f_i(\w{i},b_i,\x) \equiv \iverson{x_i (\t{\w{i}}\x_\minus{i}-b_i)\geq 0}$ and $\PD \equiv \{\PD_1,\dots,\PD_n\}$.
By Eq.~\eqref {PiDef}, $\E_\PD[\pi(\G)] = \frac{1}{2^n}\sum_\x{\E_\PD[\prod_i{f_i(\w{i},b_i,\x)}]}$.
For any $\x$, $f_1(\w{1},b_1,\x),\dots,f_n(\w{n},b_n,\x)$ are independent since $(\w{1},b_1),\dots,(\w{n},b_n)$ are independently distributed.
Thus, $\E_\PD[\pi(\G)] = \frac{1}{2^n}\sum_\x{\prod_i{\E_{\PD_i}[f_i(\w{i},b_i,\x)]}}$.
Since for all $i$ and $\x$, $\E_{\PD_i}[f_i(\w{i},b_i,\x)] = \P_{\PD_i}[x_i (\t{\w{i}}\x_\minus{i}-b_i)\geq 0] \leq \kappa$, we have $\E_\PD[\pi(\G)] \leq \kappa^n$.

By Markov's inequality, given that $\pi(\G) \geq 0$, we have $\P_\PD[\pi(\G) \geq c] \leq \frac{\E_\PD[\pi(\G)]}{c} \leq \frac{\kappa^n}{c}$.
For $c = \frac{\kappa^n}{\delta} \Rightarrow \P_\PD[\pi(\G) \geq \frac{\kappa^n}{\delta}] \leq \delta \Rightarrow \P_\PD[\pi(\G) \leq \frac{\kappa^n}{\delta}] \geq 1-\delta$.
\qedhere
\end{proof}

\begin{remark}
Under the same assumptions of Theorem \ref{ThmExpectedNE}, it is possible to prove that with probability at least $1-\delta$ we have $\pi(\G) \leq \kappa^n + \sqrt{\frac{1}{2}\log\frac{1}{\delta}}$ by using Hoeffding's lemma.
We point out that such a bound is not better than the Markov's bound derived above.
\end{remark}

\section{Experimental Results} \label{SecResults}

For learning LIGs we used our convex loss methods: independent and simultaneous SVM and logistic regression (See Section~\ref{SubSecCLM}).
Additionally, we used the (super-exponential) exhaustive search method (See Section~\ref{SubSecExhaustive}) only for $n \leq 4$.
As a baseline, we used the (NP-hard) sigmoidal maximum likelihood only for $n \leq 15$ as well as the sigmoidal maximum empirical proportion of equilibria (See Section~\ref{SubSecSigmoidal}).
Regarding the parameters $\alpha$ and $\beta$ our sigmoidal function in Eq.~\eqref {Sigmoid}, we found experimentally that $\alpha = 0.1$ and $\beta = 0.001$ achieved the best results.

\begin{figure}
\begin{center}
\raisebox{0.375in}{\includegraphics[width=0.25\linewidth]{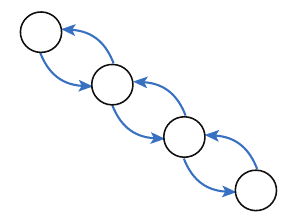}}\includegraphics[width=0.25\linewidth]{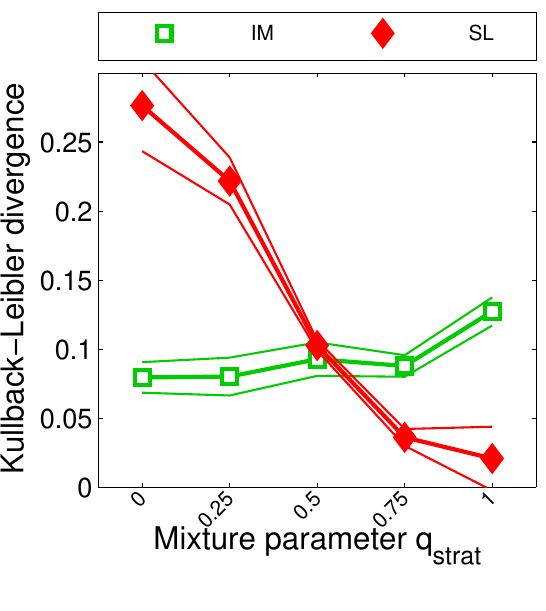}
\end{center}
\vspace{-0.3in}
\captionx{{\bf On the Distinction between Game-Theoretic and Probabilistic
    Models in the Context of ``Probabilistic'' vs. ``Strategic''
    Behavioral Data.} The plot shows the performance of Ising models
(in green) vs. LIGs (in red) when we learn models from each respective
class from data generated by
drawing i.i.d. samples from a mixture model of an Ising model,
$p_{\text{Ising}}$, and our
PSNE-based generative model, $p_{\text{LIG}}$, with mixing parameter
$q_{\text{strat}}$ corresponding to the probability that the sample is
drawn
from the LIG component. Hence, we can view  $q_{\text{strat}}$ as
controlling the proportion of the data that is ``strategic'' in nature. The graph of the Ising model is an
(undirected) chain with 4 variable nodes, while that of the LIG is, as
shown in the left,
also a chain of 4 players with arcs between every consecutive pair of
nodes. The parameters of each mixture component in the ``ground-truth''
mixture model $p_{\text{mix}}(\x) \equiv q_{\text{strat}} \;
p_{\text{LIG}}(\x) + (1-q_{\text{strat}}) \;
p_{\text{Ising}}(\x)$ are the same: node-potential/bias-threshold parameters
are all $0$; weights of all the edges is $+1$. We set the ``signal''
parameter $q$ of our generative model $p_{\text{LIG}}$ to $0.9$. The x-axis of
the plot in the right-hand side above corresponds to the mixture parameter $q_{\text{strat}}$; so that, as we
move from left to right in the plot, more proportion of the data is
``strategic'' in nature: $q_{\text{strat}} = 0$ means the data is ``purely
probabilistic'' while $q_{\text{strat}} = 1$ means it is ``purely
strategic.'' For each value of $q_{\text{strat}} \in
\{0,0.25,0.50,0.75,1\}$, we generated $50$ pairs of data sets from $p_{\text{mix}}$, each of
size $50$, each pair corresponding to a training and a validation
data set, respectively. The learning methods used the validation
data set to estimate their respective $\ell_1$ regularization parameter. The Ising models
learned correspond \emph{exactly} to the optimal penalized
likelihood. We use a simultaneous logistic
regression approach, described in Section~\ref{SecAlgorithm}, to learn
LIGs. In the y-axis of the
plot in the right-hand side is the
average, over the $50$ repetitions, of the \emph{exact} KL-divergence
between the respective learned model and $p_{\text{mix}}(\x)$. We also
include (a linear interpolation of the individual) error bars at 95\% confidence level. The plot clearly shows that the more
``strategic'' the data the better the game-theoretic-based generative
model. We can see that the learned Ising models (1) do
considerably better than the LIG models when the data is purely
probabilistic; and (2)  are more ``robust''
across the spectrum, degrading very gracefully as the data becomes
more strategic in nature; but (3) seem to
need more data to learn when the data comes exclusively from an Ising
model than the LIG model does when the data is purely strategic: The
LIG achieves KL values much closer to $0$ when the data is purely
strategic than the Ising model does when the data is purely
probabilistic.}
\vspace{-0.1in}
\label{fig:datanature}
\end{figure}

For reasons briefly discussed at the end of Section~\ref{sec:probmod}, we have little interest in determining how much worst game-theoretic
models are relative to probabilistic models when applied to data from
purely \emph{probabilistic} processes, without any strategic
component, as we think this to be a futile exercise.
We believe the same is true for
evaluating the quality of a probabilistic graphical model vs. a
game-theoretic model when applied to \emph{strategic behavioral
  data}, resulting from a process defined by game-theoretic concepts
based on the (stable) outcomes of
a game. Nevertheless,
we summarize some experiments in Figure~\ref{fig:datanature} that should help
illustrate the point of discussed at the end of Section~\ref{sec:probmod}.

Still, for scientific curiosity, we compare LIGs to learning Ising models.
Once again, our goal is not to show the superiority of either games or Ising models.
For $n \leq 15$ players, we perform exact $\ell_1$-regularized maximum likelihood estimation by using the FOBOS algorithm~\citep{Duchi09,Duchi09c} and exact gradients of the log-likelihood of the Ising model.
Since the computation of the exact gradient at each step is NP-hard, we used this method only for $n \leq 15$.
For $n > 15$ players, we use the H\"ofling-Tibshirani method~\citep{Hofling09}, which uses a sequence of first-order approximations of the exact log-likelihood.
We also used a two-step algorithm, by first learning the structure by $\ell_1$-regularized logistic regression \citep{Wainwright06} and then using the FOBOS algorithm~\citep{Duchi09,Duchi09c} with belief propagation for gradient approximation.
We did not find a statistically significant difference between the test log-likelihood of both algorithms and therefore we only report the latter.

Our experimental setup is as follows: after learning a model for different values of the regularization parameter $\rho$ in a training set, we select the value of $\rho$ that maximizes the log-likelihood in a validation set, and report statistics in a test set.
For synthetic experiments, we report the Kullback-Leibler (KL) divergence, average precision (one minus the fraction of falsely included equilibria), average recall (one minus the fraction of falsely excluded equilibria) in order to measure the closeness of the recovered models to the ground truth.
For real-world experiments, we report the log-likelihood.
In both synthetic and real-world experiments, we report the number of equilibria and the empirical proportion of equilibria.
Our results are statistically significant, we avoided showing error bars for clarity of presentation since error bars and markers overlapped.

\subsection{Experiments on Synthetic Data}

\begin{figure}
\begin{center}
\raisebox{0.375in}{\includegraphics[width=0.25\linewidth]{synthetic4-truth}}\includegraphics[width=0.25\linewidth]{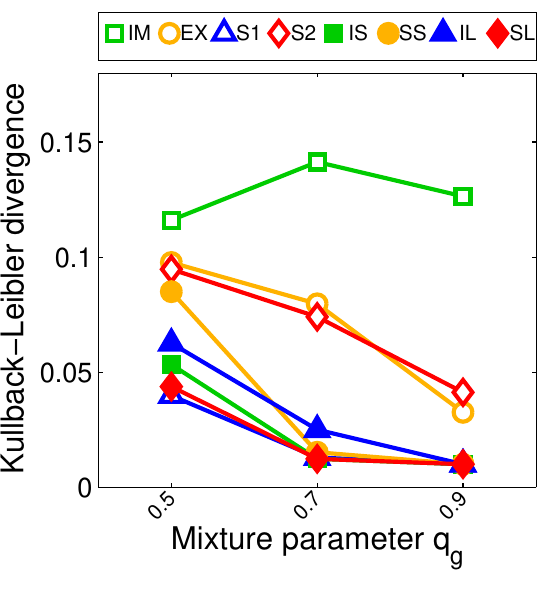}\includegraphics[width=0.25\linewidth]{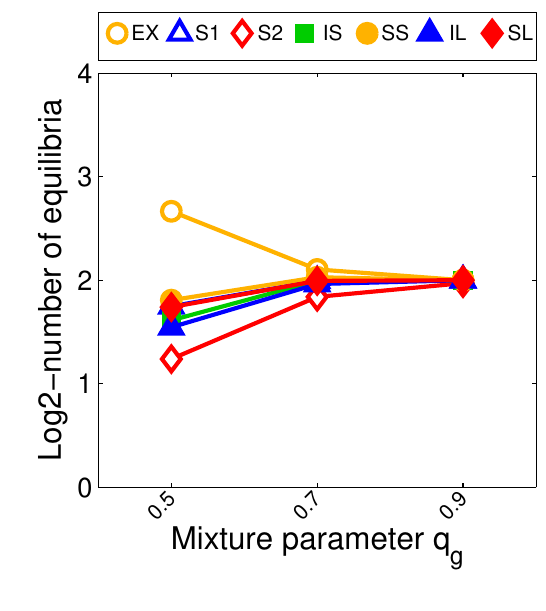}\hspace{-0.75\linewidth}\makebox[0.25\linewidth][c]{\raisebox{0.17in}{\textsf{\scriptsize{First synthetic model}}}}\makebox[0.5\linewidth]{} \\
\includegraphics[width=0.25\linewidth]{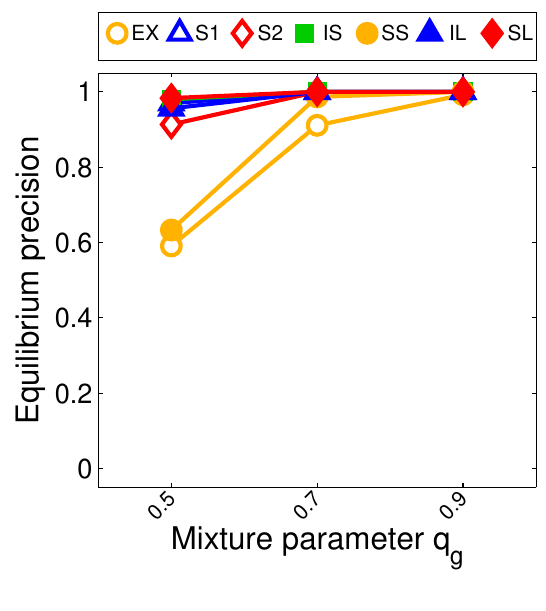}\includegraphics[width=0.25\linewidth]{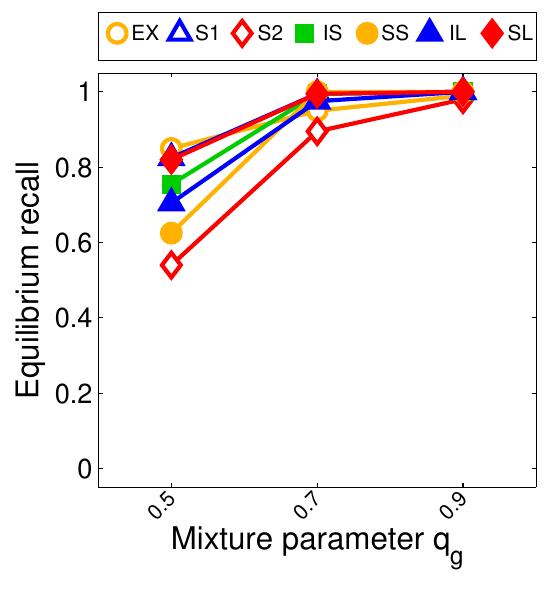}\includegraphics[width=0.25\linewidth]{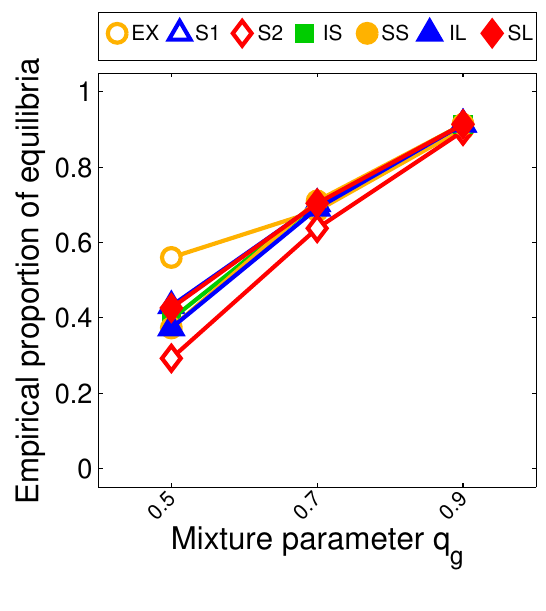} \\
\end{center}
\vspace{-0.3in}
\captionx{{\bf Closeness of the Recovered Models to the Ground-Truth Synthetic Model for Different Mixture Parameters $q_g$.} Our convex loss methods (IS,SS: independent and simultaneous SVM, IL,SL: independent and simultaneous logistic regression) and sigmoidal maximum likelihood (S1) have lower KL than exhaustive search (EX), sigmoidal maximum empirical proportion of equilibria (S2) and Ising models (IM). For all methods, the recovery of equilibria is perfect for $q_g=0.9$ (number of equilibria equal to the ground truth, equilibrium precision and recall equal to 1) and the empirical proportion of equilibria resembles the mixture parameter of the ground truth $q_g$.}
\vspace{-0.1in}
\label{Synthetic4}
\end{figure}

We first test the ability of the proposed methods to recover the
PSNE induced by ground-truth games from data when those games are
LIGs.
We use a small first synthetic model in order to compare with the (super-exponential) exhaustive search method.
The ground-truth model $\G_g=(\W_g,\b_g)$ has $n=4$ players and 4 Nash equilibria (i.e., $\pi(\G_g)$=0.25), $\W_g$ was set according to Figure \ref{Synthetic4} (the weight of each edge was set to $+1$) and $\b_g=\zero$.
The mixture parameter of the ground-truth model $q_g$ was set to 0.5,0.7,0.9.
For each of 50 repetitions, we generated a training, a validation and a test set of 50 samples each.
Figure \ref{Synthetic4} shows that our convex loss methods and sigmoidal maximum likelihood outperform (lower KL) exhaustive search, sigmoidal maximum empirical proportion of equilibria and Ising models.
Note that the exhaustive search method which performs exact maximum
likelihood suffers from over-fitting and consequently does not produce
the lowest KL.
From all convex loss methods, simultaneous logistic regression achieves the lowest KL.
For all methods, the recovery of equilibria is perfect for $q_g=0.9$ (number of equilibria equal to the ground truth, equilibrium precision and recall equal to 1).
Additionally, the empirical proportion of equilibria resembles the
mixture parameter of the ground-truth model $q_g$.

\begin{figure}
\begin{center}
\raisebox{0.375in}{\includegraphics[width=0.25\linewidth]{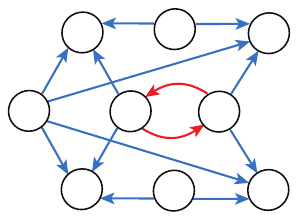}}\includegraphics[width=0.25\linewidth]{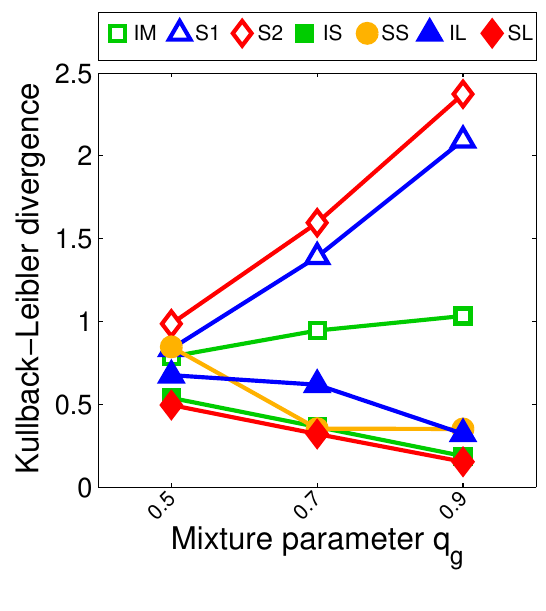}\includegraphics[width=0.25\linewidth]{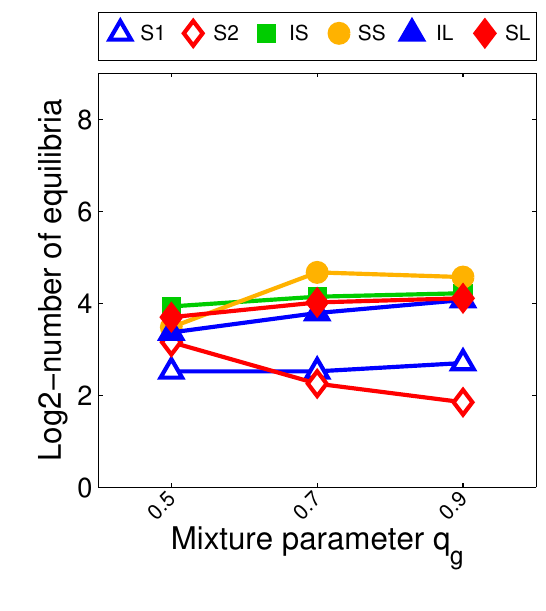}\hspace{-0.75\linewidth}\makebox[0.25\linewidth][c]{\raisebox{0.17in}{\textsf{\scriptsize{Second synthetic model}}}}\makebox[0.5\linewidth]{} \\
\includegraphics[width=0.25\linewidth]{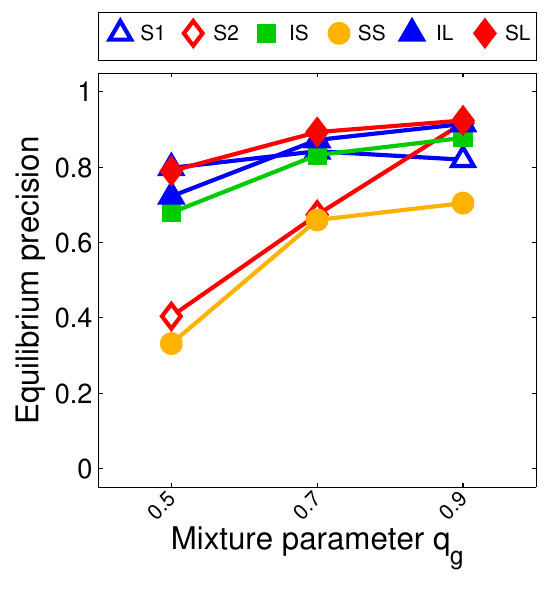}\includegraphics[width=0.25\linewidth]{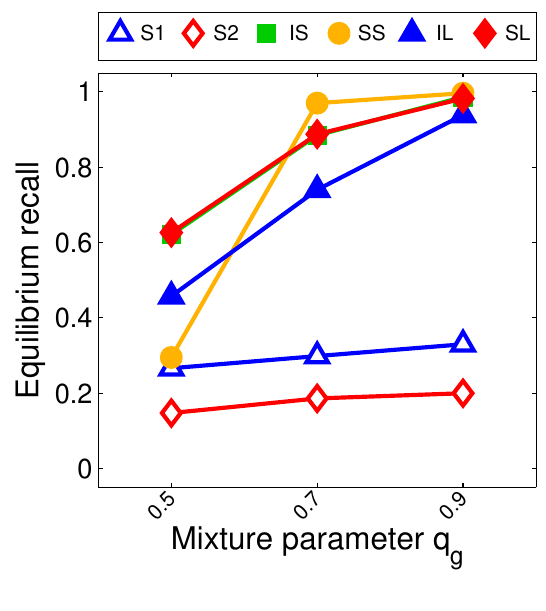}\includegraphics[width=0.25\linewidth]{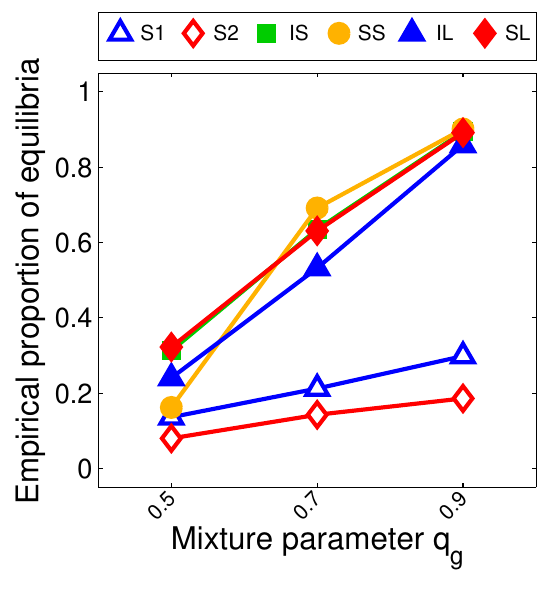} \\
\end{center}
\vspace{-0.3in}
\captionx{Closeness of the recovered models to the ground truth synthetic model for different mixture parameters $q_g$. Our convex loss methods (IS,SS: independent and simultaneous SVM, IL,SL: independent and simultaneous logistic regression) have lower KL than sigmoidal maximum likelihood (S1), sigmoidal maximum empirical proportion of equilibria (S2) and Ising models (IM). For convex loss methods, the equilibrium recovery is better than the remaining methods (number of equilibria equal to the ground truth, higher equilibrium precision and recall) and the empirical proportion of equilibria resembles the mixture parameter of the ground truth $q_g$.}
\vspace{-0.1in}
\label{Synthetic9}
\end{figure}

Next, we use a slightly larger second synthetic model with more complex interactions.
We still keep the model small enough in order to compare with the (NP-hard) sigmoidal maximum likelihood method.
The ground truth model $\G_g=(\W_g,\b_g)$ has $n=9$ players and 16 Nash equilibria (i.e., $\pi(\G_g)$=0.03125), $\W_g$ was set according to Figure \ref{Synthetic9} (the weight of each blue and red edge was set to $+1$ and $-1$ respectively) and $\b_g=\zero$.
The mixture parameter of the ground truth $q_g$ was set to 0.5,0.7,0.9.
For each of 50 repetitions, we generated a training, a validation and a test set of 50 samples each.
Figure \ref{Synthetic9} shows that our convex loss methods outperform (lower KL) sigmoidal methods and Ising models.
From all convex loss methods, simultaneous logistic regression achieves the lowest KL.
For convex loss methods, the equilibrium recovery is better than the remaining methods (number of equilibria equal to the ground truth, higher equilibrium precision and recall).
Additionally, the empirical proportion of equilibria resembles the mixture parameter of the ground truth $q_g$.

\begin{figure}
\begin{center}
\includegraphics[width=0.25\linewidth]{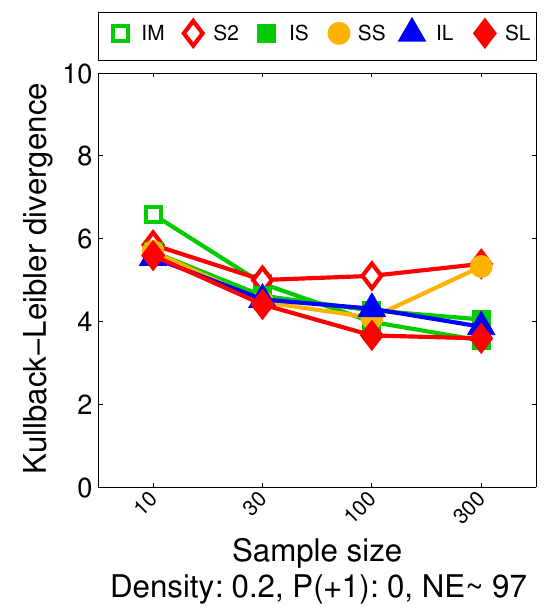}\includegraphics[width=0.25\linewidth]{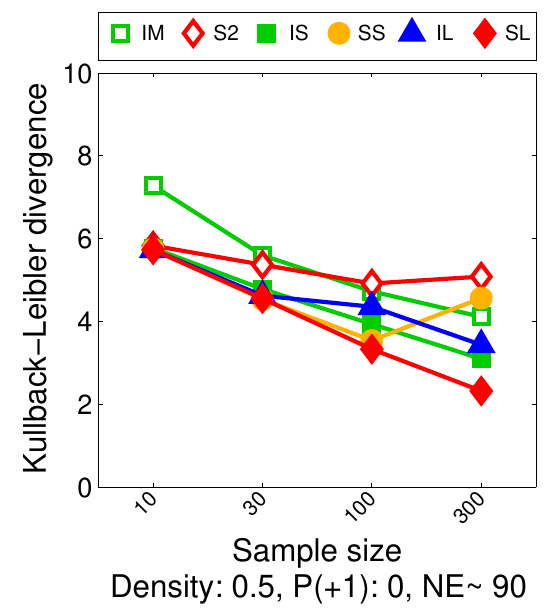}\includegraphics[width=0.25\linewidth]{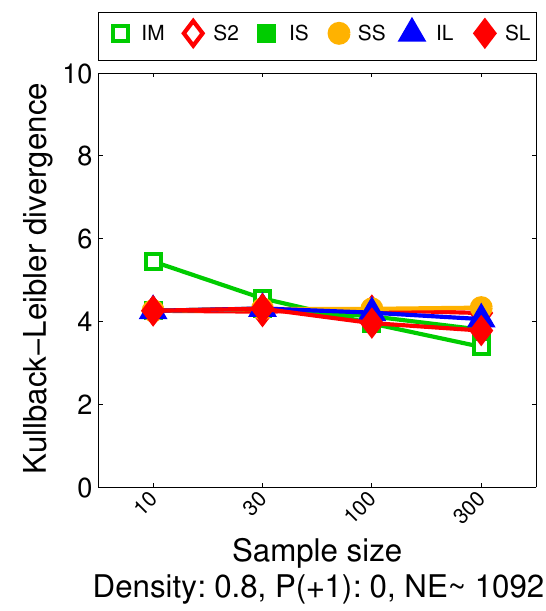} \\
\includegraphics[width=0.25\linewidth]{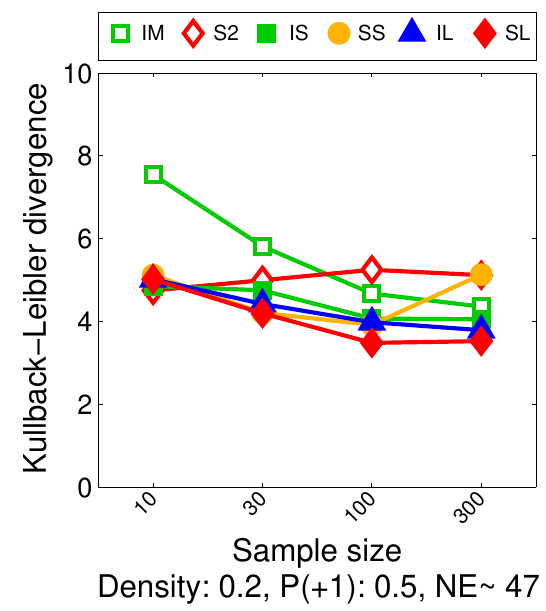}\includegraphics[width=0.25\linewidth]{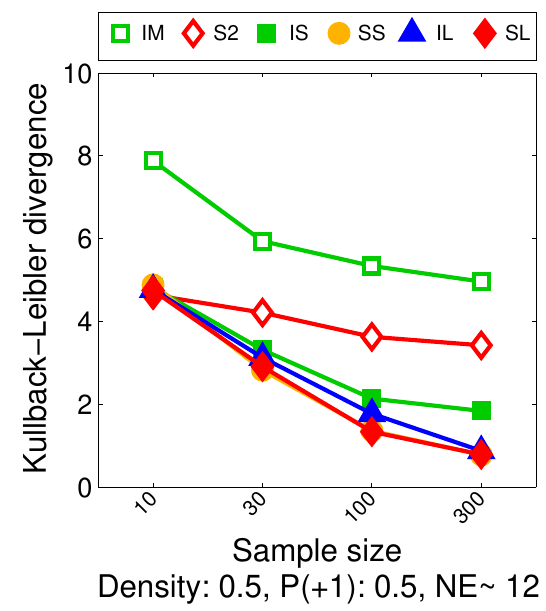}\includegraphics[width=0.25\linewidth]{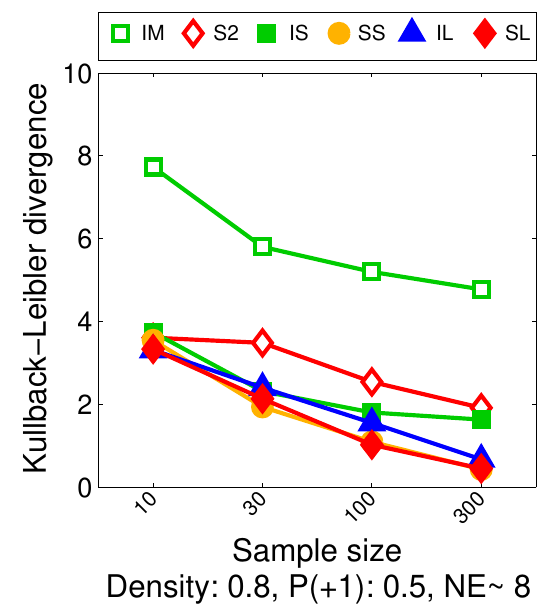} \\
\includegraphics[width=0.25\linewidth]{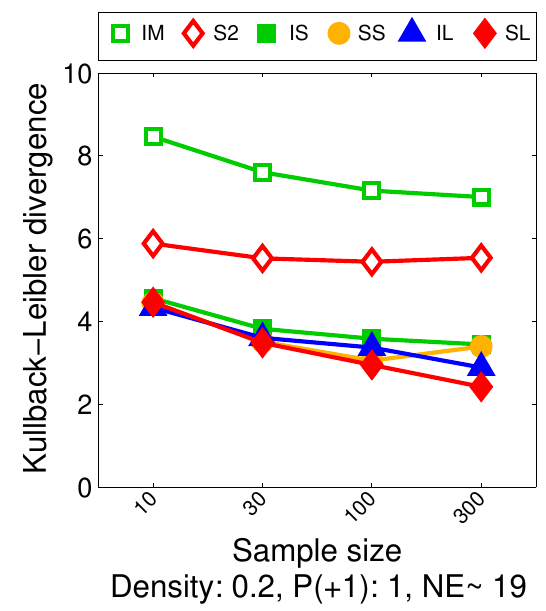}\includegraphics[width=0.25\linewidth]{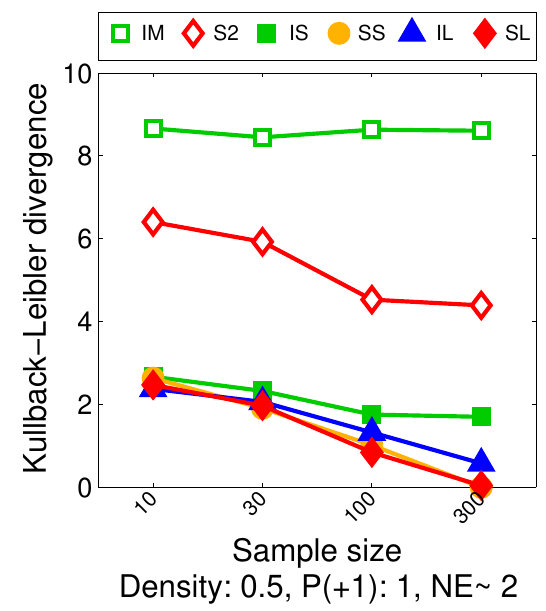}\includegraphics[width=0.25\linewidth]{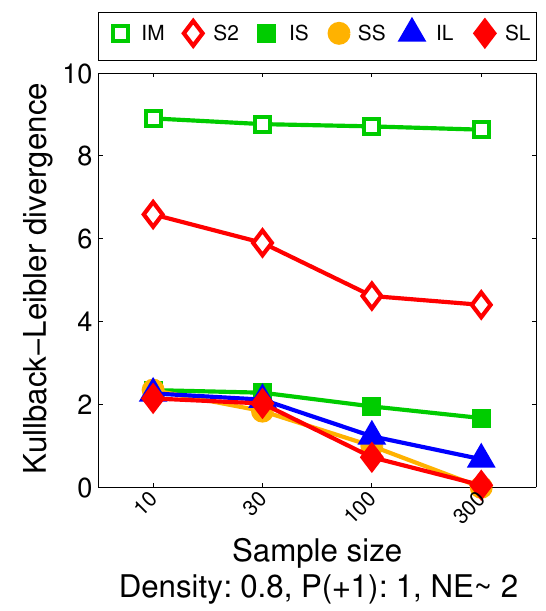} \\
\end{center}
\vspace{-0.2in}
\captionx{KL divergence between the recovered models and the ground truth for data sets of different number of samples. Each chart shows the density of the ground truth, probability $P(+1)$ that an edge has weight +1, and average number of equilibria (NE). Our convex loss methods (IS,SS: independent and simultaneous SVM, IL,SL: independent and simultaneous logistic regression) have lower KL than sigmoidal maximum empirical proportion of equilibria (S2) and Ising models (IM). The results are remarkably better when the number of equilibria in the ground truth model is small (e.g., for NE$<20$).}
\vspace{-0.1in}
\label{SyntheticRandomS}
\end{figure}

\begin{figure}
\begin{center}
\includegraphics[width=0.25\linewidth]{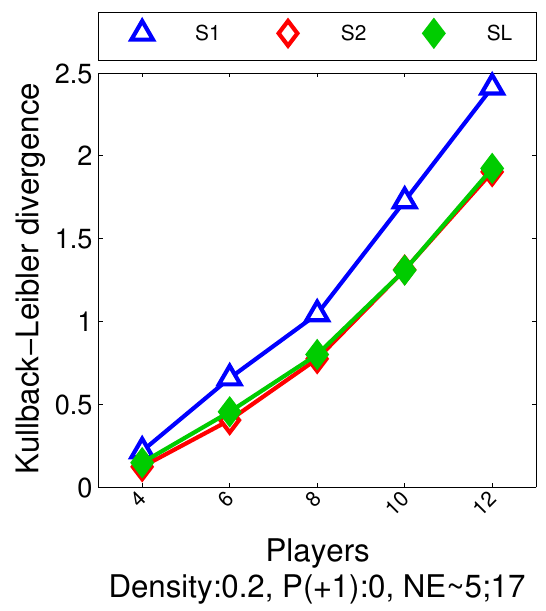}\includegraphics[width=0.25\linewidth]{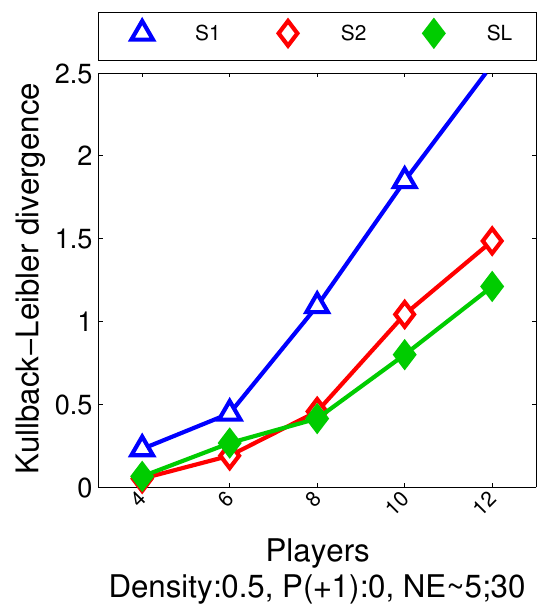}\includegraphics[width=0.25\linewidth]{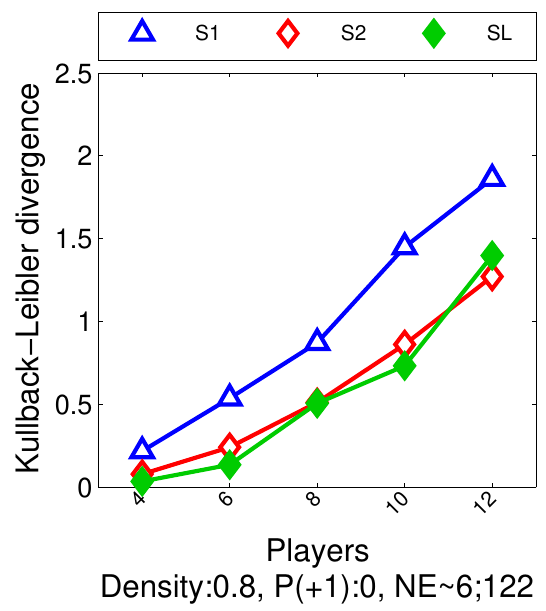} \\
\includegraphics[width=0.25\linewidth]{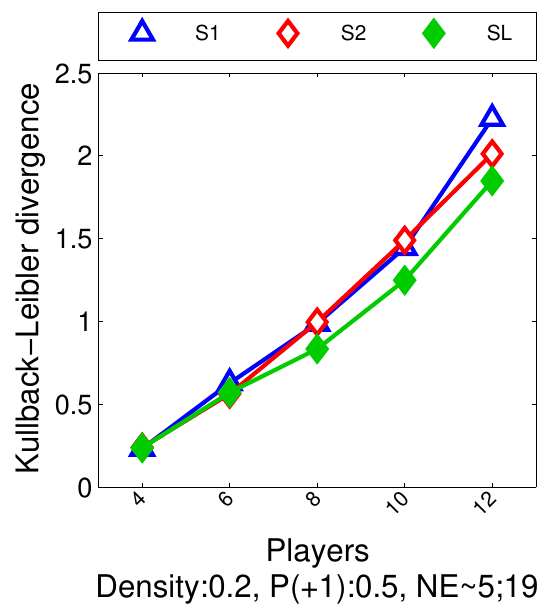}\includegraphics[width=0.25\linewidth]{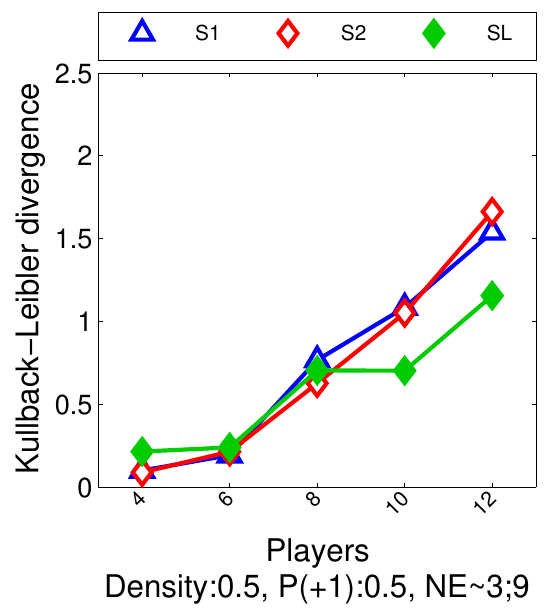}\includegraphics[width=0.25\linewidth]{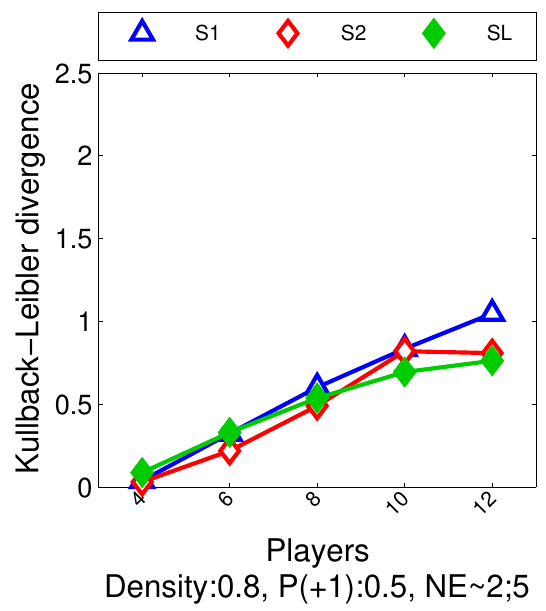} \\
\includegraphics[width=0.25\linewidth]{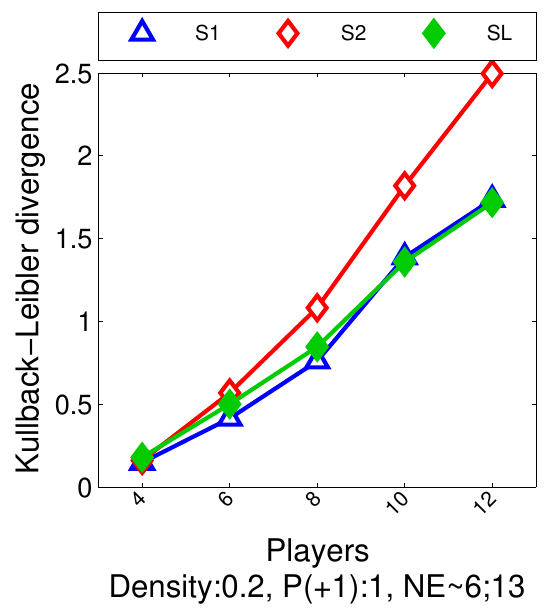}\includegraphics[width=0.25\linewidth]{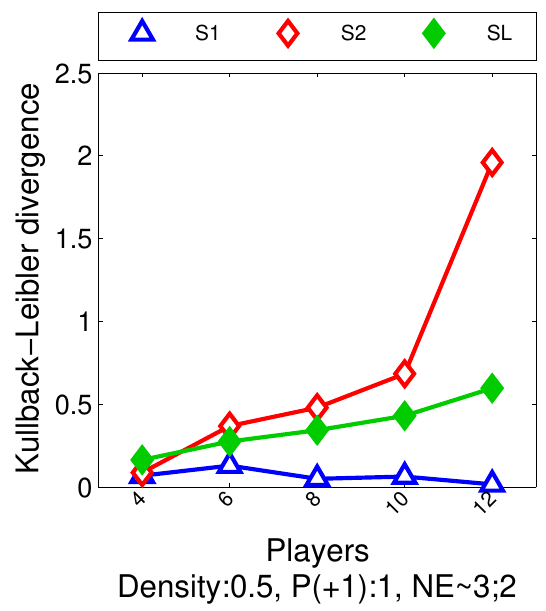}\includegraphics[width=0.25\linewidth]{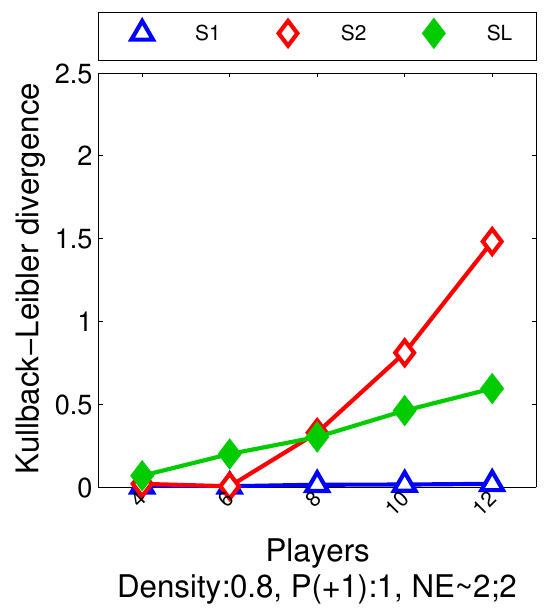} \\
\end{center}
\vspace{-0.2in}
\captionx{KL divergence between the recovered models and the ground truth for data sets of different number of players. Each chart shows the density of the ground truth, probability $P(+1)$ that an edge has weight +1, and average number of equilibria (NE) for $n=2$;$n=14$. In general, simultaneous logistic regression (SL) has lower KL than sigmoidal maximum empirical proportion of equilibria (S2), and the latter one has lower KL than sigmoidal maximum likelihood (S1). Other convex losses behave the same as simultaneous logistic regression (omitted for clarity of presentation).}
\vspace{-0.1in}
\label{SyntheticRandomP}
\end{figure}

In the next experiment, we show that the performance of convex loss minimization improves as the number of samples increases.
We used random graphs with slightly more variables and varying number of samples (10,30,100,300).
The ground truth model $\G_g=(\W_g,\b_g)$ contains $n=20$ players.
For each of 20 repetitions, we generate edges in the ground truth model $\W_g$ with a required density (either 0.2,0.5,0.8).
For simplicity, the weight of each edge is set to $+1$ with
probability $P(+1)$ and to $-1$ with probability
$1-P(+1)$.\footnote{Part of the reason for using such
  ``simple''/''limited'' binary set of weight values in this synthetic
  experiment regards the ability to generate ``interesting'' LIGs;
  that is, games with interesting sets of PSNE. As a word of caution,
  this is not as simple as it appears at first glance. LIGs with
  weights and biases generated \emph{uniformly} at random from some set
  of real values are almost always 
  not interesting, often having only 1 or 2 PSNE~\citep{Irfan13}. It is
  not until we move to more ``special''/restricted classes of games, such as that used in this
  experiments, that more interesting PSNE structure arises from
  randomly generated LIGs. That is in large part
  why we concentrated our experiments in games with those
  simple properties. (Simplicity itself also had a role in our
  decision, of course.) 

Please understand that we are not saying that LIGs that use a larger
  set of integers, or
  non-integer real-valued weights $w_{ij}$'s or $b_i$'s are not
  interesting, as the LIGs we learn from the real-world data demonstrate. What
  we \emph{are} saying is
  that we do not yet have a good understanding on how to randomly
  generate ``interesting'' synthetic games from the standpoint of their induced PSNE.
We leave a \emph{comprehensive} evaluation of our MLE-based algorithms'
\emph{ability to recover the PSNE of randomly generated synthetic
  LIGs}, which would involve a diversity of synthetic 
game graph structures, influence weights and biases that induce
``interesting'' sets of PSNE, for future work.}
Hence, the Nash equilibria of the generated games does not depend on the magnitude of the weights, just on their sign.
We set the bias $\b_g=\zero$ and the mixture parameter of the ground truth $q_g=0.7$.
We then generated a training and a validation set with the same number of samples.
Figure \ref{SyntheticRandomS} shows that our convex loss methods outperform (lower KL) sigmoidal maximum empirical proportion of equilibria and Ising models (except for the synthetic model with high true proportion of equilibria: density 0.8, $P(+1)=0$, NE$>1000$).
The results are remarkably better when the number of equilibria in the ground truth model is small (e.g., for NE$<20$).
From all convex loss methods, simultaneous logistic regression achieves the lowest KL.

In the next experiment, we evaluate two effects in our approximation methods.
First, we evaluate the impact of removing the true proportion of equilibria from our objective function, i.e., the use of maximum empirical proportion of equilibria instead of maximum likelihood.
Second, we evaluate the impact of using convex losses instead of a sigmoidal approximation of the 0/1 loss.
We used random graphs with varying number of players and 50 samples.
The ground truth model $\G_g=(\W_g,\b_g)$ contains $n=4,6,8,10,12$ players.
For each of 20 repetitions, we generate edges in the ground truth model $\W_g$ with a required density (either 0.2,0.5,0.8).
As in the previous experiment, the weight of each edge is set to $+1$ with probability $P(+1)$ and to $-1$ with probability $1-P(+1)$.
We set the bias $\b_g=\zero$ and the mixture parameter of the ground truth $q_g=0.7$.
We then generated a training and a validation set with the same number of samples.
Figure \ref{SyntheticRandomP} shows that in general, convex loss methods outperform (lower KL) sigmoidal maximum empirical proportion of equilibria, and the latter one outperforms sigmoidal maximum likelihood.
A different effect is observed for mild (0.5) to high (0.8) density and $P(+1)=1$ in which the sigmoidal maximum likelihood obtains the lowest KL.
In a closer inspection, we found that the ground truth games usually have only 2 equilibria: $(+1,\dots,+1)$ and $(-1,\dots,-1)$, which seems to present a challenge for convex loss methods.
It seems that for these specific cases, removing the true proportion of equilibria from the objective function negatively impacts the estimation process, but note that sigmoidal maximum likelihood is not computationally feasible for $n > 15$.

\subsection{Experiments on Real-World Data: U.S. Congressional Voting Records}

\begin{figure}
\begin{center}
\includegraphics[width=0.25\linewidth]{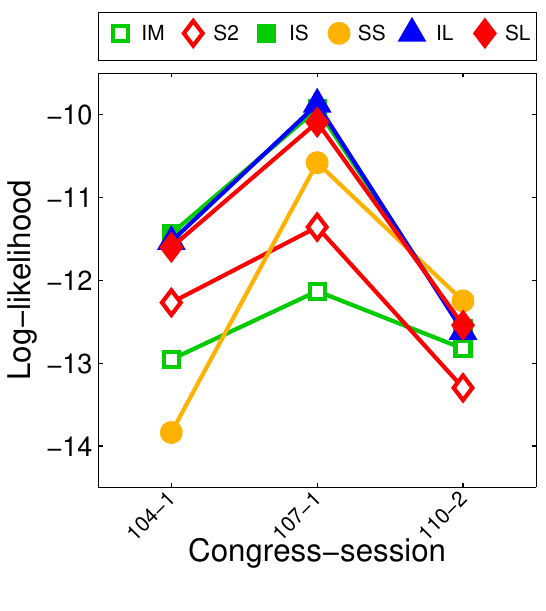}\includegraphics[width=0.25\linewidth]{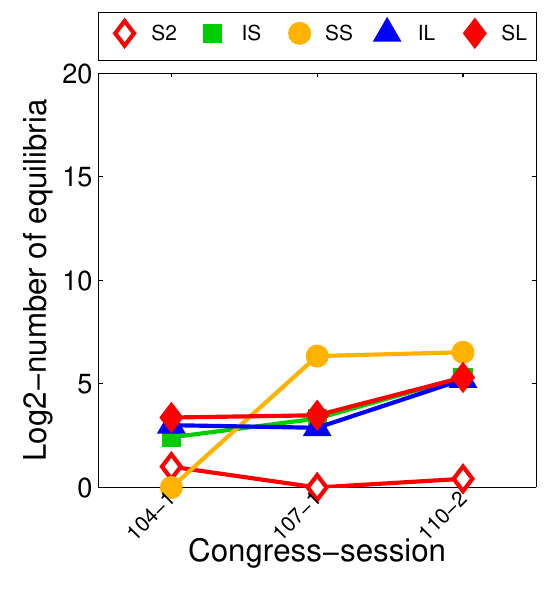}\includegraphics[width=0.25\linewidth]{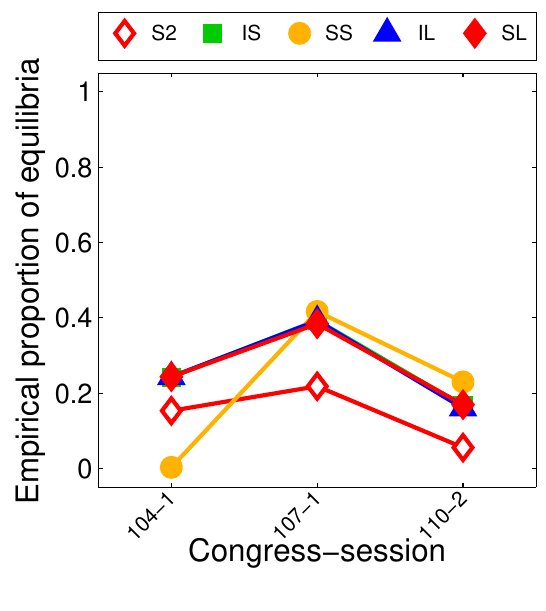} \\
\end{center}
\vspace{-0.3in}
\captionx{Statistics for games learnt from 20 senators from the first session of the 104th congress, first session of the 107th congress and second session of the 110th congress. The log-likelihood of our convex loss methods (IS,SS: independent and simultaneous SVM, IL,SL: independent and simultaneous logistic regression) is higher than sigmoidal maximum empirical proportion of equilibria (S2) and Ising models (IM). For all methods, the number of equilibria (and so the true proportion of equilibria) is low.}
\vspace{-0.1in}
\label{Votes}
\end{figure}

\begin{figure}
\begin{center}
\raisebox{0.1in}{\footnotesize{(a)}}\includegraphics[width=0.25\linewidth,height=0.85in]{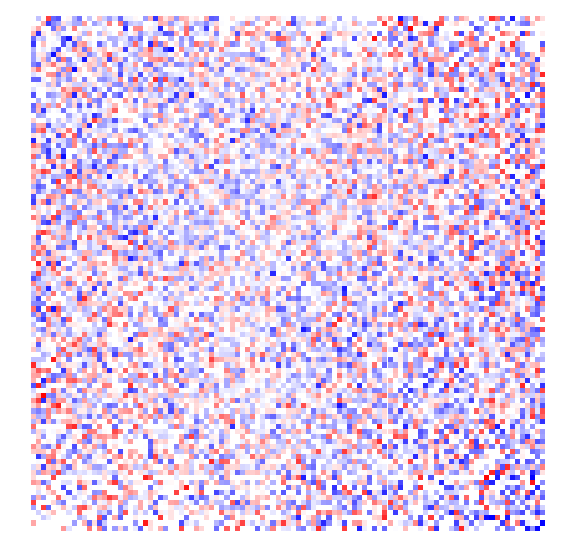}\includegraphics[width=0.25\linewidth,height=0.85in]{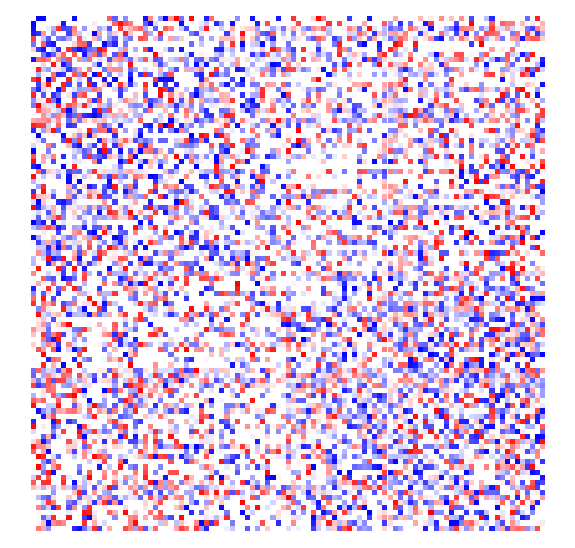}\includegraphics[width=0.25\linewidth,height=0.85in]{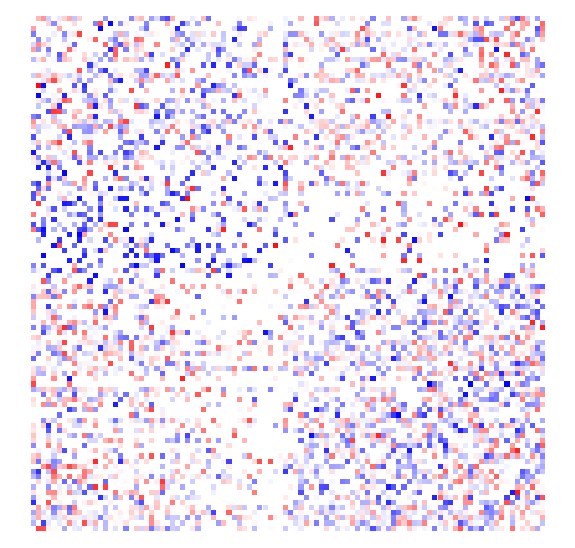} \\
\raisebox{0.1in}{\footnotesize{(b)}}\includegraphics[width=0.25\linewidth,height=0.85in]{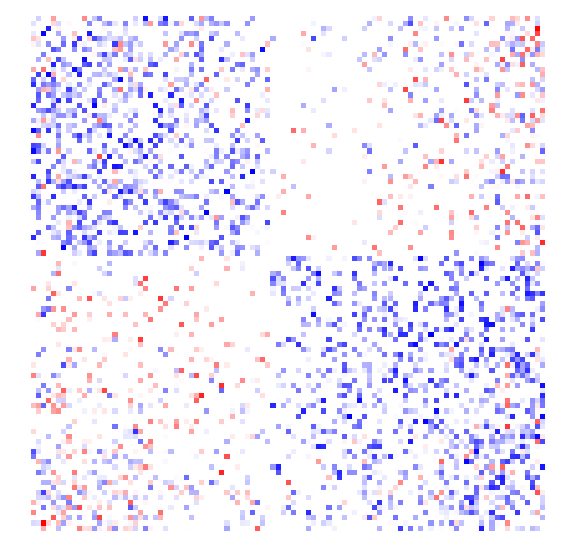}\includegraphics[width=0.25\linewidth,height=0.85in]{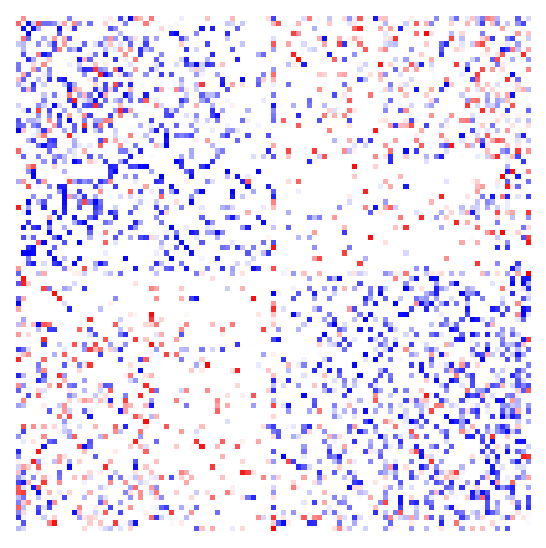}\includegraphics[width=0.25\linewidth,height=0.85in]{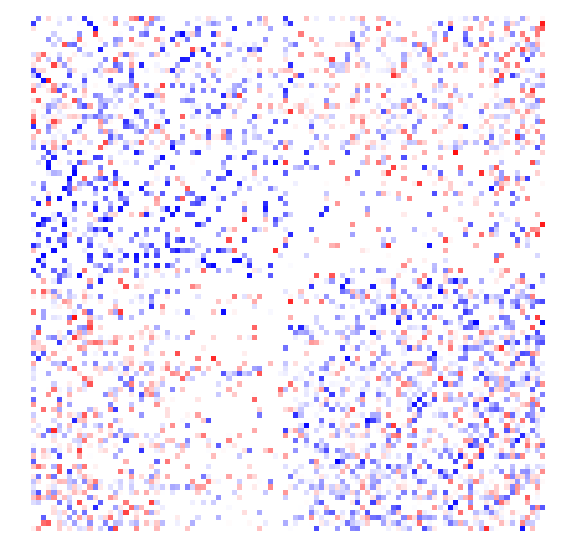} \\
\raisebox{0.1in}{\footnotesize{(c)}}\includegraphics[width=0.25\linewidth,height=1.5in]{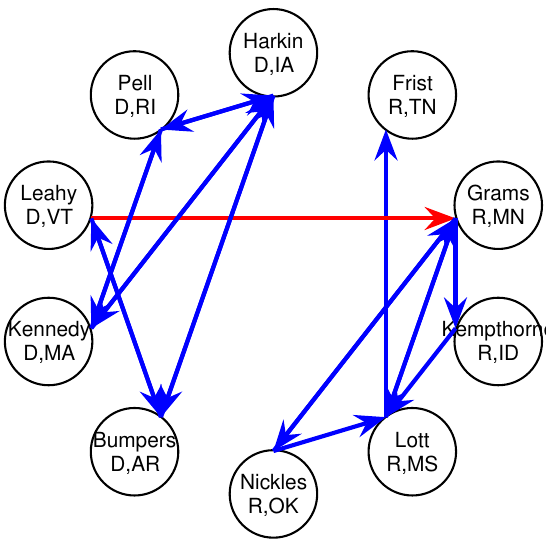}\includegraphics[width=0.25\linewidth,height=1.5in]{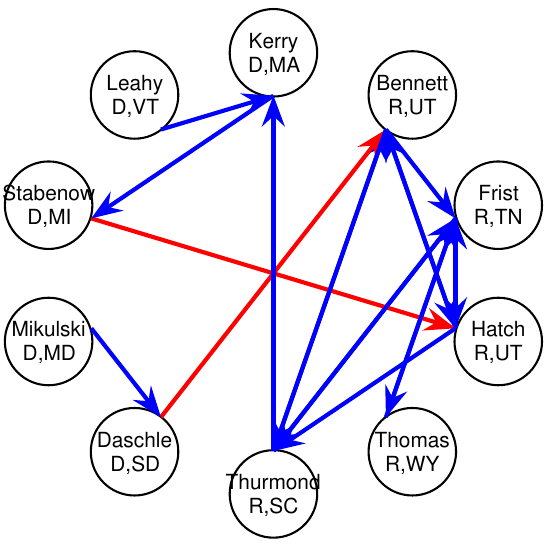}\includegraphics[width=0.25\linewidth,height=1.5in]{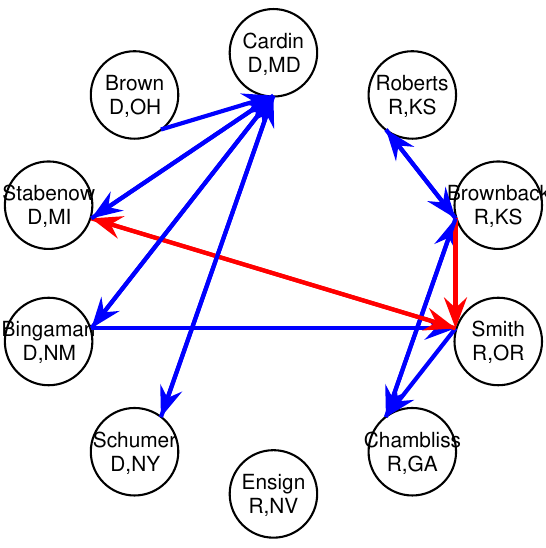} \\
\raisebox{0.1in}{\footnotesize{(d)}}\includegraphics[width=0.25\linewidth,height=1.5in]{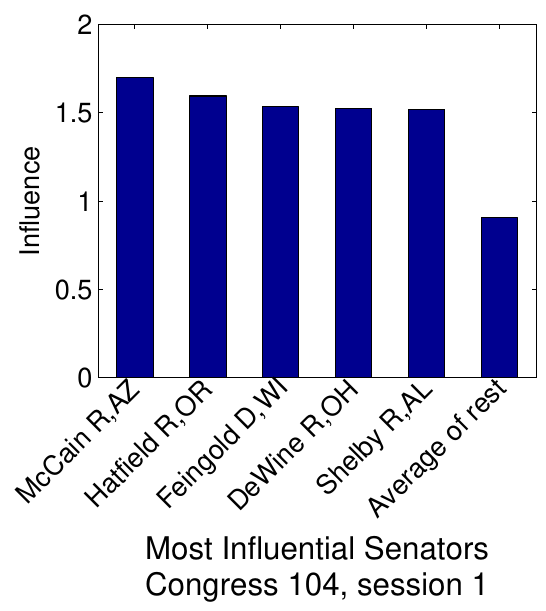}\includegraphics[width=0.25\linewidth,height=1.5in]{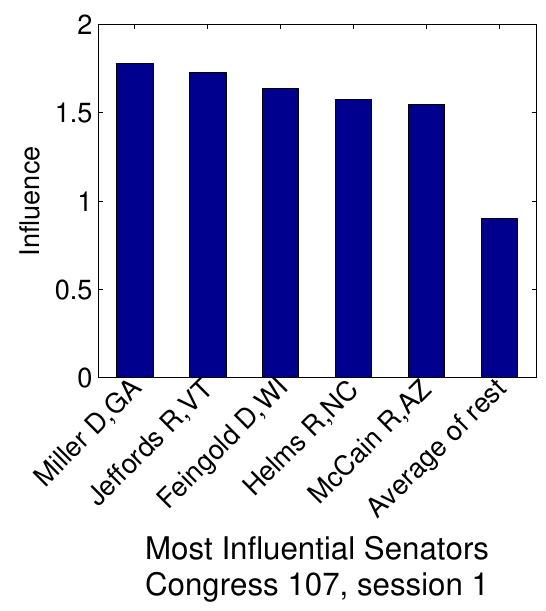}\includegraphics[width=0.25\linewidth,height=1.5in]{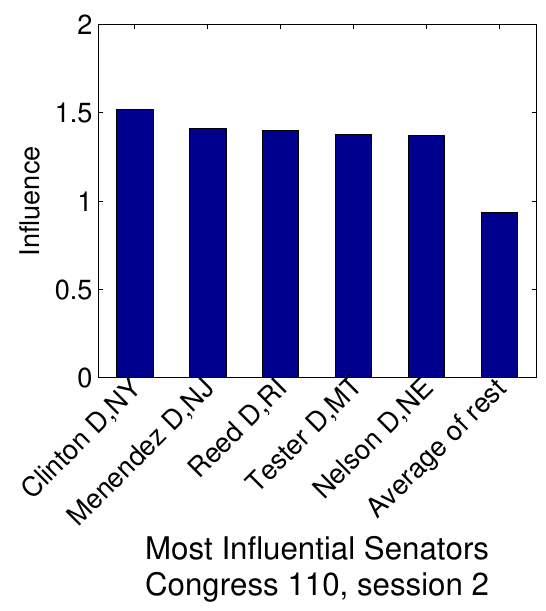} \\
\raisebox{0.1in}{\footnotesize{(e)}}\includegraphics[width=0.25\linewidth,height=1.5in]{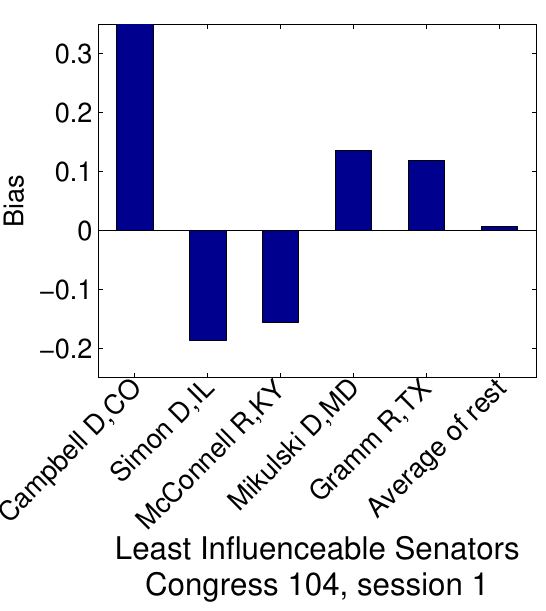}\includegraphics[width=0.25\linewidth,height=1.5in]{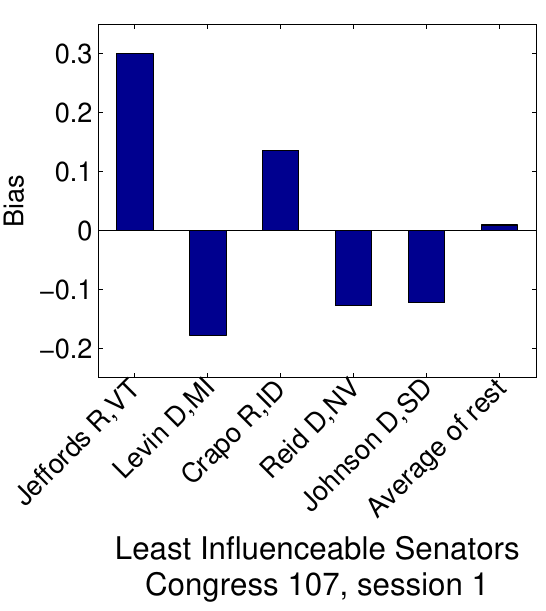}\includegraphics[width=0.25\linewidth,height=1.5in]{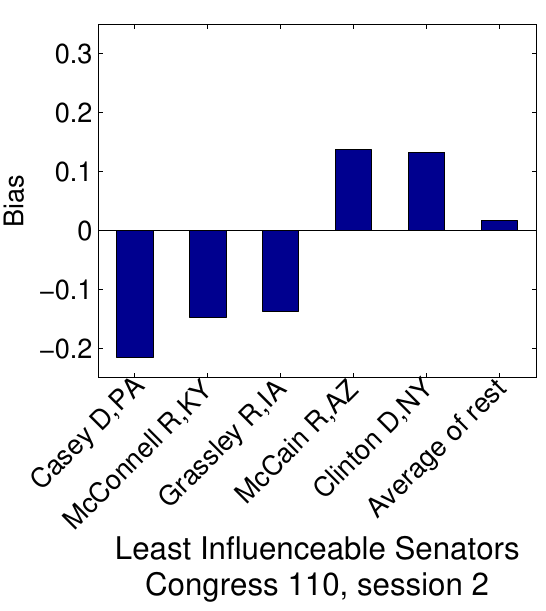} \\
\end{center}
\vspace{-0.2in}
\captionx{(Top) {\bf Matrices of (direct) influence weights $\W$ for games learned from all 100
  senators}, from the first session of the 104th congress (left), first
  session of the 107th congress (center) and second session of the
  110th congress (right), by using our independent (a) and
  simultaneous (b) logistic regression methods. A row represents how
  much every other senator directly-influence the senator in such row,
  in terms of the influence weights of the learned LIG. Positive 
  influence-weight parameter values are shown in blue; negative values
  are in red. Democrats are shown in the top/left corner, while
  Republicans are shown in the bottom/right corner. Note that
  simultaneous method produce structures that are sparser than its
  independent counterpart. (c) {\bf Partial view of the graph for
    simultaneous logistic regression}. (d) {\bf Most
    directly-influential senators} and (e) {\bf least directly-influenceable
    senators}.
  Regularization parameter $\rho=0.0006$.}
\vspace{-0.1in}
\label{MatrixGraphs}
\end{figure}

\begin{figure}
\begin{center}
\includegraphics[width=0.25\linewidth]{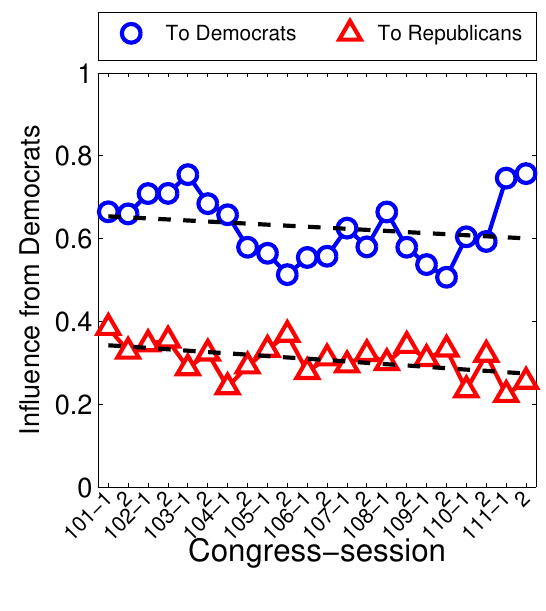}\includegraphics[width=0.25\linewidth]{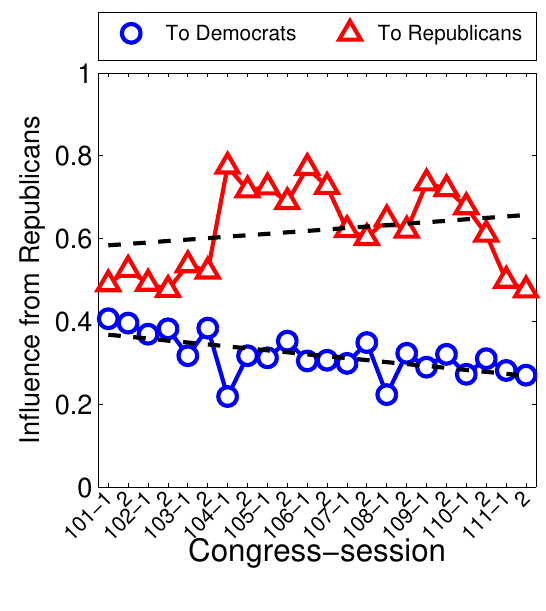}\includegraphics[width=0.25\linewidth]{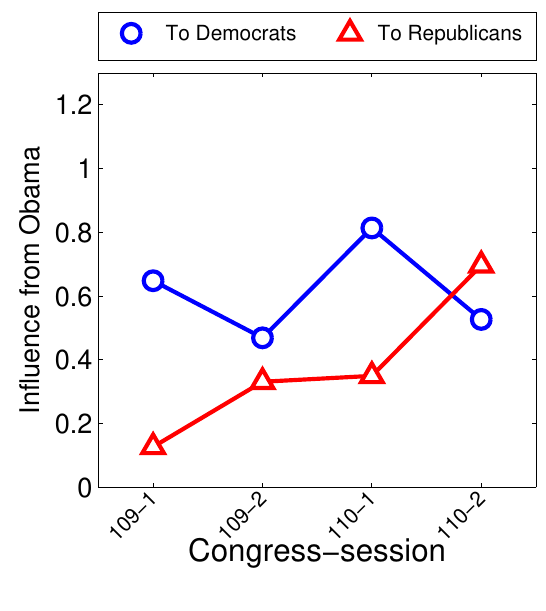}\includegraphics[width=0.25\linewidth]{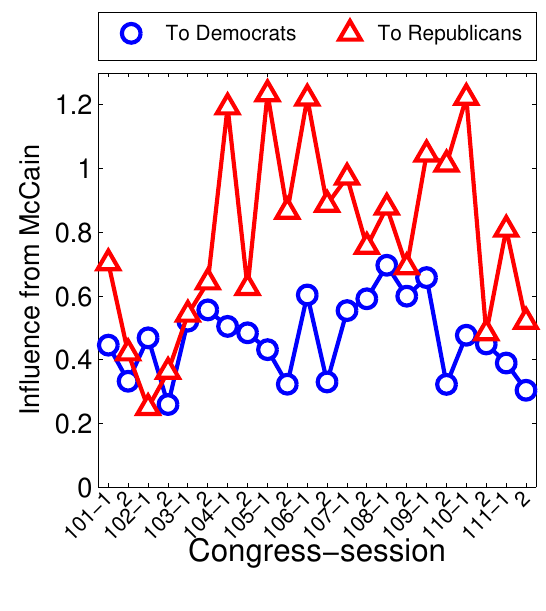} \\
\end{center}
\vspace{-0.3in}
\captionx{Direct influence between parties and direct influences from
  Obama and McCain. Games were learned from all 100 senators from the
  101th congress (Jan 1989) to the 111th congress (Dec 2010) by using
  our simultaneous logistic regression method. Direct influence
  between senators of the same party are stronger than senators of
  different party, which is also decreasing over time. In the last
  sessions, direct influence from Obama to Republicans increased, and influence from McCain to both parties decreased. Regularization parameter $\rho=0.0006$.}
\vspace{-0.1in}
\label{Influence}
\end{figure}

We used the U.S. congressional voting records in order to measure the generalization performance of convex loss minimization in a real-world data set.
The data set is publicly available at \url{http://www.senate.gov/}.
We used the first session of the 104th congress (Jan 1995 to Jan 1996, 613 votes), the first session of the 107th congress (Jan 2001 to Dec 2001, 380 votes) and the second session of the 110th congress (Jan 2008 to Jan 2009, 215 votes).
Following on other researchers who have experimented with this data
set (e.g., \citealt{Banerjee08}), abstentions were replaced with negative votes.
Since reporting the log-likelihood requires computing the number of equilibria (which is NP-hard), we selected only 20 senators by stratified random sampling.
We randomly split the data into three parts.
We performed six repetitions by making each third of the data take turns as training, validation and testing sets.
Figure \ref{Votes} shows that our convex loss methods outperform (higher log-likelihood) sigmoidal maximum empirical proportion of equilibria and Ising models.
From all convex loss methods, simultaneous logistic regression achieves the lowest KL.
For all methods, the number of equilibria (and so the true proportion of equilibria) is low.

We apply convex loss minimization to larger problems, by learning
structures of games from all 100 senators.
Figure \ref{MatrixGraphs} shows that simultaneous logistic regression produce structures that are sparser than its independent counterpart.
The simultaneous method better elicits the bipartisan structure of the congress.
We define the \emph{(aggregate) direct influence} of player $j$ to all other players as $\sum_i{|w_{ij}|}$ after normalizing all weights, i.e., for each player $i$ we divide $(\w{i},b_i)$ by $\|\w{i}\|_1+|b_i|$.
Note that Jeffords and Clinton are one of the 5 most directly-influential as well as 5 least directly-influenceable (high bias) senators, in the 107th and 110th congress respectively.
McCain and Feingold are both in the list of 5 most directly-influential senators in the 104th and 107th congress.
McCain appears again in the list of 5 least \emph{directly} influenceable
senators in the 110th congress (as defined above in the context of the
LIG model).

We test the hypothesis that the aggregate direct influence, \emph{as defined by
  our model}, between senators of the same party are stronger than senators of different party.
We learn structures of games from all 100 senators from the 101th congress to the 111th congress (Jan 1989 to Dec 2010).
The number of votes cast for each session were average: 337, minimum: 215, maximum: 613.
Figure \ref{Influence} validates our hypothesis and more interestingly, it shows that influence between different parties is decreasing over time.
Note that the influence from Obama to Republicans increased in the last sessions, while McCain's influence to Republicans decreased.

Since the U.S. Congressional voting data is observational, we used the log-likelihood as an adequate measure of predictive performance.
We argue that the log-likelihood of joint actions provides a more ``global view'' compared to predicting the action of a single agent.
Furthermore, predicting the action of a single agent (i.e., $x_i$) works under the assumption that we have access to the decisions of the other agents (i.e., $\x_\minus{i}$), which is in contrast to our framework.
Regarding causal strategic inference, \citet{Irfan13} use the games that we produce in this section in order to address problems such as the identification of most influential senators. (We refer the reader to their paper for further details.)

\section{Concluding Remarks} \label{SecConclusions}

In Section~\ref{SecAlgorithm}, we present a variety of algorithms to
learn LIGs from strictly behavioral data, including what we call \emph{independent
logistic regression (ILR).} There is a very popular technique for learning
Ising models that uses independent regularized logistic regression to compute
the individual conditional probabilities \emph{as a step toward computing
  a globally coherent joint probability distribution}.  However, this approach is
inherently problematic, as some authors have previously pointed out
(see, e.g., \citealt{Guo10}). Without getting too technical, the main
roadblock is that there is no guarantee that estimates of the weights
produced by the individual regressions be symmetric: $\widehat{w}_{ij}
= \widehat{w}_{ji}$ for all $i,j$. Learning an Ising model requires the enforcement
of this condition, and a variety of heuristics have been
proposed. (Please see Section~\ref{sec:probmod} for relevant work and references in this area.)

We also \emph{apply} ILR in \emph{exactly} the same manner but for a
\emph{different objective}: learning LIGs. Some seem to think that
this diminishes the significance of our contributions. We strongly
believe the opposite is true: That we can
learn games by using such simple, practical, efficient and well-studied
techniques is a significant plus in our view. Again, without getting too technical, the estimates of 
ILR need not be symmetric for LIG models, and are always perfectly
consistent with the LIG definition. In fact, asymmetric estimates are common
in practice (the LIG for the 110th Congress depicted in
Figure~\ref{Congress110} is an example). And we believe this makes the model more interesting. In
the ILR-learned LIG, a player may have a positive, negative or no direct effect on
another players utility, and \emph{vice versa}.\footnote{It is easy to come
up with examples of such opposing/antagonistic interactions between individuals in
real-world settings. (See, e.g., ``parents with teenagers,'' perhaps
a more timely example is 
the U.S. Congress in recent times.)}

Thus, despite the \emph{process} of estimation of model parameters
being similar, the \emph{view} of the output of that
estimation process is radically different in each
case. Our experiments show that our generative
model with LIGs built from ILR estimates achieves higher
generalization likelihoods than standard probabilistic models such
as Ising models that may also use ILR. This fact, that
the generative model defined in terms of game-theoretic
equilibrium concepts can explain the data better than traditional
probabilistic models, provides further evidence
supporting such a game-theoretic ``view'' of the ILR estimated parameters  and yields additional confidence
in their use in game-theoretic models. 

In short, ILR is a thoroughly studied method with a long tradition and an
extensive literature from which we can only benefit. We find it to a be a good, unexpected outcome of our research in
this work, and thus a reasonably significant contribution, that we can
successfully and effectively use ILR, a very simple and practical estimation
technique for learning probabilistic graphical models, to learn
game-theoretic graphical models too.

\subsection{Extensions and Future Work}

There are several ways of extending this research.
We can extend our approach to $\epsilon$-approximate
PSNE.\footnote{By definition, given $\epsilon \geq 0$, a joint pure-strategy $\x^*$ is an
  \emph{$\epsilon$-approximate PSNE} if for each player $i$, we
  have $u_i(\x^*) \geq \max_{x_i} u_i(x_i,\x_{-i}^*) - \epsilon$; in other
  words, no players can gain more than $\epsilon$ in payoff/utility value from unilaterally
  deviating from $x_i^*$, assuming the other players play $\x_{-i}^*$. Using
  this definition, we can see that a
  PSNE is simply a $0$-approximate PSNE.}
In this case, for each player instead of one condition, we will have two best-response conditions which are still linear in $\W$ and $\b$.
Additionally,
we can extend our approach to a broader class of graphical games and non-Boolean actions.
Note that our analysis does not rely on binary actions, but on binary features of one player $\iverson{x_i=1}$ or two players $\iverson{x_i=x_j}$.
We can use features of three players $\iverson{x_i=x_j=x_k}$ or of non-Boolean actions $\iverson{x_i=3,x_j=7}$.
This kernelized version is still linear in $\W$ and $\b$.
These extensions are possible because our algorithms and analysis rely
on linearity and binary features; additionally, we can obtain a new
upper-bound on the ``VC-dimension'' by changing the inputs of the
neural-network architecture.
We can easily extend our approach to parameter learning for \emph{fixed}
structures by using a $\ell_2^2$ regularizer instead.

Future work should also consider and study more sophisticated noise
processes, MSNE, and the analysis of
different upper bounds for the 0/1 loss (e.g., exponential, smooth
hinge).
Finally, we should consider other slightly more complex versions of
our model based on Bayesian or stochastic games to account for
possible variations of the influence-weights and bias-threshold
parameters. As an example, we may consider versions of our
model for congressional voting that would explicitly capture
game differences in terms of influences and biases that depend on the
nature or topic of each specific bill being voted on, as well as
Senators' time-changing preferences and trends.

\section*{Acknowledgements}
We are grateful to Christian Luhmann for extensive discussion of
many aspects of this work and the multiple comments and feedback he provided to
improve both the work and the presentation. We warmly
thank Tommi Jaakkola for several informal discussions on the topic of
learning games and the ideas behind causal strategic inference, as
well as suggestions on how to improve the presentation. We
also thank Mohammad Irfan for his help with motivating causal
strategic inference in inference games and examples to illustrate their
use. We would also like to thank Alexander Berg, Tamara Berg
and Dimitris Samaras for their comments and questions during informal
presentations of this work, which
helped us tailor the presentation to a wider audience.
We are very grateful to Nicolle Gruzling for sharing her Master's thesis \citep{Gruzling06} which contains valuable references used in Section \ref{SubSecExhaustive}.
Finally, we
thank several anonymous reviewers for their comments and feedback, and in particular, an anonymous reviewer for the reference
to an  overview on the literature on composite marginal likelihoods.

Shorter versions of the work presented here appear as Chapter 7 of the first author's
Ph.D. dissertation~\citep{Honorio_thesis12} and as e-print {\tt
  arXiv:1206.3713 [cs.LG]}~\citep{Honorio_arxiv12}.

This work was supported in part by NSF CAREER Award IIS-1054541.

\appendix

\section{Additional Discussion}

In this section, we discuss our choice of modeling end-state predictions without modeling dynamics.
We also discuss some alternative noise models to the one studied in this manuscript.

\subsection{On End-State Predictions without Explicitly Modeling Dynamics}
\label{App:EndState}

Certainly, in cases
  in which information about the dynamics is available, the learned
  model \emph{may} use such information while still making end-state
  predictions. But no such information, either via data sequences or
  prior knowledge, is available in any of the
  publicly-available real-world data sets we study here. Take
  congressional voting as an example. Considering the voting records as sequence of votes
  does not seem sensible in our context from a modeling/engineering
  perspective because the data set does not have any detailed
  information about the nature of the vote: we just have each
  senator's vote on whatever bill they considered, and little to no
  information about the detailed dynamics that might have lead to the senators'
  final votes. Indeed, one may go further and
  argue that assuming the the availability of information about the
  dynamics of the process is a considerable burden on the modeler and
  something of a wishful thinking in many practical, real-world
  settings. 

Besides the nature of our setting, the lack of data or information, and the CSI
  motivation, there are other more fundamental ML reasons why we have no
interest in considering dynamics in this paper. First, we view the additional complexity of a dynamic/temporal model as
  providing the wrong tradeoff:
      dynamic/temporal models are often inherently more complex to express and
      learn from data. Second, it is common
      ML practice to separate single example/outcome problems from
      sequence problems; said differently, ML generally treats the problem of learning from individual
      i.i.d. examples different from that of learning from sequences or sequence
      prediction. Third, we invoke Vapnik's Principle of Transduction
      (\citeauthor{vapnikSLT}, \citeyear{vapnikSLT}, page 477):\footnote{See also
        \url{http://www.cs.man.ac.uk/~jknowles/transductive.html}
        additional information.}
\begin{quote}
      ``When solving a problem of interest, do not solve a more
      general problem as an intermediate step. Try to get the answer
      that you really need but not a more general one.'' 
\end{quote}

We believe this additional complexity
    of temporal dynamics, while possibly more ``realistic,'' might easily weaken the power of the ``bottom-line'' prediction
  of the possible stable final outcomes because the resulting models
  can get side-tracked by the details of the
  interaction. We believe the difficulty of modeling such details, specially
  with the relatively limited amount of the data available, leads to
  poor prediction performance on what we really care about from an
  \emph{engineering} stand point: We would like to know or predict \emph{what} will end up
  happening, and have little or no interest on the \emph{how} or
  \emph{why} this happens. 

We recognize the \emph{scientific} significance
  and importance of research in social and behavioral sciences such as
  sociology, psychology and, in some
  cases economics, on explaining, at a higher level of abstraction,
  often going as low as the ``cognitive'' or ``neuroscience'' level,  the process by which final decisions
  are reached. 

We believe the sudden
 growth of interest from industry (both online and physical
 companies), government and other national and international institutions
 on predicting ``behavior'' for the purpose of revenue, improving
 efficiency, instituting effective policies with minimal regulations,
 etc., should shift the
 focus of the study of ``behavior''  closer to an engineering
 endeavor. We believe such entities are after the ``bottom line'' and
 will care more about the end-goal
 than \emph{how or why a specific outcome is achieved}, modulo, of course, having
 simple enough and tractable computational models that provide
 reasonably accurate predictions of final end-state behavior, or at least accurate enough for their purposes.

\subsection{On Alternative Noise Models}

Next, we discuss some alternative noise models to the one studied in this manuscript.
Specifically, we discuss an extension of the PSNE-based mixture noise model as well as individual-player noise models.

\subsubsection{On Generalizations of the PSNE-based Mixture Noise Models}
\label{App:GenDist}
A simple extension to our model in Eq.~\eqref {Prob} is to allow for more general distributions for the PSNE and the non-PSNE sets.
That is, with some probability $0<q<1$, a joint action $\x$ is chosen from $\NE(\G)$ by following a distribution $\PD_\alpha$ parameterized by $\alpha$;
otherwise, $\x$ is chosen from its complement set $\{-1,+1\}^n - \NE(\G)$ by following a distribution $\PD_\beta$ parameterized by $\beta$.
The corresponding probability mass function (PMF) over joint-behaviors $\{-1,+1\}^n$ parameterized by $(\G,q,\alpha,\beta)$ is
\begin{equation*}
\textstyle
p_{(\G,q,\alpha,\beta)}(\x) = q \, \frac{p_\alpha(\x) \iverson{\x \in \NE(\G)}}{\sum_{\z \in \NE(\G)}{p_\alpha(\z)}} + (1-q) \, \frac{p_\beta(\x) \iverson{\x \notin \NE(\G)}}{\sum_{\z \notin \NE(\G)}{p_\beta(\z)}} \; ,
\end{equation*}
\noindent where $\G$ is a game, $p_\alpha(\x)$ and $p_\beta(\x)$ are PMFs over $\{-1,+1\}^n$.

One reasonable technique to learn such a model from data is to perform an alternate method.
Compared to our simpler model, this model requires a step that maximizes the likelihood by changing $\alpha$ and $\beta$ while keeping $\G$ and $q$ constant.
The complexity of this step will depend on how we parameterize the PMFs $\PD_\alpha$ and $\PD_\beta$ but will very likely be an NP-hard problem because of the partition function.
Furthermore, the problem of maximizing the likelihood by changing $\NE(\G)$ (while keeping $\alpha$, $\beta$ and $q$ constant) is combinatorial in nature and in this paper, we provided a tractable approximation method with provable guarantees (for the case of uniform distributions).
Other approaches for maximizing the likelihood by changing $\G$ are
very likely to be exponential as we discuss briefly at the start of Section~\ref{SecAlgorithm}.

\subsubsection{On Individual-Player Noise Models}
\label{App:IndNoise}
As an example, consider a the generative model in which we first
randomly select a PSNE $\x$ of the game from a distribution $\PD_\alpha$ parameterized by $\alpha$, and then each player $i$, independently, acts according to $x_i$ with probability $q_i$ and switches its action with probability $1-q_i$.
The corresponding probability mass function (PMF) over joint-behaviors $\{-1,+1\}^n$ parameterized by $(\G,q_1,\dots,q_n,\alpha)$ is
\begin{equation*}
\textstyle
p_{\G,\q,\alpha}(\x) = \sum_{\y \in \NE(\G)}{\frac{p_\alpha(\y)}{\sum_{\z \in \NE(\G)}{p_\alpha(\z)}}}\prod_i{q_i^{\iverson{x_i=y_i}}(1-q_i)^{\iverson{x_i \ne y_i}}} \; ,
\end{equation*}
\noindent where $\G$ is a game, $\q=(q_1,\dots,q_n)$ and $p_\alpha(\y)$ is a PMF over $\{-1,+1\}^n$.

One reasonable technique to learn such a model from data is to perform an alternate method.
In one step, we maximize the likelihood by changing $\NE(\G)$ while keeping $\q$ and $\alpha$ constant, which could be tractably performed by applying Jensen's inequality since the problem is combinatorial in nature.
In another step, we maximize the likelihood by changing $\q$ while keeping $\G$ and $\alpha$ constant, which could be performed by gradient ascent (the complexity of this step depends on the size of $\NE(\G)$ which could be exponential!).
In yet another step, we maximize the likelihood by changing $\alpha$ while keeping $\G$ and $\q$ constant (the complexity of this step will depend on how we parameterize the PMF $\PD_\alpha$ but will very likely be an NP-hard problem because of the partition function).
The main problem with this technique is that it can be formally proved that the first step (Jensen's inequality) will almost surely pick a single equilibria for the model (i.e., $|\NE(\G)|=1$).

\bibliography{references}

\end{document}